\documentclass{article} 
\usepackage{iclr2021_conference,times}


\usepackage{amsmath,amsfonts,bm}









\def\eqref#1{equation~\ref{#1}}









\def\1{\bm{1}}










\DeclareMathAlphabet{\mathsfit}{\encodingdefault}{\sfdefault}{m}{sl}
\SetMathAlphabet{\mathsfit}{bold}{\encodingdefault}{\sfdefault}{bx}{n}













\usepackage{booktabs}       
\usepackage{amsfonts}       
\usepackage{nicefrac}       
\usepackage{microtype}      
\usepackage{hyperref}
\hypersetup{
    colorlinks=true,
    linkcolor=blue,
    filecolor=blue,      
    urlcolor=blue,
}

\usepackage{mathrsfs}
            
\usepackage{natbib}





\usepackage{algorithm}
\usepackage{algorithmic}

\usepackage{amssymb}
\usepackage{amsmath}
\usepackage{amsfonts}

\usepackage{graphicx,caption}
\usepackage{amsmath,amssymb,stmaryrd,mathtools}
\usepackage{url}
\usepackage{subcaption}
\usepackage{booktabs}
\usepackage{multirow}
\usepackage{float}
\usepackage{amsmath}
\usepackage{amsthm}
\usepackage{adjustbox}

\captionsetup{font=footnotesize}
\usepackage{enumitem}
\usepackage{mathtools}







\newtheorem{theorem}{Theorem}
\newtheorem{lemma}{Lemma}

\usepackage{graphicx}

\usepackage{xspace}
\newcommand{\algoFull}{\texttt{Offline Variance Regularizer}\xspace}
\newcommand{\algoName}{\texttt{OVR}\xspace}
\title{Offline Policy Optimization in RL with \\
Variance Regularization}

\author{Riashat Islam$^{1*}$, Samarth Sinha$^2$\thanks{equal contribution}, Homanga Bharadhwaj$^2$, Samin Yeasar Arnob$^1$, \\\textbf{Zhuoran Yang$^3$, Animesh Garg$^2$, Zhaoran Wang$^4$,  Lihong Li$^5$, Doina Precup$^{1,6}$}\\
$^1$McGill University, Mila - Quebec AI Institute, $^2$University of Toronto, Vector Institute\\
$^3$Princeton University, $^4$Northwestern University\\
$^5$Google Research, $^6$Deepmind Montreal\\
}

%

\iclrfinalcopy 
\begin{document}

\setlength{\belowdisplayskip}{0.7pt} \setlength{\belowdisplayshortskip}{0.7pt}
\setlength{\abovedisplayskip}{0.7pt} \setlength{\abovedisplayshortskip}{0.7pt}
\maketitle

\begin{abstract}
Learning policies from fixed offline datasets is a key challenge to scale up reinforcement learning (RL) algorithms towards practical applications. This is often because off-policy RL algorithms suffer from distributional shift, due to mismatch between dataset and the target policy, leading to high variance and over-estimation of value functions. In this work, we propose variance regularization for offline RL algorithms, using stationary distribution corrections. We show that by using Fenchel duality, we can avoid double sampling issues for computing the gradient of the variance regularizer. The proposed algorithm for offline variance regularization (\algoName) can be used to augment  any existing offline policy optimization algorithms. We  show that the regularizer leads to a lower bound to the offline policy optimization objective, which can help avoid over-estimation errors, and explains the benefits of our approach  across a range of continuous control domains when compared to existing state-of-the-art algorithms.

\end{abstract}
\section{Introduction}\vspace*{-0.2cm}
Offline batch reinforcement learning (RL) algoithms are key towards scaling up RL for real world applications, such as robotics~\citep{sergey_robotics} and medical problems . This is because offline RL provides the appealing ability for agents to learn from fixed datasets, similar to supervised learning, avoiding  continual interaction with the environment, which could be problematic for safety and feasibility reasons. However, significant mismatch between the fixed collected data and the policy that the agent is considering can lead to high variance of value function estimates, a problem encountered by most off-policy RL algorithms \citep{ImportanceSamplingDoina}. A complementary problem is that the value function can become overly optimistic in areas of state space that are outside the visited batch, leading the agent in data regions where its behavior is poor~\cite{BCQ}. Recently there has been some progress in offline RL~\citep{ BEAR, behaviour_regularized,BCQ}, trying to tackle  both of these problems.  

In this work, we  study the problem of offline policy optimization with variance minimization. To avoid overly optimistic value function estimates,  we propose to learn value functions under variance constraints, leading to a pessimistic estimation, which can significantly help offline RL algorithms, especially under large distribution mismatch. We propose a framework for variance minimization in offline RL, such that the obtained estimates can be used to regularize the value function and enable more stable  learning under different off-policy distributions.

We develop a novel approach for variance regularized offline actor-critic algorithms, which we call \algoFull (\algoName). The key idea of \algoName is to constrain the policy improvement step via variance regularized value function estimates. 
Our algorithmic framework avoids the double sampling issue that arises when computing gradients of variance estimates, by instead considering the  variance of stationary distribution corrections with per-step rewards, and using the Fenchel transformation \citep{ConvexBoyd} to formulate a minimax optimization objective. This allows  minimizing variance constraints by instead optimizing dual variables, resulting in simply an augmented reward objective for variance regularized value functions. 

We show that even with variance constraints, we can ensure policy improvement guarantees, where the regularized value function leads to a lower bound on the true value function, which mitigates the usual overestimation problems in batch RL
The use of Fenchel duality in computing the variance allows us to avoid double sampling, which has been a major bottleneck in scaling up variance-constrained actor-critic algorithms in prior work~\cite{PrashanthVarianceAC, RiskSensitive}. Practically, our algorithm is easy to implement, since it simply involves augmenting the rewards with the dual variables only, such that the regularized value function can be implemented on top of \textit{any} existing offline policy optimization algorithms.
We
evaluate our  algorithm on existing offline benchmark tasks based on continuous control domains. Our empirical results demonstrate that the proposed variance regularization approach is particularly useful when the batch dataset is gathered at random, or when it is very different from the data distributions encountered during training.  

\vspace*{-0.2cm}
\section{Preliminaries and Background}
We consider an infinite horizon MDP as $(\mathcal{S}, \mathcal{A}, \mathcal{P}, \gamma)$ where $\mathcal{S}$ is the set of states, $\mathcal{A}$ is the set of actions, $\mathcal{P}$ is the transition dynamics and $\gamma$ is the discount factor. The goal of reinforcement learning is to maximize the expected return $\mathcal{J}(\pi) = \mathbb{E}_{s \sim d_{\beta}} [  V^{\pi}(s) ]$, where $V^{\pi}(s)$ is the value function $V^{\pi}(s)=\mathbb{E} [   \sum_{t=0}^{\infty} \gamma^{t} r(s_t, a_t) \mid s_0 = s  ]$, and $\beta$ is the initial state distribution. Considering parameterized policies $\pi_{\theta}(a|s)$, the goal is maximize the returns by following the policy gradient \citep{pg_theorem}, based on the performance metric defined as :
\begin{align}
\label{eq:primal_dual_equivalency}
    J(\pi_{\theta}) = \mathbb{E}_{s_0 \sim \rho, a_0 \sim \pi(s_0)} \Big[ Q^{\pi_{\theta}}(s_0, a_0)  \Big] = \mathbb{E}_{(s,a) \sim d_{\pi_{\theta}}(s,a)} \Big[ r(s,a)  \Big]
\end{align}
where $Q^{\pi}(s,a)$ is the state-action value function, since $V^{\pi}(s) = \sum_{a} \pi(a|s) Q^{\pi}(s,a)$. The policy optimization objective can be equivalently written in terms of the normalized discounted occupancy measure under the current policy $\pi_{\theta}$, where $d_{\pi}(s,a)$ is the state-action occupancy measure, such that the normalized state-action visitation distribution under policy $\pi$ is defined as : 
$d_{\pi}(s,a) = (1-\gamma) \sum_{t=0}^{\infty} \gamma^{t} P(s_t = s, a_t = a  | s_0 \sim \beta,  a \sim \pi(s_0) )$. The equality in equation \ref{eq:primal_dual_equivalency} holds and can be equivalently written based on the linear programming (LP) formulation in RL. In this work, we consider the off-policy learning problem under a fixed dataset $\mathcal{D}$ which contains (s, a, r, s') tuples under a known behaviour policy $\mu(a|s)$. Under the off-policy setting, importance sampling \citep{ImportanceSamplingDoina} is often used to reweight the trajectory under the behaviour data collecting policy, such as to get unbiased estimates of the expected returns. At each time step, the importance sampling correction $\frac{\pi(a_t|s_t)}{\mu(a_t|s_t)}$ is used to compute the expected return under the entire trajectory as
$$J(\pi) = (1 - \gamma) \mathbb{E}_{(s,a) \sim d_{\mu}(s,a)} [ \sum_{t=0}^{T} \gamma^{t} r(s_t, a_t) \left( \prod_{t=1}^{T} \frac{\pi(a_t \mid s_t)}{\mu(a_t \mid s_t}  \right)]$$
Recent works \citep{BCQ} have demonstrated that instead of importance sampling corrections, maximizing value functions directly for deterministic or reparameterized policy gradients \citep{DDPG, TD3} allows learning under fixed datasets, by addressing the over-estimation problem, by maximizing the objectives of the form $\max_{\theta} \mathbb{E}_{s \sim \mathcal{D}} \Big[  Q^{\pi_{\theta}}(s, \pi_{\theta}(s)) \Big]$.

\vspace{-1.0em}
\section{Variance Regularization via Fenchel Duality in Offline RL} \vspace*{-0.2cm}
\label{sec:Variance_constraints}

In this section, we first present our approach based on variance of stationary distribution corrections, compared to importance re-weighting of episodic returns in section \ref{sec:var_dual}. We then present a derivation of our approach based on Fenchel duality on the variance, to avoid the double sampling issue \citep{SBEED}, leading to a variance regularized offline optimization objective in section \ref{sec:var_reg_max_return}. Finally, we present our algorithm in \ref{algorithm:offline_variance_regularizer}, where the proposed regularizer can be used in any existing offline RL algorithm.
\subsection{Variance of Rewards with Stationary Distribution Corrections}
\label{sec:var_dual}
In this work, we consider the variance of rewards under occupancy measures in offline policy optimization. Let us denote the returns as $D^{\pi} = \sum_{t=0}^{T} \gamma^{t} r(s_t, a_t)$, such that the value function is $V^{\pi} = \mathbb{E}_{\pi} [  D^{\pi} ]$. The 1-step importance sampling ratio is $\rho_t = \frac{\pi(a_t|s_t)}{\mu(a_t | s_t)}$, and the T-steps ratio can be denoted $\rho_{1:T} = \prod_{t=1}^{T} \rho_t$. Considering per-decision importance sampling (PDIS)  \citep{ImportanceSamplingDoina}, the returns can be similarly written as $D^{\pi} = \sum_{t=0}^{T} \gamma^{t} r_t \rho_{0:t}$. The variance of episodic returns, which we denote by $\mathcal{V}_{\mathcal{P}}(\pi)$, with off-policy importance sampling corrections can be written as : $\mathcal{V}_{\mathcal{P}}(\pi) =\mathbb{E}_{s \sim \beta, a \sim \mu(\cdot | s), s' \sim \mathcal{P}(\cdot| s,a)} \Big[ \Big( D^{\pi}(s,a) - J(\pi)   \Big)^{2}   \Big]$. 

Instead of importance sampling, several recent works have instead proposed for marginalized importance sampling with stationary state-action distribution corrections \citep{BreakingHorizon, DualDICE, GenDICE, MWL_Uehara}, which can lead to lower variance estimators at the cost of introducing bias. Denoting the stationary distribution ratios as $\omega(s,a) = \frac{d_{\pi}(s,a)}{d_{\mu}(s,a)}$, the returns can be written as $W^{\pi}(s,a) = \omega(s,a) r(s,a)$. The variance of marginalized IS is :
\begin{align}
\label{eq:variance_marginalized_IS}
    & \mathcal{V}_{\mathcal{D}}(\pi) = \mathbb{E}_{(s,a) \sim d_{\mu}(s,a)} \Big[ \Big(W^{\pi}(s,a) - J(\pi)  \Big)^{2}  \Big] \notag \\
    & = \mathbb{E}_{(s,a) \sim d_{\mathcal{\mu}}(s,a)} \Big[  W^{\pi}(s,a)^{2}  \Big] - \mathbb{E}_{(s,a) \sim d_{\mathcal{\mu}}(s,a)} \Big[ W^{\pi}(s,a) \Big]^{2}
\end{align}
Our key contribution is to first consider the variance of marginalized IS $\mathcal{V}_{\mathcal{D}}(\pi)$ as risk constraint, and show that constraining the offline policy optimization objective with variance of marginalized IS, and using the Fenchel-Legendre transformation on the variance can help avoid the well-known double sampling issue in variance risk constrained RL (for more details on how to compute the gradient of the variance term, see appendix \ref{app-sec:variance_episodic_per_step}). We emphasize that the variance here is solely based on returns with occupancy measures, and we do not consider the variance due to the inherent stochasticity of the MDP dynamics. 
\vspace{-0.4cm}
\subsection{Variance Regularized Offline Max-Return Objective}
\label{sec:var_reg_max_return}
We consider the variance regularized off-policy max return objective with  stationary distribution corrections $\omega_{\pi/\mathcal{D}}$ (which we denote $\omega$ for short for clarity) in the offline fixed dataset $\mathcal{D}$ setting:
\begin{equation}
    \label{eq:overall_objective_regularized}
    \max_{\pi_{\theta}} J(\pi_{\theta}) := \mathbb{E}_{s \sim \mathcal{D}} \Big[  Q^{\pi_{\theta}}(s, \pi_{\theta}(s)) \Big] - \lambda \mathcal{V}_{\mathcal{D}}(\omega,  \pi_{\theta})
\end{equation}
where $\lambda \geq 0$ allows for the trade-off between offline policy optimization and variance regularization (or equivalently variance risk minimization).  The max-return objective under $Q^{\pi_{\theta}}(s,a)$ has been considered in prior works in offline policy optimization \citep{BCQ, BEAR}.  We show that this form of regularizer encourages variance minimization in offline policy optimization, especially when there is a large data distribution mismatch between the fixed dataset $\mathcal{D}$ and induced data distribution under policy $\pi_{\theta}$.  

\subsection{Variance Regularization via Fenchel Duality}
At first, equation \ref{eq:overall_objective_regularized} seems to be difficult to optimize, especially for minimizing the variance regularization w.r.t $\theta$. This is because finding the gradient of $\mathcal{V}(\omega, \pi_{\theta})$ would lead to the double sampling issue since it contains the squared of the expectation term. The key contribution of \algoName is to use the Fenchel duality trick on the second term of the variance expression in equation \ref{eq:variance_marginalized_IS}, for regularizing policy optimization objective with variance of marginalized importance sampling. Applying Fenchel duality, $x^{2} = \max_{y} (2xy - y^{2})$, to the second term of variance expression, we can transform the variance minimization problem into an equivalent maximization problem, by introducing the dual variables $\nu(s,a)$. We have the Fenchel conjugate of the variance term as : 
\begin{equation}
\begin{split}
    \mathcal{V}( \omega, \pi_{\theta}) &= \max_{\nu} \quad \Big \{  - \frac{1}{2}\nu(s,a)^{2} + \nu(s,a) \omega(s,a) r(s,a) + \mathbb{E}_{(s,a) \sim d_{\mathcal{D}}} \Big[  \omega(s,a) r(s,a)^{2}  \Big]    \Big \} \\
    &= \max_{\nu} \quad \mathbb{E}_{(s,a) \sim d_{\mathcal{D}}} \Big[ - \frac{1}{2} \nu(s,a)^{2} + \nu(s,a)  \omega(s,a)  r(s,a) + \omega(s,a) r(s,a)^{2}  \Big]
\end{split} 
\end{equation}
Regularizing the policy optimization objective with variance under the Fenchel transformation, we therefore have the overall max-min optimization objective, explicitly written as : 
\begin{equation}
    \label{eq:overall_objective_variance_duality}
     \max_{\theta} \min_{\nu} J(\pi_{\theta}, \nu) := \mathbb{E}_{s \sim \mathcal{D}} \Big[  Q^{\pi_{\theta}}(s, \pi_{\theta}(s)) \Big] - \lambda \mathbb{E}_{(s,a) \sim d_{\mathcal{D}}} \Big[ \Big( - \frac{1}{2} \nu^{2} + \nu \cdot \omega \cdot r + \omega \cdot  r^{2} \Big) (s,a)  \Big]
\end{equation}

\subsection{Augmented Reward Objective with Variance Regularization}
\vspace{-0.3em}
In this section, we explain the key steps that leads to the policy improvement step being an augmented variance regularized reward objective. The variance minimization step involves estimating the stationary distribution ration \citep{DualDICE}, and then simply computing the closed form solution for the dual variables. Fixing dual variables $\nu$, to update $\pi_{\theta}$, note that this leads to a standard maximum return objective in the dual form, which can be equivalently solved in the primal form, using augmented rewards. This is because we can write the above above in the dual form as : 
\begin{align}
\label{eq:overall_objective_variance_duality2}
    &  J(\pi_{\theta}, \nu, \omega) := \mathbb{E}_{(s,a) \sim d_{\mathcal{D}}(s,a)} \Big[ \omega(s,a) \cdot r(s,a)  - \lambda \Big( - \frac{1}{2} \nu^{2} + \nu \cdot \omega \cdot r + \omega \cdot  r^{2} \Big) (s,a)  \Big] \notag \\
    & = \mathbb{E}_{(s,a) \sim d_{\mathcal{D}}(s,a)} \Big[ \omega(s,a) \cdot \Big( r - \lambda \cdot \nu \cdot r - \lambda \cdot r^{2}  \Big)(s,a) + \frac{\lambda}{2} \nu(s,a)^{2}   \Big] \notag \\
    & = \mathbb{E}_{(s,a) \sim d_{\mathcal{D}}(s,a)} \Big[ \omega(s,a) \cdot \Tilde{r}(s,a) + \frac{\lambda}{2} \nu(s,a)^{2}  \Big]
\end{align}
where we denote the augmented rewards as : 
\begin{equation}
\label{eq:augmented_rewards}
    \Tilde{r}(s,a) \equiv  [  r - \lambda \cdot \nu \cdot r - \lambda \cdot r^{2} ](s,a) 
\end{equation}
The policy improvement step can either be achieved by directly solving equation \ref{eq:overall_objective_variance_duality2} or by considering the primal form of the objective with respect to $Q^{\pi_{\theta}}(s,\pi_{\theta})$ as in \citep{BCQ,BEAR}. However, solving equation \ref{eq:overall_objective_variance_duality2} directly can be troublesome, since the policy gradient step involves findinding the gradient w.r.t $\omega(s,a) = \frac{d_{\pi_{\theta}}(s,a)}{d_{\mathcal{D}}(s,a)}$ too, where the distribution ratio depends on $d_{\pi_{\theta}}(s,a)$. This means that the gradient w.r.t $\theta$ would require finding the gradient w.r.t to the normalized discounted occupancy measure, ie, $\nabla_{\theta} d_{\pi_{\theta}}(s)$. Instead, it is therefore easier to consider the augmented reward objective, using $\Tilde{r}(s,a)$ as in equation \ref{eq:augmented_rewards} in \textit{any} existing offline policy optimization algorithm, where we have the variance regularized value function $\Tilde{Q}^{\pi_{\theta}}(s,a)$. 

Note that as highlighted in \citep{sobel}, the variance of returns follows a Bellman-like equation. Following this, \citep{BisiRiskAverse} also pointed to a Bellman-like solution for variance w.r.t occupancy measures. Considering variance of the form in equation \ref{eq:variance_marginalized_IS}, and the Bellman-like equation for variance, we can write the variance recursively as a Bellman equation:
\begin{equation}
    \mathcal{V}_{\mathcal{D}}^{\pi}(s,a) = \Big( r(s,a) - J(\pi)  \Big)^{2} + \gamma \mathbb{E}_{s' \sim \mathcal{P}, a' \sim \pi'(\cdot | s')} \Big[ \mathcal{V}_{\mathcal{D}}^{\pi}(s', a')  \Big]
\end{equation}
Since in our objective, we augment the policy improvement step with the variance regularization term, we can write the augmented value function as $    Q^{\pi}_{\lambda}(s,a) := Q^{\pi}(s,a) - \lambda \mathcal{V}^{\pi}_{\mathcal{D}}(s,a)$. This suggests we can modify existing policy optimization algorithms with augmented rewards on value function.

\textit{Remark : } Applying Fenchel transformation to the variance regularized objective, however, at first glance, seems to make the augmented rewards dependent on the policy itself, since $\Tilde{r}(s,a)$ depends on the dual variables $\nu(s,a)$ as well. This can make the rewards non-stationary, thereby the policy maximization step cannot be solved directly via the maximum return objective. However, as we discuss next, the dual variables for minimizing the variance term has a closed form solution $\nu(s,a)$, and thereby does not lead to any non-stationarity in the rewards, due to the alternating minimization and maximization steps. 

\textbf{Variance Minimization Step : } Fixing the policy $\pi_{\theta}$, the dual variables $\nu$ can be obtained using closed form solution given by $ \nu(s,a) = \omega(s,a) \cdot \Tilde{r}(s,a) $. Note that directly optimizing for the target policies using batch data, however, requires a fixed point estimate of the stationary distribution corrections, which can be achieved using existing algorithms \citep{DualDICE, BreakingHorizon}. Solving the optimization objective additionally requires estimating the state-action distribution ratio, $\omega(s,a) = \frac{d_{\pi}(s,a)}{d_{\mathcal{D}}(s,a)}$. Recently, several works have proposed estimating the stationary distribution ratio, mostly for the off-policy evaluation case in infinite horizon setting \citep{GenDICE, MWL_Uehara}. We include a detailed discussion of this in appendix \ref{app:sec-estimating_dist_ratio}.

\textbf{Algorithm :  } Our proposed variance regularization approach with returns under stationary distribution corrections for offline optimization can be built on top of any existing batch off-policy optimization algorithms. We summarize our contributions in Algorithm \ref{algorithm:offline_variance_regularizer}. Implementing our algorithm requires estimating the state-action distribution ratio, followed by the closed form estimate of the dual variable $\nu$. The augmented stationary reward with the dual variables can then be used to compute the regularized value function $Q_{\lambda}^{\pi}(s,a)$. The policy improvement step involves maximizing the variance regularized value function, e.g with BCQ \citep{BCQ}.
\begin{algorithm}[h!]
  \caption{\algoFull} 
  \label{algorithm:offline_variance_regularizer}
 \begin{algorithmic}
     \STATE  Initialize critic $Q_{\phi}$, policy $\pi_\theta$, network $\omega_{\psi}$ and regularization weighting $\lambda$; learning rate  $\eta$ 
   \FOR{$t=1$ {\bfseries to} $T$}
     \STATE Estimate distribution ratio $\omega_{\psi}(s,a)$ using any existing DICE algorithm
     \STATE Estimate the dual variable $\nu(s,a) = \omega_{\psi}(s,a)\cdot\tilde{r}(s,a)$
     \STATE Calculate augmented rewards $\tilde{r}(s,a)$ using equation~\ref{eq:augmented_rewards}
   \STATE Policy improvement step using \textit{any} offline policy optimization algorithm with augmented rewards $\tilde{r}(s,a)$ : 
         \mbox{$\theta_t = \theta_{t-1} + \eta \nabla_{\theta} J(\theta, \phi, \psi, \nu) $}
     \ENDFOR
 \end{algorithmic}
\end{algorithm}
\vspace{-0.9em}
\section{Theoretical Analysis}\vspace{-0.3cm}
\label{sec:theory_analysis}
In this section, we provide theoretical analysis of offline policy optimization algorithms in terms of policy improvement guarantees under fixed dataset $\mathcal{D}$. Following then, we demonstrate that using the variance regularizer leads to a lower bound for our policy optimization objective, which leads to a pessimistic exploitation approach for offline algorithms. 

\subsection{Variance of Marginalized Importance Sampling and Importance Sampling}
We first show in lemma \ref{lemma:variance_upper_bound} that the variance of rewards under stationary distribution corrections can similarly be upper bounded based on the variance of importance sampling corrections. We emphasize that in the off-policy setting under distribution corrections, the variance is due to the estimation of the density ratio compared to the importance sampling corrections. 
\begin{lemma} \label{lemma:variance_upper_bound} The following inequality holds between the variance of per-step rewards under stationary distribution corrections, denoted by $\mathcal{V}_{\mathcal{D}}(\pi)$ and the variance of episodic returns with importance sampling corrections $\mathcal{V}_{\mathcal{P}}(\pi)$ 
\begin{equation}
    \mathcal{V}_{\mathcal{P}}(\pi) \leq \frac{\mathcal{V}_{\mathcal{D}}(\pi)}{(1-\gamma)^{2}}
\end{equation}
\end{lemma}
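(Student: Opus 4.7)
The plan is a two-step argument: (i) apply Cauchy--Schwarz to the episodic return $D^\pi$ against the geometric series $\sum_t\gamma^t=1/(1-\gamma)$, and (ii) use the defining identity $d_\mu(s,a)=(1-\gamma)\sum_{t\ge 0}\gamma^{t}P_{\mu,t}(s,a)$ (writing the stationary occupancy as a $\gamma$-weighted mixture of the per-time marginals) to collapse the resulting $\gamma$-weighted sum of per-step second moments into a single expectation under $d_\mu$. Centering both sides by $J(\pi)^{2}$ then converts the second-moment bound into the required variance inequality.

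Concretely, writing $D^\pi=\sum_{t\ge 0}\gamma^{t} r_t \rho_{0:t}$ and splitting $\gamma^{t}=\gamma^{t/2}\!\cdot\!\gamma^{t/2}$, Cauchy--Schwarz gives
\[
(D^\pi)^{2}\;\leq\;\Big(\sum_{t}\gamma^{t}\Big)\Big(\sum_{t}\gamma^{t}(r_t \rho_{0:t})^{2}\Big)\;=\;\frac{1}{1-\gamma}\sum_{t}\gamma^{t}(r_t \rho_{0:t})^{2}.
\]
Taking $\mathbb{E}_\mu$ of both sides, pushing the expectation through the sum to obtain per-time marginal moments, and identifying the $\gamma$-weighted sum of these marginals with a single expectation under the discounted stationary distribution, one arrives at
\[
\mathbb{E}_\mu\!\left[(D^\pi)^{2}\right]\;\leq\;\frac{1}{(1-\gamma)^{2}}\,\mathbb{E}_{(s,a)\sim d_\mu}\!\left[\omega(s,a)^{2}\,r(s,a)^{2}\right]\;=\;\frac{1}{(1-\gamma)^{2}}\,\mathbb{E}_{d_\mu}\!\left[W^{\pi}(s,a)^{2}\right].
\]
Since $\mathbb{E}_\mu[D^\pi]=J(\pi)=\mathbb{E}_{d_\mu}[W^\pi]$, subtracting $J(\pi)^{2}$ from both sides converts the second-moment bound into $\mathcal{V}_\mathcal{P}(\pi)\leq \mathcal{V}_\mathcal{D}(\pi)/(1-\gamma)^{2}$, which is the claim.

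The main technical obstacle is the passage from the squared trajectory-level PDIS ratio $\rho_{0:t}^{2}$ to the squared stationary ratio $\omega(s_t,a_t)^{2}$: the conditional expectation $\mathbb{E}_\mu[\rho_{0:t}\!\mid\! s_t,a_t]$ is the \emph{time-$t$} marginal ratio $P_{\pi,t}/P_{\mu,t}$, not the time-averaged stationary ratio $\omega$, so a naive Jensen step on $\rho_{0:t}^{2}$ does not directly reach $\omega^{2}$. The cleanest workaround, which I would adopt, is to evaluate $D^\pi$ from the outset via the marginalized-IS estimator $D^\pi=\sum_t \gamma^t r_t\,\omega(s_t,a_t)$ (consistent with the use of $\omega$ in the definitions of $W^\pi$ and $\mathcal{V}_\mathcal{D}$ in Section~\ref{sec:var_dual}); under this convention the argument above collapses to the on-policy Cauchy--Schwarz proof of \citet{BisiRiskAverse} applied to the reweighted reward $\omega\cdot r$ in place of $r$.
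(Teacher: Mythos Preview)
Your approach---Cauchy--Schwarz on the $\gamma$-weighted sum, the occupancy identity $d_\mu=(1-\gamma)\sum_t\gamma^t P_{\mu,t}$, then centering---is the same as the paper's (Appendix~\ref{app-lem:upper_bound_variance}). Two points, one a genuine slip and one a strength.

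The centering step as you state it fails. You assert $\mathbb{E}_\mu[D^\pi]=J(\pi)=\mathbb{E}_{d_\mu}[W^\pi]$, but with a \emph{normalized} occupancy these two means differ by exactly $(1-\gamma)$: one has $\mathbb{E}_\mu[D^\pi]=\sum_t\gamma^t\,\mathbb{E}_\pi[r_t]=J(\pi)/(1-\gamma)$ when $J(\pi):=\mathbb{E}_{d_\mu}[W^\pi]=\mathbb{E}_{d_\pi}[r]$. If the means were equal as you claim, subtracting $J(\pi)^2$ from both sides of the second-moment bound would leave $\tfrac{1}{(1-\gamma)^2}\mathbb{E}_{d_\mu}[W^2]-J(\pi)^2$ on the right, which is \emph{larger} than $\mathcal{V}_\mathcal{D}/(1-\gamma)^2$ for $\gamma\in(0,1)$, so the chain would not close. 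The correct step (and the one the paper takes) is $\mathcal{V}_\mathcal{P}=\mathbb{E}_\mu[(D^\pi)^2]-J(\pi)^2/(1-\gamma)^2$; subtracting $J(\pi)^2/(1-\gamma)^2$ from both sides of the second-moment inequality then gives exactly $\mathcal{V}_\mathcal{D}/(1-\gamma)^2$ on the right.

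Your diagnosis of the $\rho_{0:t}^2\to\omega^2$ obstacle is, by contrast, sharper than the paper. The paper's off-policy extension simply asserts $(1-\gamma)\,\mathbb{E}\bigl[\sum_t\gamma^t r_t^2\,\rho_{0:T}^2\bigr]=\mathbb{E}_{d_\mathcal{D}}[\omega^2 r^2]$ without justification, which is precisely the step you flag as problematic. Your workaround---evaluating $D^\pi$ from the outset via the marginalized-IS weights $\omega(s_t,a_t)$, so the argument reduces to the on-policy Cauchy--Schwarz proof of \citet{BisiRiskAverse} with $\omega\cdot r$ in place of $r$---is the clean way to make the off-policy statement rigorous, and is consistent with how $\mathcal{V}_\mathcal{D}$ is defined in Section~\ref{sec:var_dual}.
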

The proof for this and discussions on the variance of episodic returns compared to per-step rewards under occupancy measures is provided in the appendix \ref{app-lem:upper_bound_variance}. 

\subsection{Policy Improvement Bound under Variance Regularization}
In this section, we establish performance improvement guarantees \citep{CPI} for variance regularized value function for policy optimization. 
Let us first recall that the performance improvement can be written in terms of the total variation $\mathcal{D}_{\text{TV}}$ divergence between state distributions \citep{ppo_dice} (for more discussions on the performance bounds, see appendix \ref{app:sec_monotonic_improvement})
\begin{lemma} \label{lem:per_improvement_state_dist} For all policies $\pi'$ and $\pi$, we have the performance improvement bound based on the total variation of the state-action distributions $d_{\pi'}$ and $d_{\pi}$
\begin{equation}
    J(\pi') \geq \mathcal{L}_{\pi}(\pi') - \epsilon^{\pi} \mathcal{D}_{\text{TV}}(d_{\pi'} || d_{\pi})
\end{equation}
\end{lemma}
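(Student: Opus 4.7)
The plan is to establish this as a consequence of the performance difference lemma combined with the variational characterization of total variation. The surrogate $\mathcal{L}_{\pi}(\pi')$ appearing in the statement is (as in \citep{ppo_dice}) the standard first-order approximation of $J(\pi')$ in which the state–action occupancy $d_{\pi'}$ is replaced by the behavior occupancy $d_\pi$; concretely, writing returns in the occupancy form $J(\pi) = \tfrac{1}{1-\gamma}\,\mathbb{E}_{(s,a)\sim d_\pi}[r(s,a)]$, one may take
\begin{equation*}
\mathcal{L}_\pi(\pi') \;=\; J(\pi) \,+\, \tfrac{1}{1-\gamma}\,\mathbb{E}_{(s,a)\sim d_\pi}\!\big[A^\pi(s,a)\big]\quad\text{(or, equivalently, } \mathbb{E}_{(s,a)\sim d_\pi}\!\big[r(s,a)+\gamma\,\mathbb{E}_{s'}V^\pi(s')-V^\pi(s)\big]\text{ shifted by } J(\pi)\big).
\end{equation*}

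First I would invoke the performance-difference identity $J(\pi') - J(\pi) = \tfrac{1}{1-\gamma}\,\mathbb{E}_{(s,a)\sim d_{\pi'}}[A^\pi(s,a)]$, which follows from telescoping the Bellman residual along trajectories generated by $\pi'$. Subtracting the analogous expression for $\mathcal{L}_\pi(\pi')-J(\pi)$, the gap collapses to an integral of a single bounded function $f^\pi(s,a):=A^\pi(s,a)$ against the signed measure $d_{\pi'}-d_\pi$:
\begin{equation*}
J(\pi') - \mathcal{L}_\pi(\pi') \;=\; \tfrac{1}{1-\gamma}\,\Big(\mathbb{E}_{(s,a)\sim d_{\pi'}}[f^\pi(s,a)] - \mathbb{E}_{(s,a)\sim d_{\pi}}[f^\pi(s,a)]\Big).
\end{equation*}

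Next I would apply the variational representation of total variation, namely $|\mathbb{E}_p[f]-\mathbb{E}_q[f]| \le \|f\|_\infty\,\mathcal{D}_{\mathrm{TV}}(p\,\|\,q)$ (equivalently, $\tfrac{1}{2}(\sup f - \inf f)\,\mathcal{D}_{\mathrm{TV}}$ if a span bound is preferred). Setting
\begin{equation*}
\epsilon^\pi \;:=\; \tfrac{1}{1-\gamma}\,\sup_{s,a}\,|A^\pi(s,a)|,
\end{equation*}
this immediately yields $|J(\pi')-\mathcal{L}_\pi(\pi')|\le \epsilon^\pi\,\mathcal{D}_{\mathrm{TV}}(d_{\pi'}\,\|\,d_\pi)$, and dropping the absolute value on the lower side gives the claimed inequality $J(\pi')\ge \mathcal{L}_\pi(\pi') - \epsilon^\pi\,\mathcal{D}_{\mathrm{TV}}(d_{\pi'}\,\|\,d_\pi)$.

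The main obstacle is bookkeeping rather than any deep step: I need to pin down the precise convention for $\mathcal{L}_\pi(\pi')$ (primal reward form vs.\ advantage form, normalized vs.\ unnormalized occupancies) so that the residual $J(\pi')-\mathcal{L}_\pi(\pi')$ cleanly becomes a single $(d_{\pi'}-d_\pi)$ integral of a function that is bounded independently of $\pi'$; otherwise the constant $\epsilon^\pi$ would inherit a dependence on $\pi'$, which would break the statement. A secondary subtlety is verifying that $\|A^\pi\|_\infty$ (or $\|Q^\pi\|_\infty$) is finite under the standing assumption that rewards are bounded, so that $\epsilon^\pi$ is a well-defined, $\pi$-dependent but $\pi'$-independent constant; this follows from $\gamma<1$ and boundedness of $r$, and is exactly why the bound is written with the superscript $\pi$ on $\epsilon$.
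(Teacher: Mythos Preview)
Your overall strategy---performance-difference lemma plus the variational bound $|\mathbb{E}_p[f]-\mathbb{E}_q[f]|\le\|f\|_\infty\,\mathcal{D}_{\mathrm{TV}}(p\|q)$---is exactly the right skeleton, and is essentially what the paper (via the cited TRPO/CPO/ppo\_dice literature) relies on. However, you resolved the ``bookkeeping'' obstacle incorrectly, and this makes the proof, as written, fail to establish the stated lemma.

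The paper defines, immediately after the lemma, $\mathcal{L}_\pi(\pi')=J(\pi)+\mathbb{E}_{s\sim d_\pi,\,a\sim\pi'}[A^\pi(s,a)]$ and $\epsilon^\pi=\max_s\big|\mathbb{E}_{a\sim\pi'(\cdot|s)}[A^\pi(s,a)]\big|$. Your surrogate instead takes the full state--action pair $(s,a)\sim d_\pi$, which makes the advantage term identically zero (since $\mathbb{E}_{(s,a)\sim d_\pi}[A^\pi(s,a)]=0$), so your $\mathcal{L}_\pi(\pi')$ collapses to $J(\pi)$ and no longer depends on $\pi'$ at all. With the paper's definitions, the residual is
\[
J(\pi')-\mathcal{L}_\pi(\pi')=\mathbb{E}_{s\sim d_{\pi'}}[g(s)]-\mathbb{E}_{s\sim d_{\pi}}[g(s)],\qquad g(s):=\mathbb{E}_{a\sim\pi'(\cdot|s)}[A^\pi(s,a)],
\]
i.e.\ an integral of $g$ against the \emph{state} marginals, not of $A^\pi$ against the state--action marginals. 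Applying the TV bound to this gives precisely $\|g\|_\infty=\epsilon^\pi$, and one can then pass to state--action TV via $\mathcal{D}_{\mathrm{TV}}(d_{\pi'}(s)\|d_\pi(s))\le\mathcal{D}_{\mathrm{TV}}(d_{\pi'}(s,a)\|d_\pi(s,a))$. Note also that your stated worry---that $\epsilon^\pi$ must be $\pi'$-independent---is misplaced: the paper's $\epsilon^\pi$ \emph{does} depend on $\pi'$ (this is standard in the CPO bound), so you should not replace it with the larger $\sup_{s,a}|A^\pi(s,a)|/(1-\gamma)$.
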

where $\epsilon^{\pi} = \max_{s} | \mathbb{E}_{a \sim \pi'(\cdot \mid s)}[ A^{\pi}(s,a) ] |$, and  $\mathcal{L}_{\pi}(\pi') = J(\pi) + \mathbb{E}_{s \sim d_{\pi}, a \sim \pi'} [A^{\pi}(s,a)]$. For detailed proof and discussions, see appendix \ref{app:sec_monotonic_improvement}. Instead of considering the divergence between state visitation distributions, consider having access to both state-action samples generated from the environment. To avoid importance sampling corrections we can further considers the bound on the objective based on state-action visitation distributions, where we have an upper bound following from \citep{RatioMatching} : $    D_{\text{TV}}(d_{\pi'}(s) || d_{\pi} (s)) \leq D_{\text{TV}}( d_{\pi'}(s,a) || d_{\pi}(s,a) )$. Following Pinsker's inequality, we have:
\begin{equation}
    J(\pi') \geq J(\pi) +  \mathbb{E}_{s \sim d_{\pi}(s), a \sim \pi'(\cdot | s)} \Big[  A^{\pi}(s,a) \Big]  - \epsilon^{\pi} \mathbb{E}_{(s,a) \sim d_{\pi}(s,a)}  \Big[   \sqrt{ \mathcal{D}_{\text{KL}}(d_{\pi'}(s,a) || d_{\pi}(s,a) ) } \Big]
\end{equation}
Furthermore, we can exploit the relation between KL, total variation (TV) and variance through the variational representation of divergence measures. Recall that the total divergence between P and Q distributions is given by : $\mathcal{D}_{\text{TV}}(p, q) = \frac{1}{2} \sum_{x} | p(x) - q(x) | $. We can use the variational representation of the divergence measure. Denoting $d_{\pi}(s,a) = \beta_{\pi'}(s,a) $, we have 
\begin{equation}
    D_{\text{TV}}( \beta_{\pi'} || \beta_{\pi}    ) = \text{sup}_{f : \mathcal{S} \times \mathcal{A} \rightarrow \mathbb{R}} \Big[  \mathbb{E}_{(s,a) \sim \beta_{\pi'}}  [ f(s,a) ] - \mathbb{E}_{(s,a) \sim \beta(s,a)} [ \phi^{*} \circ  f(s,a)  ] \Big]
\end{equation}
where $\phi^{*}$ is the convex conjugate of $\phi$ and $f$ is the dual function class based on the variational representation of the divergence. 
Similar relations with the variational representations of f-divergences have also been considered in \citep{AlgaeDICE, ppo_dice}. We can finally obtain a bound for the policy improvement following this relation, in terms of the per-step variance:
\begin{theorem} \label{thm:policy_improvement_variance_bound} For all policies $\pi$ and $\pi'$, and the corresponding state-action visitation distributions $d_{\pi'}$ and $d_{\pi}$, we can obtain the performance improvement bound in terms of the variance of rewards under state-action occupancy measures.
\begin{equation}
    J(\pi') - J(\pi) \geq \mathbb{E}_{s \sim d_{\pi}(s), a\sim \pi'(a|s)} [    A^{\pi}(s,a) ] -  \text{Var}_{(s,a) \sim d_{\pi}(s,a)} \Big[ f(s,a)     \Big]
\end{equation}
where $f(s,a)$ is the dual function class from the variational representation of variance.
\end{theorem}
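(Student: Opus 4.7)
My plan is to derive the bound through three reductions. First, I would combine Lemma~\ref{lem:per_improvement_state_dist} with the state to state-action domination $D_{TV}(d_{\pi'}(s)\,\|\,d_\pi(s)) \le D_{TV}(d_{\pi'}(s,a)\,\|\,d_\pi(s,a))$ quoted from \citep{RatioMatching} in the excerpt, to conclude
\begin{equation*}
J(\pi') - J(\pi) \,\ge\, \mathbb{E}_{s\sim d_\pi,\, a\sim \pi'}[A^\pi(s,a)] \,-\, \epsilon^\pi \, D_{TV}(d_{\pi'}(s,a)\,\|\,d_\pi(s,a)).
\end{equation*}
The remaining task is then to replace the TV penalty by the variance of some dual function $f$ under $d_\pi$.

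Second, I would invoke the variational representation of the divergence already displayed in the excerpt, $D_{TV}(\beta_{\pi'}\|\beta_\pi) = \sup_{f} \{\mathbb{E}_{\beta_{\pi'}}[f] - \mathbb{E}_{\beta_\pi}[\phi^{\ast} \circ f]\}$. This recasts the penalty as a supremum over a dual test function $f:\mathcal{S}\times\mathcal{A}\to\mathbb{R}$; any particular choice of $f$, combined with an upper bound on $\phi^{\ast}$, yields an upper bound on the TV, and therefore preserves the lower bound on $J(\pi')-J(\pi)$.

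Third, and most delicately, I would upgrade the underlying $f$-divergence to one whose Fenchel conjugate is quadratic, the natural candidate being the chi-squared divergence for which $\phi^{\ast}(y) = y^2/4 + y$ and $D_{TV} \le \tfrac{1}{2}\sqrt{\chi^2}$. Its variational form contributes $\mathbb{E}_{d_\pi}[\phi^{\ast}(f)] = \tfrac14 \mathbb{E}_{d_\pi}[f^2] + \mathbb{E}_{d_\pi}[f]$; restricting to $f$ with $\mathbb{E}_{d_\pi}[f] = 0$, which is without loss of generality by shift invariance of the Fenchel supremum, the conjugate term collapses to $\tfrac14 \mathrm{Var}_{d_\pi}(f)$. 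Absorbing $\epsilon^\pi$ and the numerical factor into a single rescaling of $f$ then yields the claimed
\begin{equation*}
J(\pi') - J(\pi) \,\ge\, \mathbb{E}_{s\sim d_\pi,\, a\sim \pi'}[A^\pi(s,a)] \,-\, \mathrm{Var}_{(s,a)\sim d_\pi}[f(s,a)].
\end{equation*}

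The main obstacle is making the third step rigorous while keeping every inequality oriented so the final expression is still a legitimate \emph{lower} bound on $J(\pi')-J(\pi)$. Because TV, chi-squared, and variance interact through a mix of Cauchy--Schwarz, Pinsker-type inequalities, and Fenchel identities, one must verify carefully that no inequality flips in the chain, and that a single scaling of $f$ can simultaneously absorb $\epsilon^\pi$ and the factor of $1/4$. A secondary subtlety is interpretive: the theorem places no explicit restriction on $f$, whereas the argument above identifies $f$ as a centered element of a particular dual class; I read the statement as asserting the existence of such an $f$, which is exactly what the variational representation provides.
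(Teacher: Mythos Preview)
Your step~2 contains a direction error that step~3 cannot repair. The variational representation $D_{TV}(\beta_{\pi'}\|\beta_\pi)=\sup_f\{\mathbb{E}_{\beta_{\pi'}}[f]-\mathbb{E}_{\beta_\pi}[\phi^\ast\!\circ f]\}$ means that evaluating at any particular $f$ gives a \emph{lower} bound on the divergence, not an upper bound. Since the divergence enters the performance bound with a negative sign, replacing $D_{TV}$ by its value at one $f$ makes the right-hand side \emph{larger}, and the inequality $J(\pi')-J(\pi)\ge\ldots$ is no longer justified. The same problem recurs in step~3: $D_{TV}\le\tfrac12\sqrt{\chi^2}$ is fine, but $\chi^2$ is again a supremum, and choosing a single centered $f$ only yields $\chi^2\ge \mathbb{E}_{d_{\pi'}}[f]-\tfrac14\mathrm{Var}_{d_\pi}[f]$, which is the wrong orientation. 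Your sentence ``any particular choice of $f$, combined with an upper bound on $\phi^\ast$, yields an upper bound on the TV'' is exactly backwards, and no rescaling of $f$ fixes this; you correctly flag the orientation worry at the end, but it is fatal rather than a detail.

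The paper takes a different route and thereby sidesteps this issue. It begins from a TRPO-style bound with a \emph{squared} total-variation penalty, $J(\pi')\ge J(\pi)+\mathbb{E}_{s\sim d_\pi,a\sim\pi'}[A^\pi(s,a)]-C\cdot\mathbb{E}_{s\sim d_\pi}[D_{TV}(d_{\pi'}\|d_\pi)^2]$, inserts the variational form of $D_{TV}$ inside the square, and crucially \emph{keeps} the $\max_f$ throughout rather than fixing a single $f$. The variance then appears not via a chi-squared conjugate but by rewriting the squared mean-difference $(\mathbb{E}_{d_{\pi'}}[f]-\mathbb{E}_{d_\pi}[f])^2$ as $\mathrm{Var}_{d_\pi}[f]$. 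If you want your argument to track the paper, drop the $\chi^2$ detour, work with $D_{TV}^2$ and the constant $C$ from the TRPO bound, and retain the supremum over $f$ all the way to the final line.
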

\begin{proof} For detailed proof, see appendix \ref{app:thm:policy_improvement_variance_bound}.
\end{proof}
\subsection{Lower Bound Objective with Variance Regularization}
In this section, we show that augmenting the policy optimization objective with a variance regularizer leads to a lower bound to the original optimization objectiven $J(\pi_{\theta})$. Following from \citep{POIS}, we first note that the variance of marginalized importance weighting with distribution corrections can be written in terms of the $\alpha-$Renyi divergence. Let p and q be two probability measures, such that the Renyi divergence is $\mathcal{F}_{\alpha} = \frac{1}{\alpha} \log \sum_{x} q(x) \Big( \frac{p(x)}{q(x)} \Big)^{\alpha}$. When $\alpha=1$, this leads to the well-known KL divergence $\mathcal{F}_{1}(p||q) = \mathcal{F}_{\text{KL}}(p||q)$.

Let us denote the state-action occupancy measures under $\pi$ and dataset $\mathcal{D}$ as $d_{\pi}$ and $d_{\mathcal{D}}$. The variance of state-action distribution ratios is $\text{Var}_{(s,a) \sim d_{\mathcal{D}}(s,a)} [ \omega_{\pi/\mathcal{D}}(s,a)  ]$. When $\alpha=2$ for the Renyi divergence, we have : 
\begin{equation}
    \text{Var}_{(s,a) \sim d_{\mathcal{D}}(s,a)} [ \omega_{\pi/\mathcal{D}}(s,a)] = \mathcal{F}_{2}( d_{\pi} || d_{\mathcal{D}} ) - 1
\end{equation}
Following from \citep{POIS}, and extending results from importance sampling $\rho$ to marginalized importance sampling $\omega_{\pi/\mathcal{D}}$, we provide the following result that bounds the variance of the approximated density ratio $\hat{\omega}_{\pi/\mathcal{D}}$ in terms of the Renyi divergence :
\begin{lemma} \label{lemma:lower_bound_lemma} Assuming that the rewards of the MDP are bounded by a finite constant, $||r||_{\infty} \leq R_{\text{max}}$. Given random variable samples $(s,a) \sim d_{\mathcal{D}}(s,a)$ from dataset $\mathcal{D}$, for any $N > 0$, the variance of marginalized importance weighting can be upper bounded as :
\begin{equation}
    \text{Var}_{(s,a) \sim d_{\mathcal{D}}(s,a)} \Big[  \hat{\omega}_{\pi/\mathcal{D}}(s,a) \Big] \leq \frac{1}{N} || r ||_{\infty}^{2} \mathcal{F}_{2} ( d_{\pi} || d_{\mathcal{D}} ) 
\end{equation}
\end{lemma}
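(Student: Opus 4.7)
The plan is to interpret $\hat{\omega}_{\pi/\mathcal{D}}$ in the statement as the $N$-sample empirical importance-weighted return estimator
\begin{equation*}
\hat{J}_N \;=\; \tfrac{1}{N}\sum_{i=1}^N \omega_{\pi/\mathcal{D}}(s_i,a_i)\, r(s_i,a_i), \qquad (s_i,a_i) \overset{\iid}{\sim} d_{\mathcal{D}},
\end{equation*}
which is the only reading consistent with both the $1/N$ prefactor and the $\|r\|_\infty^2$ factor on the right, and which recovers the standard POIS second-moment bound \citep{POIS} specialised to marginalized importance sampling (as the remark just before the lemma anticipates). The first step is then the elementary i.i.d.\ variance-of-sample-mean identity $\text{Var}[\hat{J}_N] = \tfrac{1}{N}\text{Var}_{(s,a)\sim d_{\mathcal{D}}}[\omega_{\pi/\mathcal{D}}(s,a)\, r(s,a)]$.

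The second step is to bound the variance by the second moment and pull out the reward magnitude:
\begin{equation*}
\text{Var}[\hat{J}_N] \;\leq\; \tfrac{1}{N}\, \mathbb{E}_{(s,a)\sim d_{\mathcal{D}}}\!\bigl[\omega_{\pi/\mathcal{D}}(s,a)^2\, r(s,a)^2\bigr] \;\leq\; \tfrac{\|r\|_\infty^2}{N}\, \mathbb{E}_{(s,a)\sim d_{\mathcal{D}}}\bigl[\omega_{\pi/\mathcal{D}}(s,a)^2\bigr],
\end{equation*}
using $\text{Var}[X]\le \mathbb{E}[X^2]$ and $|r(s,a)| \le \|r\|_\infty \le R_{\max}$. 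The third step identifies $\mathbb{E}_{d_{\mathcal{D}}}[\omega_{\pi/\mathcal{D}}^2]$ with $\mathcal{F}_2(d_\pi \| d_{\mathcal{D}})$ in the exponentiated-Renyi convention used in the paragraph immediately above the lemma (the one in which $\text{Var}_{d_{\mathcal{D}}}[\omega_{\pi/\mathcal{D}}] = \mathcal{F}_2(d_\pi \| d_{\mathcal{D}}) - 1$): substituting $\omega_{\pi/\mathcal{D}} = d_\pi/d_{\mathcal{D}}$ gives $\mathbb{E}_{d_{\mathcal{D}}}[(d_\pi/d_{\mathcal{D}})^2] = \mathcal{F}_2(d_\pi \| d_{\mathcal{D}})$ directly. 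Chaining the three displayed inequalities produces the stated bound.

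The part I expect to be most delicate is not the algebra but the notational reconciliation. The outer ``$(s,a)\sim d_{\mathcal{D}}$'' subscript in the lemma statement suggests a single-sample variance, yet the $1/N$ prefactor forces $\hat{\omega}_{\pi/\mathcal{D}}$ to be an $N$-sample empirical object, and the presence of $\|r\|_\infty^2$ forces that object to carry a reward factor. One therefore has to commit to reading $\hat{\omega}_{\pi/\mathcal{D}}$ as the Monte-Carlo estimate of $\mathbb{E}_{d_\pi}[r]$ under marginalized IS (rather than a raw density-ratio estimate), and to reading $\mathcal{F}_2$ in the non-log exponentiated form compatible with the preceding identity; with both conventions fixed, the three-line chain above closes the proof as a straightforward extension of the POIS Renyi-variance argument from $\rho$ to $\omega_{\pi/\mathcal{D}}$.
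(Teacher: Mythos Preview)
Your proposal is correct and matches the paper's own argument essentially line for line: the paper also (i) invokes the i.i.d.\ assumption to reduce to $\tfrac{1}{N}\text{Var}_{d_{\mathcal D}}[\omega_{\pi/\mathcal D}\, r]$, (ii) bounds variance by the second moment and extracts $\|r\|_\infty^2$, and (iii) identifies the remaining $\mathbb{E}_{d_{\mathcal D}}[\omega_{\pi/\mathcal D}^2]$ with $\mathcal{F}_2(d_\pi\|d_{\mathcal D})$. Your explicit discussion of the notational reconciliation (reading $\hat\omega_{\pi/\mathcal D}$ as the $N$-sample weighted-reward estimator and $\mathcal{F}_2$ in exponentiated form) is in fact more careful than the paper, which leaves these conventions implicit.
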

See appendix \ref{app:sec-lower_bound_lemma} for more details. 
Following this, our goal is to derive a lower bound objective to our off-policy optimization problem. Concentration inequalities have previously been studied for both off-policy evaluation \citep{HCOPE} and optimization \citep{Thomas_Improvement}. In our case, we can adapt the concentration bound derived from Cantelli's ineqaulity and derive the following result based on variance of marginalized importance sampling. Under state-action distribution corrections, we have the following lower bound to the off-policy policy optimization objective with stationary state-action distribution corrections
\begin{theorem} \label{thm:pdl_variance_bounds} Given state-action occupancy measures $d_{\pi}$ and $d_{\mathcal{D}}$, and assuming bounded reward functions, for any $0 < \delta \leq 1$ and $N > 0$, we have with probability at least $1 - \delta$ that :  
\begin{equation}
\label{eq:lower_bound_obj_theorem}
     J(\pi) \geq \mathbb{E}_{(s,a) \sim d_{\mathcal{D}}(s,a)} \Big[  \omega_{\pi/\mathcal{D}}(s,a) \cdot r(s,a) \Big] - \sqrt{ \frac{1 - \delta}{\delta}  \text{Var}_{(s,a) \sim d_{\mathcal{D}}(s,a)} [  \omega_{\pi/\mathcal{D}}(s,a) \cdot  r(s,a) ]  } 
\end{equation}
\end{theorem}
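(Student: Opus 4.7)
The plan is to prove the bound through a direct application of Cantelli's (one-sided Chebyshev) inequality to the per-sample random variable $X(s,a) = \omega_{\pi/\mathcal{D}}(s,a)\, r(s,a)$ with $(s,a) \sim d_{\mathcal{D}}$. The first step is the unbiasedness observation: by the change-of-measure identity underlying the stationary distribution ratio,
\begin{equation*}
    \mathbb{E}_{(s,a) \sim d_{\mathcal{D}}}\!\left[\,\omega_{\pi/\mathcal{D}}(s,a)\, r(s,a)\,\right] = \mathbb{E}_{(s,a) \sim d_{\pi}}[\,r(s,a)\,] = J(\pi),
\end{equation*}
so $J(\pi)$ is exactly the mean of the marginalized-IS estimator that appears on the right-hand side of \eqref{eq:lower_bound_obj_theorem}.

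Next I would apply Cantelli's inequality to the empirical $N$-sample estimator $\hat{J}(\pi) = \tfrac{1}{N}\sum_{i=1}^{N}\omega_{\pi/\mathcal{D}}(s_i, a_i)\, r(s_i, a_i)$, whose mean is $J(\pi)$ and whose variance is $\sigma^{2} = \Var_{d_{\mathcal{D}}}[\,\omega_{\pi/\mathcal{D}}\cdot r\,]/N$ (finiteness being guaranteed by $\|r\|_{\infty} \leq R_{\max}$, as already assumed in Lemma~\ref{lemma:lower_bound_lemma}). Cantelli's inequality gives, for any $\lambda > 0$,
\begin{equation*}
    \PR\!\left(\hat{J}(\pi) - J(\pi) \geq \lambda\right) \leq \frac{\sigma^{2}}{\sigma^{2} + \lambda^{2}}.
\end{equation*}
Choosing $\lambda = \sqrt{\tfrac{1-\delta}{\delta}\,\sigma^{2}}$ makes the right-hand side equal exactly $\delta$, so with probability at least $1-\delta$ we obtain $\hat{J}(\pi) - J(\pi) \leq \lambda$, and rearranging yields
\begin{equation*}
    J(\pi) \geq \hat{J}(\pi) - \sqrt{\tfrac{1-\delta}{\delta}\,\Var_{d_{\mathcal{D}}}[\,\omega_{\pi/\mathcal{D}}\cdot r\,]/N},
\end{equation*}
which is exactly the inequality in \eqref{eq:lower_bound_obj_theorem} once the $1/N$ scaling is absorbed into the leading estimator.

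The main obstacle is interpretive rather than technical: as literally written, the leading term $\mathbb{E}_{d_{\mathcal{D}}}[\omega_{\pi/\mathcal{D}}\cdot r]$ equals $J(\pi)$ exactly, which would make the bound trivially true with probability $1$; the intended reading is that this expectation denotes the empirical estimator $\hat{J}(\pi)$ formed from the $N$ samples declared in the hypothesis, with the $1/N$ scaling of the variance left implicit. A natural closing step is to invoke Lemma~\ref{lemma:lower_bound_lemma} to further bound $\Var_{d_{\mathcal{D}}}[\omega_{\pi/\mathcal{D}}\cdot r]$ by $R_{\max}^{2}\mathcal{F}_{2}(d_{\pi}\|d_{\mathcal{D}})/N$, converting the penalty into a Renyi-divergence form in the spirit of \citep{POIS} and exposing the connection between the variance regularizer in \eqref{eq:overall_objective_variance_duality} and a genuine pessimistic lower bound on the offline objective.
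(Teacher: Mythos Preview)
Your proposal is correct and follows essentially the same route as the paper: apply Cantelli's one-sided inequality to the marginalized-IS estimator, solve for $\lambda$ so that the tail probability equals $\delta$, rearrange, and optionally replace the variance by the Renyi-divergence bound from Lemma~\ref{lemma:lower_bound_lemma}. Your treatment is in fact more explicit than the paper's own proof about the role of the $N$-sample empirical estimator and the attendant $1/N$ scaling; the interpretive caveat you raise (that the population expectation literally equals $J(\pi)$, so the leading term must be read as the empirical mean) is exactly the ambiguity present in the paper's statement and proof, which also leaves the $1/N$ factor implicit in the variance term until the final Renyi-divergence display.
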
 
Equation \ref{eq:lower_bound_obj_theorem} shows the lower bound policy optimization objective under risk-sensitive variance constraints. The key to our derivation in equation \ref{eq:lower_bound_obj_theorem} of theorem \ref{thm:pdl_variance_bounds} shows that given off-policy batch data collected with behaviour policy $\mu(a|s)$, we are indeed optimizing a lower bound to the policy optimization objective, which is regularized with a variance term to minimize the variance in batch off-policy learning.

\vspace{-0.5em}
\section{Experimental Results on Benchmark Offline Control Tasks}
\vspace{-0.2em}
\label{sec:exp_settings}
\textit{Experimental Setup :} We demonstrate the significance of variance regularizer on a range of continuous control domains ~\citep{todorov2012mujoco} based on fixed offline datasets from \citep{D4RL}, which is a standard benchmark for offline algorithms. 
To demonstrate the significance of our variance regularizer \algoName, we mainly use it on top of the BCQ algorithm and compare it with other existing baselines, using the benchmark D4RL \citep{D4RL} offline datasets for different tasks and off-policy distributions. Experimental results are given in table \ref{tab:my_label}
\begin{table}[t!]
    \centering

        \resizebox{\columnwidth}{!}{%
        \begin{tabular}{|l|l||c | c| c| c| c| c|}
        \hline
        \textbf{Domain} & \textbf{Task Name} & \textbf{BCQ+\algoName} & \textbf{BCQ} & \textbf{BEAR} & \textbf{BRAC-p} 	&	\textbf{aDICE}	&	\textbf{SAC-off} \\
        \hline \hline
        \multirow{15}{*}{Gym} 
            & halfcheetah-random & 0.00 & 0.00 & 25.1 & 24.1 &		-0.3	&	\textbf{30.5}\\
            & hopper-random & 9.51 & 9.65 & \textbf{11.4} & 11 & 0.9	&	11.3\\
            & walker-random & 5.16 & 0.48 & \textbf{7.3} & -0.2 & 	0.5	&	4.1\\
            & halfcheetah-medium & 35.6 & 34.9 & 41.7 & \textbf{43.8} &	-2.2	&	-4.3\\
            & hopper-medium & \textbf{71.24} & 57.76  & 52.1 & 32.7 &	1.2	&	0.9\\
            & walker-medium & 33.90 & 27.13 & 59.1 & \textbf{77.5} &	0.3	&	0.8\\
            & halfcheetah-expert & \textbf{100.02} & 97.99 & - & - &	-	&	-\\
            & hopper-expert & \textbf{108.41} & 98.36 & - & - 	&	-	&	-\\
            & walker-expert & 71.77 & \textbf{72.93} & - & -&	-	&	-\\
            & halfcheetah-medium-expert & \textbf{59.52} & 54.12 & 53.4 & 44.2	& -0.8	& 1.8\\
            & hopper-medium-expert & 44.68 & 37.20 & \textbf{96.3} & 1.9 & 1.1 & -0.1\\
            & walker-medium-expert & 34.53 & 29.00 & 40.1 & \textbf{76.9} & 0.4 & 1.6\\
            & halfcheetah-mixed & \textbf{29.95} & 29.91 & - & - & - & -\\
            & hopper-mixed & \textbf{16.36} & 10.88 & - & -	&	-  & - \\
            & walker-mixed & \textbf{14.74} & 10.23 & - & -	&	-  & - \\
        \hline
        \multirow{3}{*}{FrankaKitchen}
            & kitchen-complete & 4.48  & 3.38 & 0 & 0  & 0 & \textbf{15}\\
            & kitchen-partial & \textbf{25.65} & 19.11 & 13.1 & 0  & 0 & 0\\ 
            & kitchen-mixed & 30.59 & 23.55 & \textbf{47.2} & 0 & 2.5 & 2.5\\
        \hline
    \end{tabular}}
    \caption{   The results on D4RL tasks compare BCQ \citep{BCQ} with and without \algoName, bootstrapping error reduction (BEAR) \citep{BEAR}, behavior-regularized actor critic with policy (BRAC-p) \citep{BRAC}, AlgeaDICE (aDICE) \citep{AlgaeDICE} and offline SAC (SAC-off) \citep{SAC}. 
    The results presented are the normalized returns on the task as per \citet{D4RL} (see Table 3 in \citet{D4RL} for the unnormalized scores on each task).
    We see that in most tasks we are able to significant gains using \algoName. 
    Our algorithm can be applied to any policy optimization baseline algorithm that trains the policy by maximizing the expected rewards. 
    Unlike BCQ, BEAR \citep{BEAR} does not have the same objective, as they train the policy using and MMD objective.}
    \label{tab:my_label}
\end{table}

\textbf{Performance on Optimal and Medium Quality Datasets : } We first evaluate the performance of \algoName when the dataset consists of optimal and mediocre logging policy data. We collected the dataset using a fully (\textit{expert}) or partially (\textit{medium}) trained SAC policy. We build our algorithm \algoName on top of BCQ, denoted by $\text{BCQ + VAR}$. Note that the \algoName algorithm can be agnostic to the behaviour policy too for computing the distribution ratio \citep{DualDICE} and the variance. We observe that even though performance is marginally improved with \algoName under expert settings, since the demonstrations are optimal itself, we can achieve significant improvements under medium dataset regime. This is because \algoName plays a more important role when there is larger variance due to distribution mismatch between the data logging and target policy distributions. Experimental results are shown in first two columns of figure \ref{fig:mainfig_no_noise}. 

\textbf{Performance on Random and Mixed Datasets :} We then evaluate the performance on \textit{random} datasets, i.e, the worst-case setup when the data logging policy is a random policy, as shown in the last two columns of figure \ref{fig:mainfig_no_noise}. As expected, we observe no improvements at all, and even existing baselines such as BCQ \citep{BCQ} can work poorly under random dataset setting. When we collect data using a mixture of random and mediocre policy, denoted by \textit{mixed}, the performance is again improved for \algoName on top of BCQ, especially for the Hopper and Walker control domains. We provide additional experimental results and ablation studies in appendix \ref{app:sec_experiment_ablation_studies}.

\begin{figure*}[h]
\centering
  \hspace*{-0.9cm}
        \begin{subfigure}[b]{0.26\textwidth}
              \centering
    \includegraphics[width=0.9\textwidth]{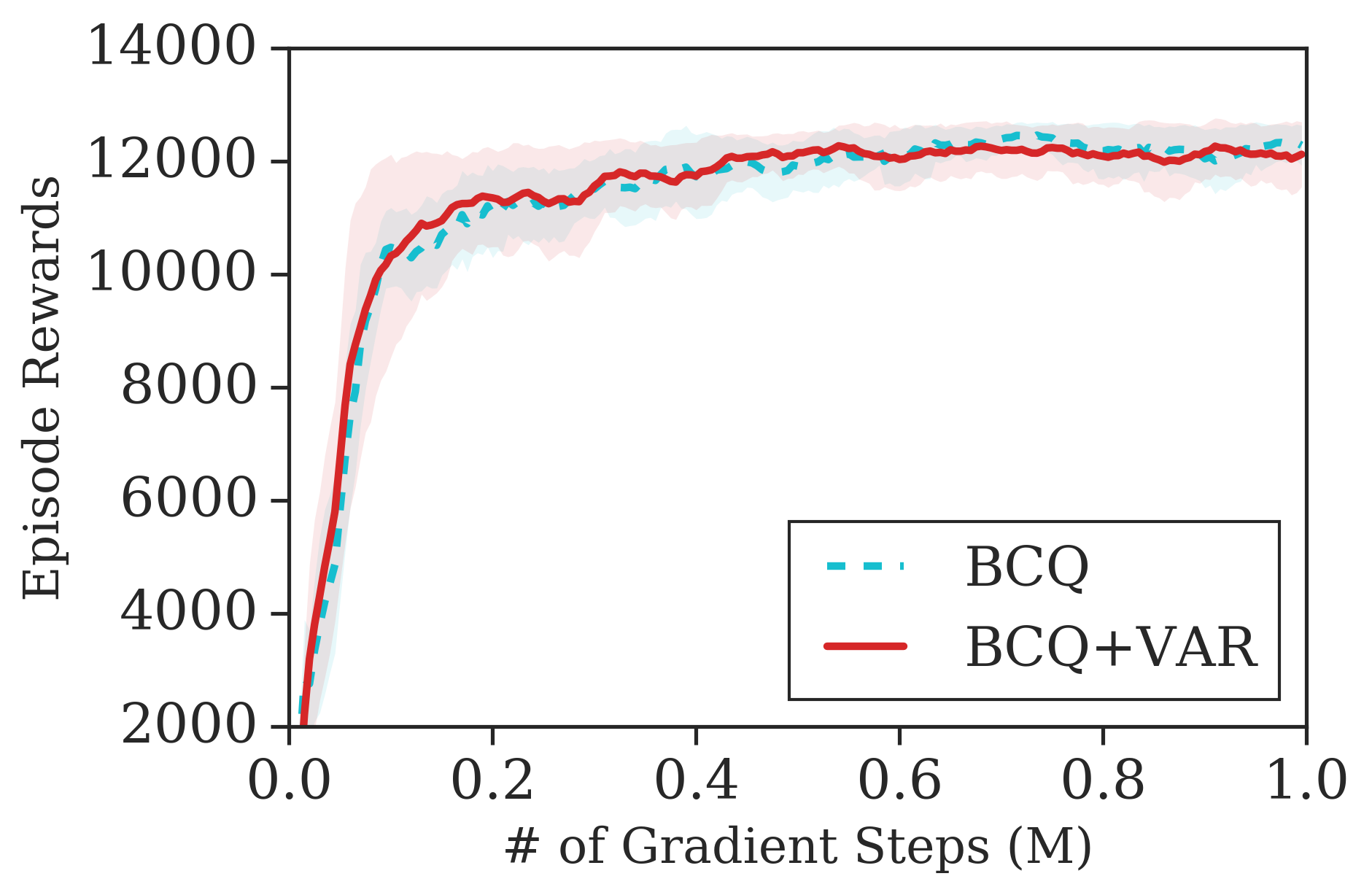}
    \caption{Cheetah Expert}
    \label{fig:reachidea}
        \end{subfigure}%
        \hspace*{-0.4cm}
          \begin{subfigure}[b]{0.26\textwidth}
              \centering
    \includegraphics[width=0.9\textwidth]{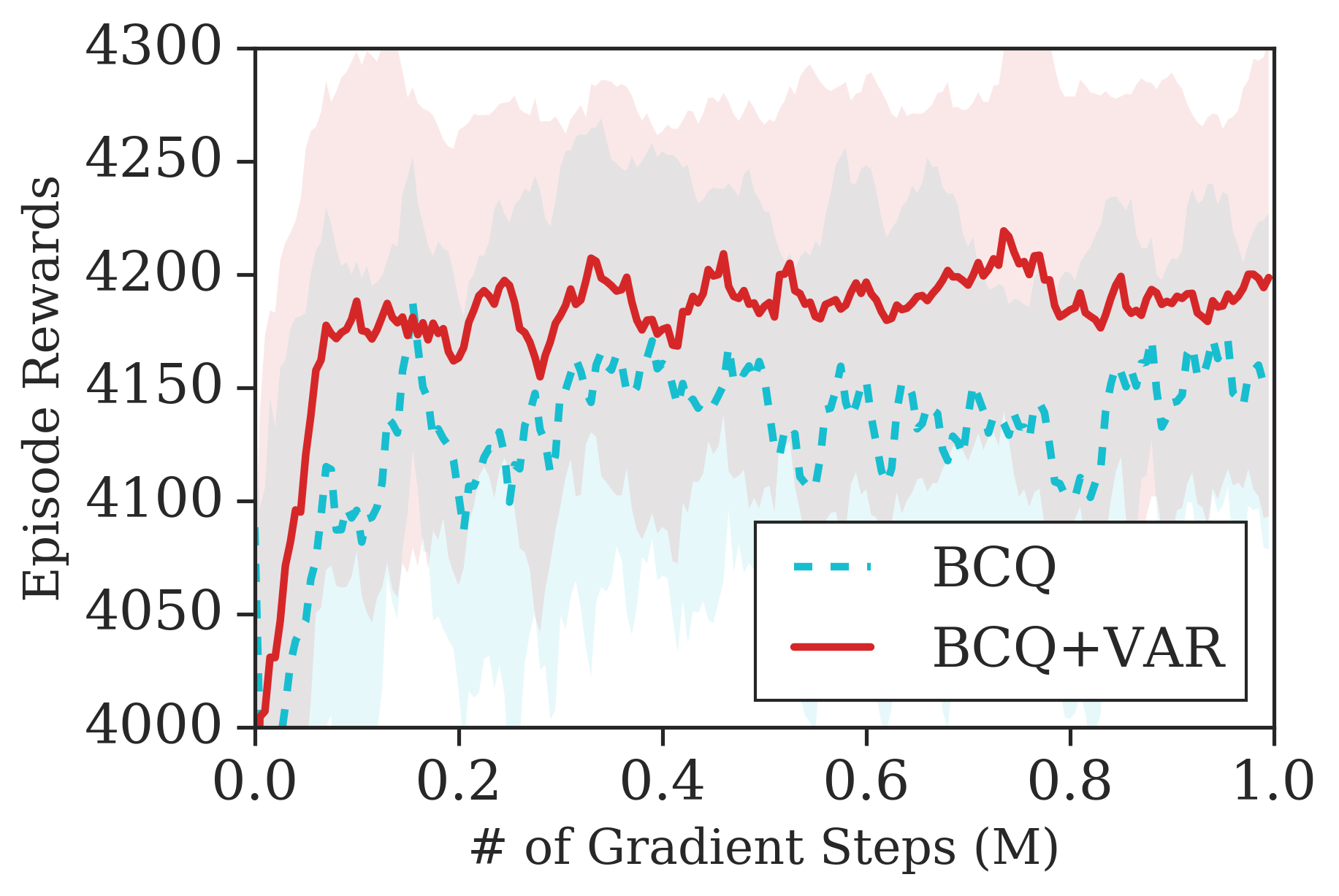}
    \caption{Cheetah Medium}
    \label{fig:reachidea}
        \end{subfigure}%
          \hspace*{-0.4cm}
          \begin{subfigure}[b]{0.26\textwidth}
              \centering
    \includegraphics[width=0.9\textwidth]{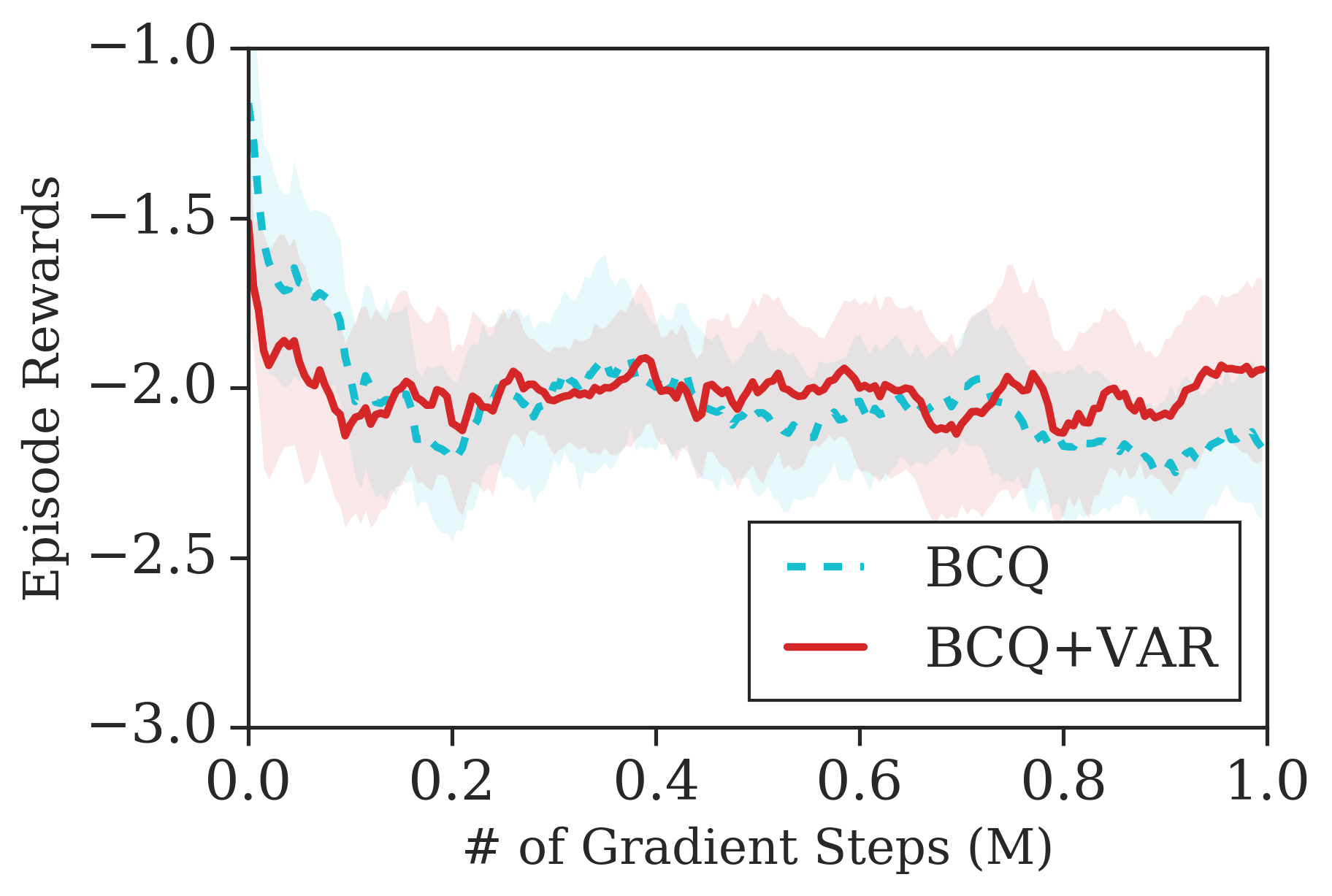}
    \caption{Cheetah Random}
    \label{fig:reachidea}
        \end{subfigure}%
         \hspace*{-0.4cm}
          \begin{subfigure}[b]{0.26\textwidth}
              \centering
    \includegraphics[width=0.9\textwidth]{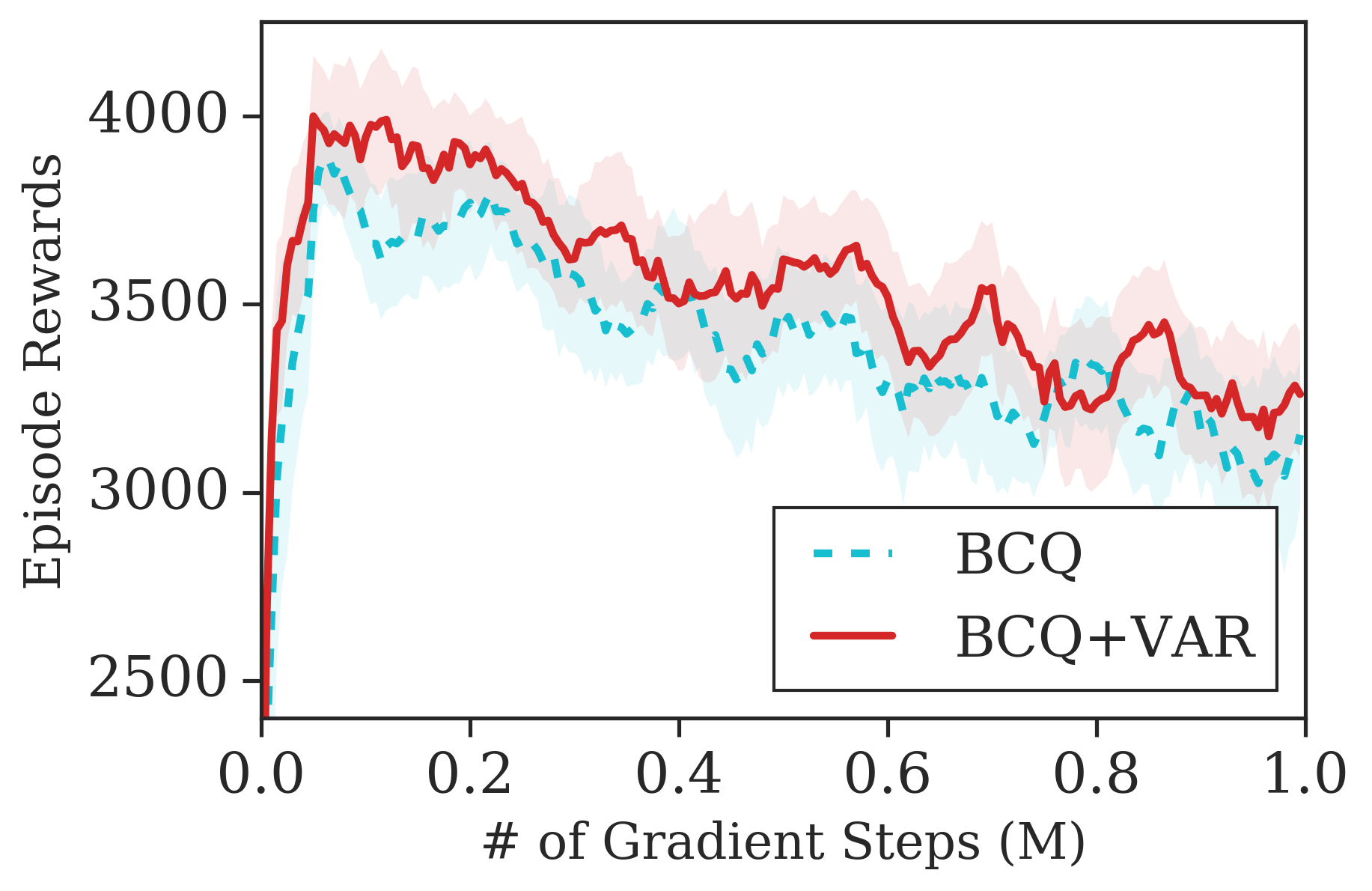}
    \caption{Cheetah Mixed}
    \label{fig:reachidea}
        \end{subfigure}%
        
         \hspace*{-0.9cm}
        \begin{subfigure}[b]{0.26\textwidth}
              \centering
    \includegraphics[width=0.9\textwidth]{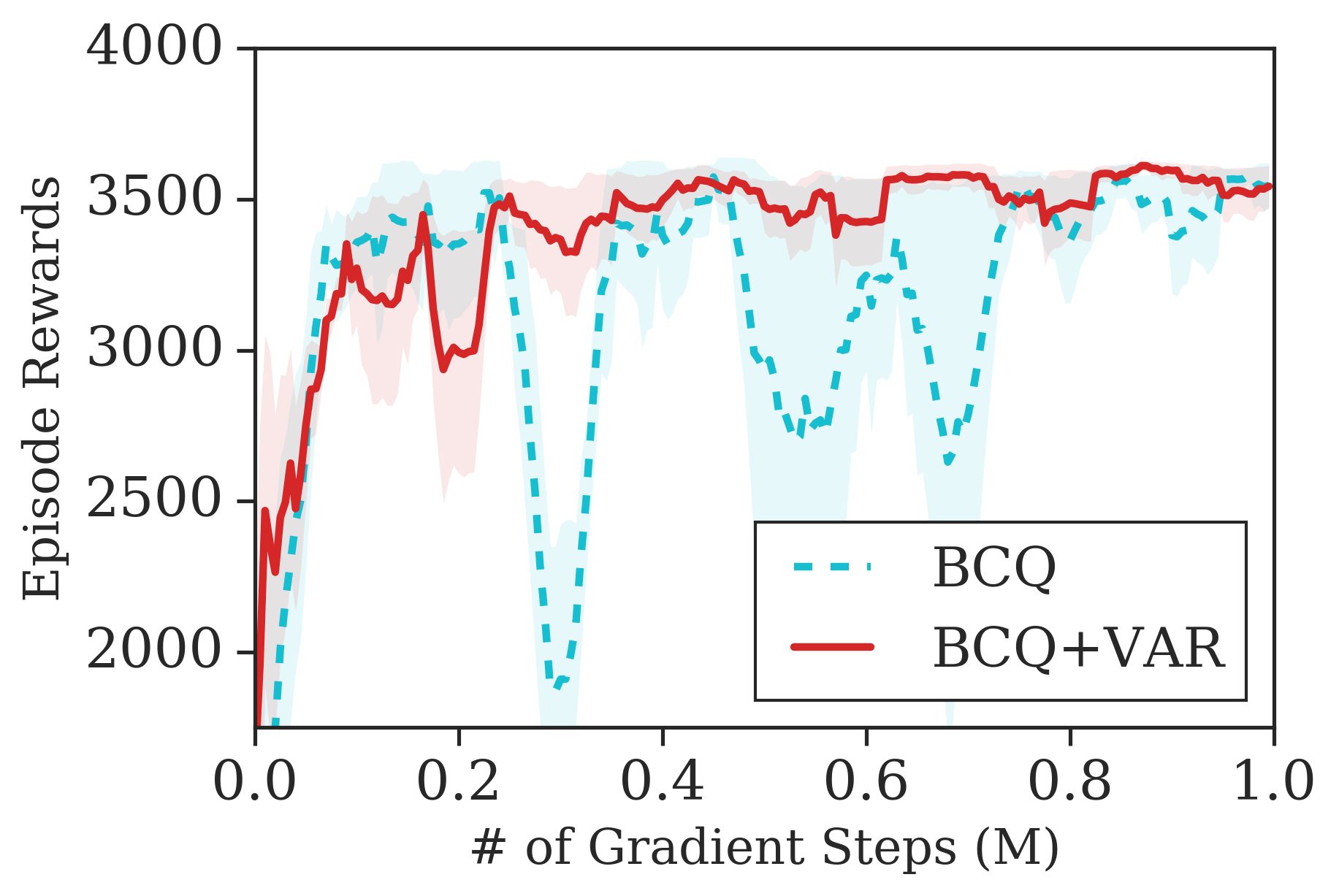}
    \caption{Hopper Expert}
    \label{fig:reachidea}
        \end{subfigure}%
        \hspace*{-0.4cm}
          \begin{subfigure}[b]{0.26\textwidth}
              \centering
    \includegraphics[width=0.9\textwidth]{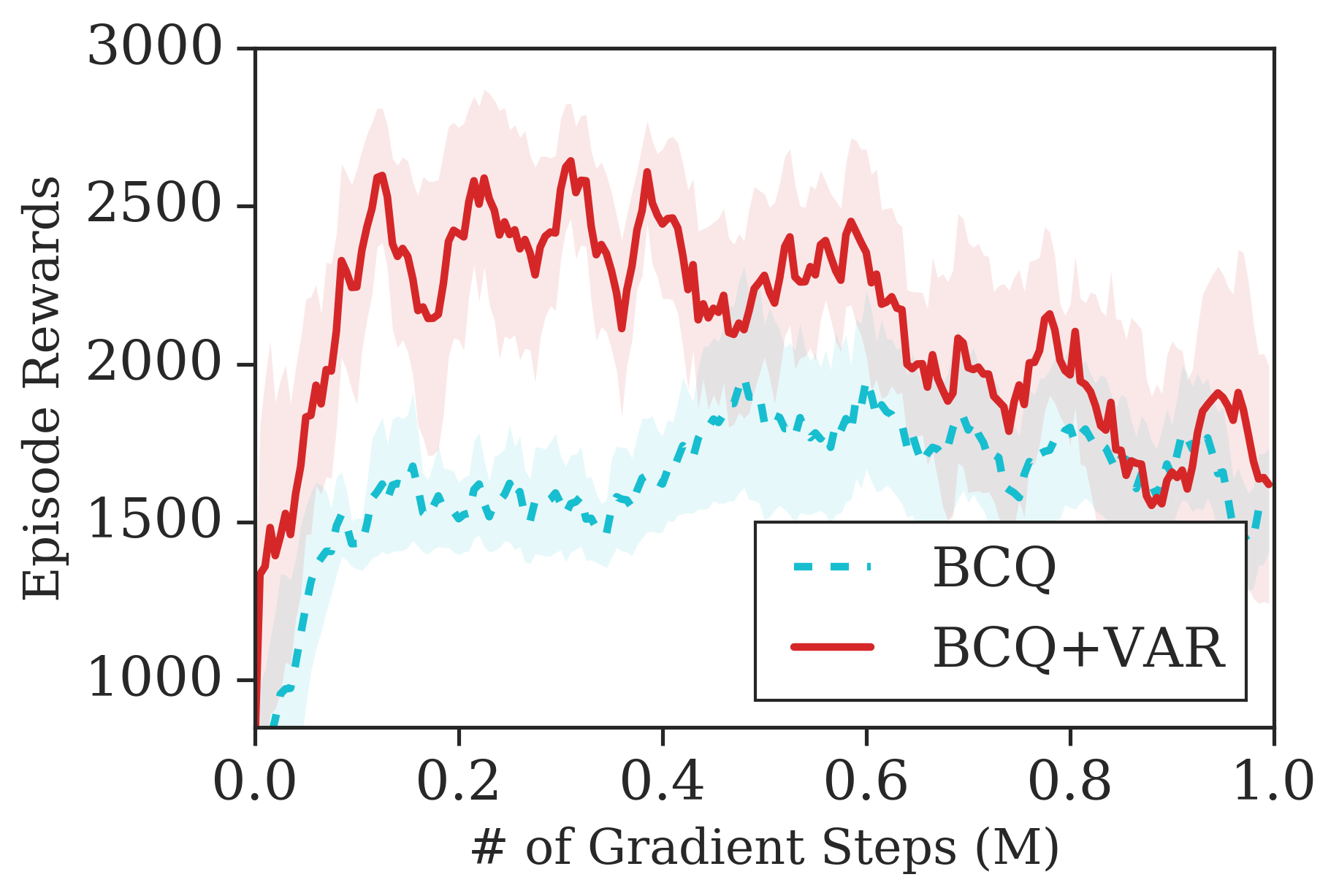}
    \caption{Hopper Medium}
    \label{fig:reachidea}
        \end{subfigure}%
        \hspace*{-0.4cm}
          \begin{subfigure}[b]{0.26\textwidth}
              \centering
    \includegraphics[width=0.9\textwidth]{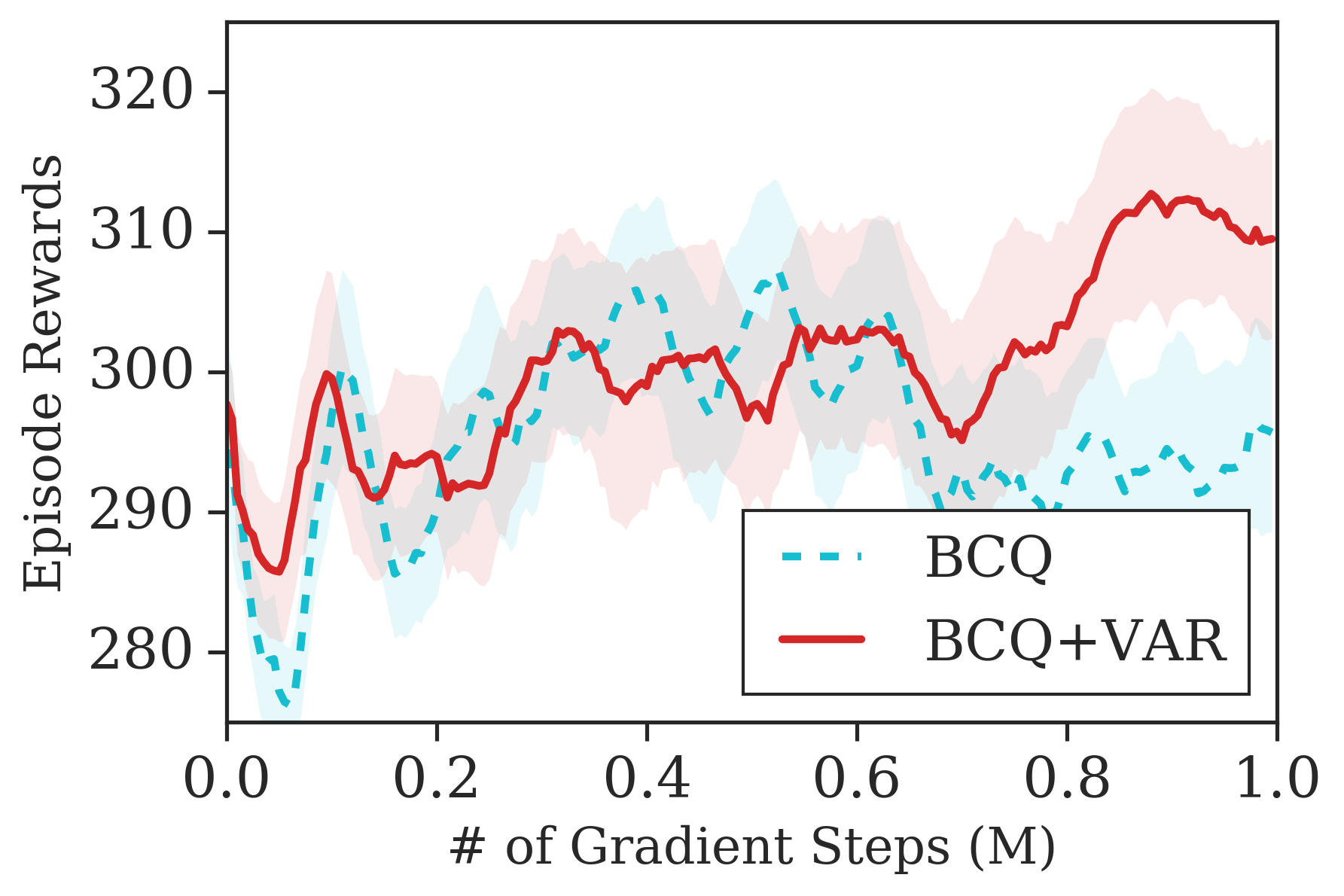}
    \caption{Hopper Random}
    \label{fig:reachidea}
        \end{subfigure}%
          \hspace*{-0.4cm}
          \begin{subfigure}[b]{0.26\textwidth}
              \centering
    \includegraphics[width=0.9\textwidth]{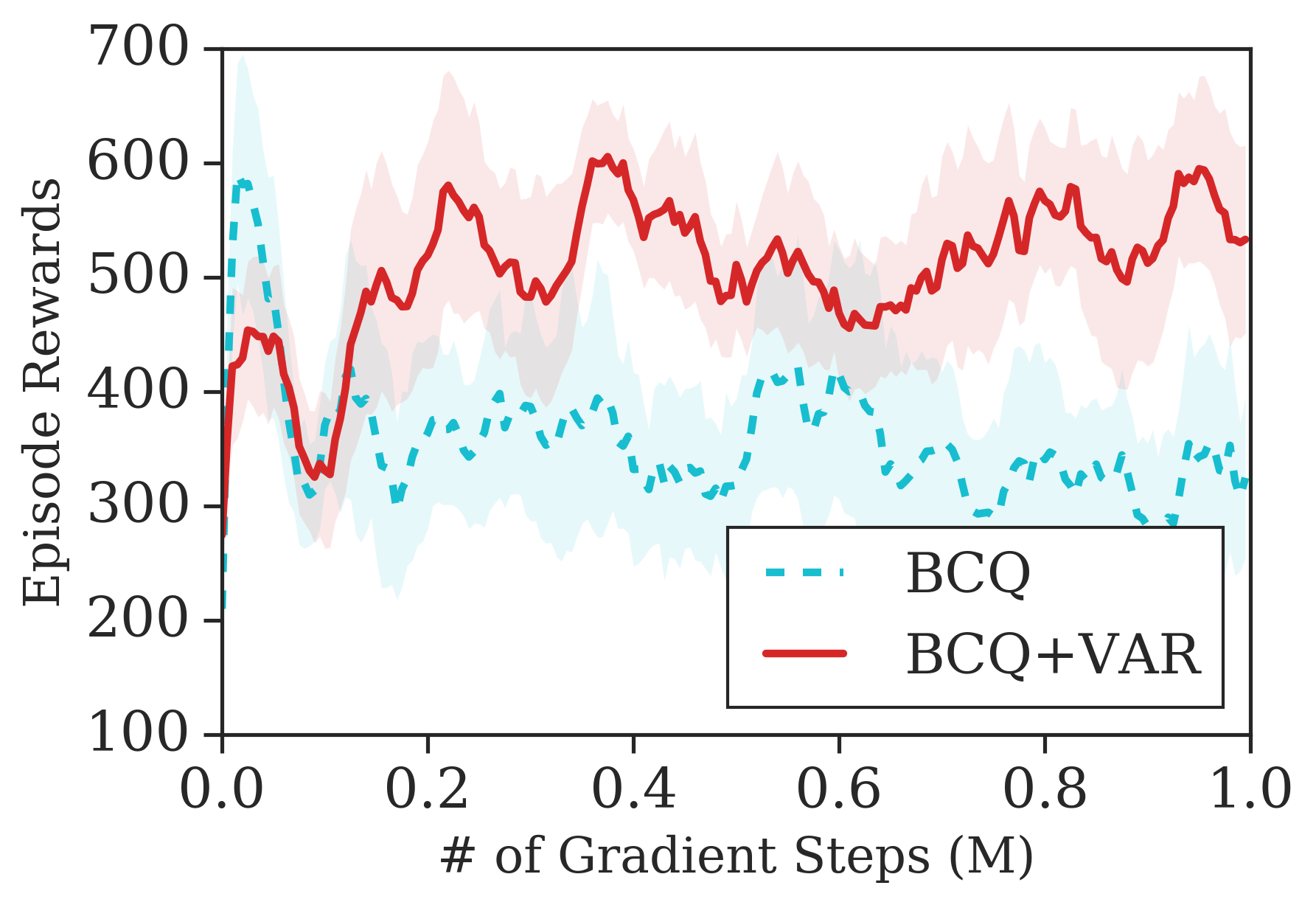}
    \caption{Hopper Mixed}
    \label{fig:reachidea}
        \end{subfigure}%
        
          \hspace*{-0.9cm}
        \begin{subfigure}[b]{0.26\textwidth}
              \centering
    \includegraphics[width=0.9\textwidth]{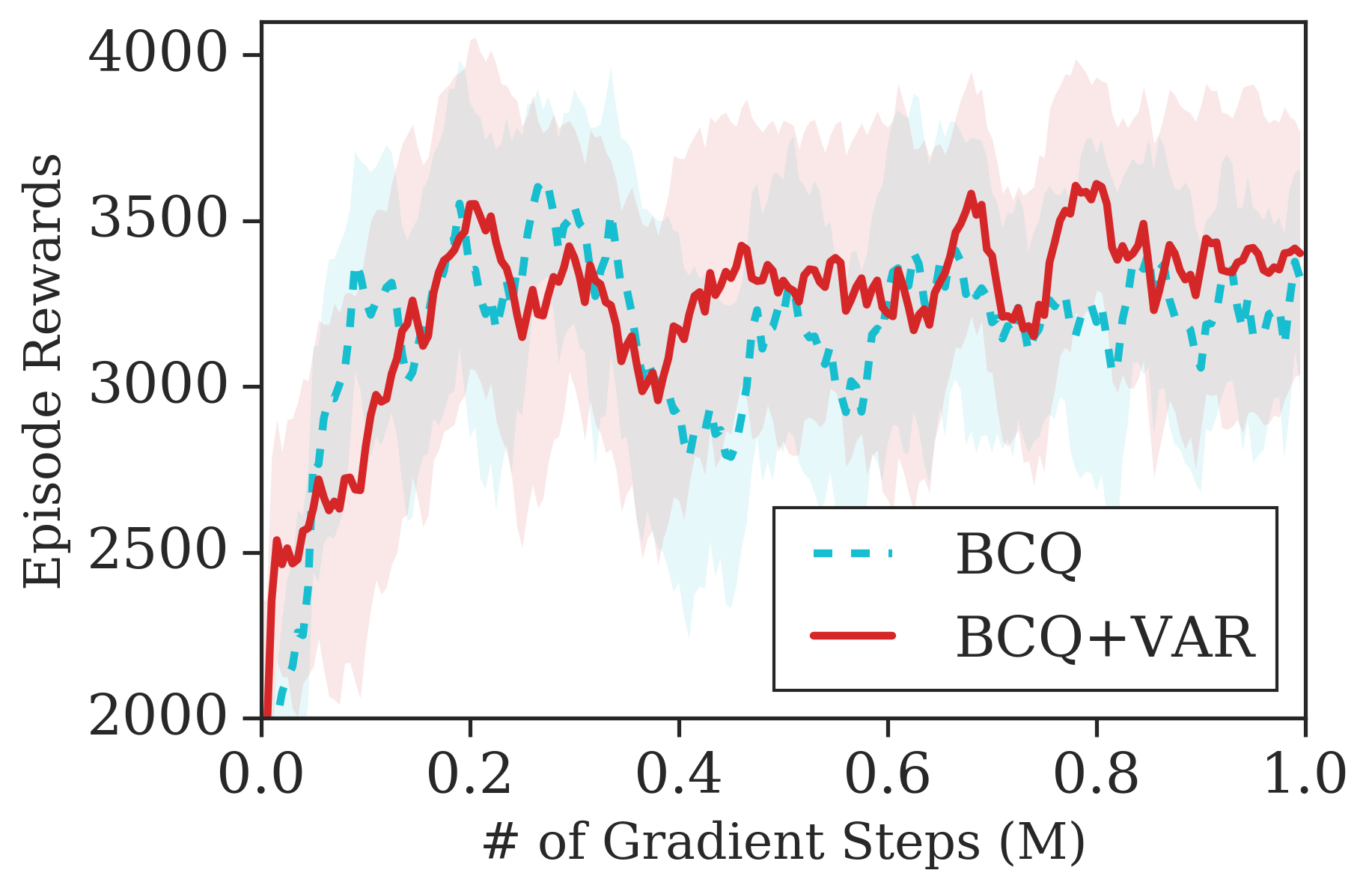}
    \caption{Walker Expert}
    \label{fig:reachidea}
        \end{subfigure}%
        \hspace*{-0.4cm}
          \begin{subfigure}[b]{0.26\textwidth}
              \centering
    \includegraphics[width=0.9\textwidth]{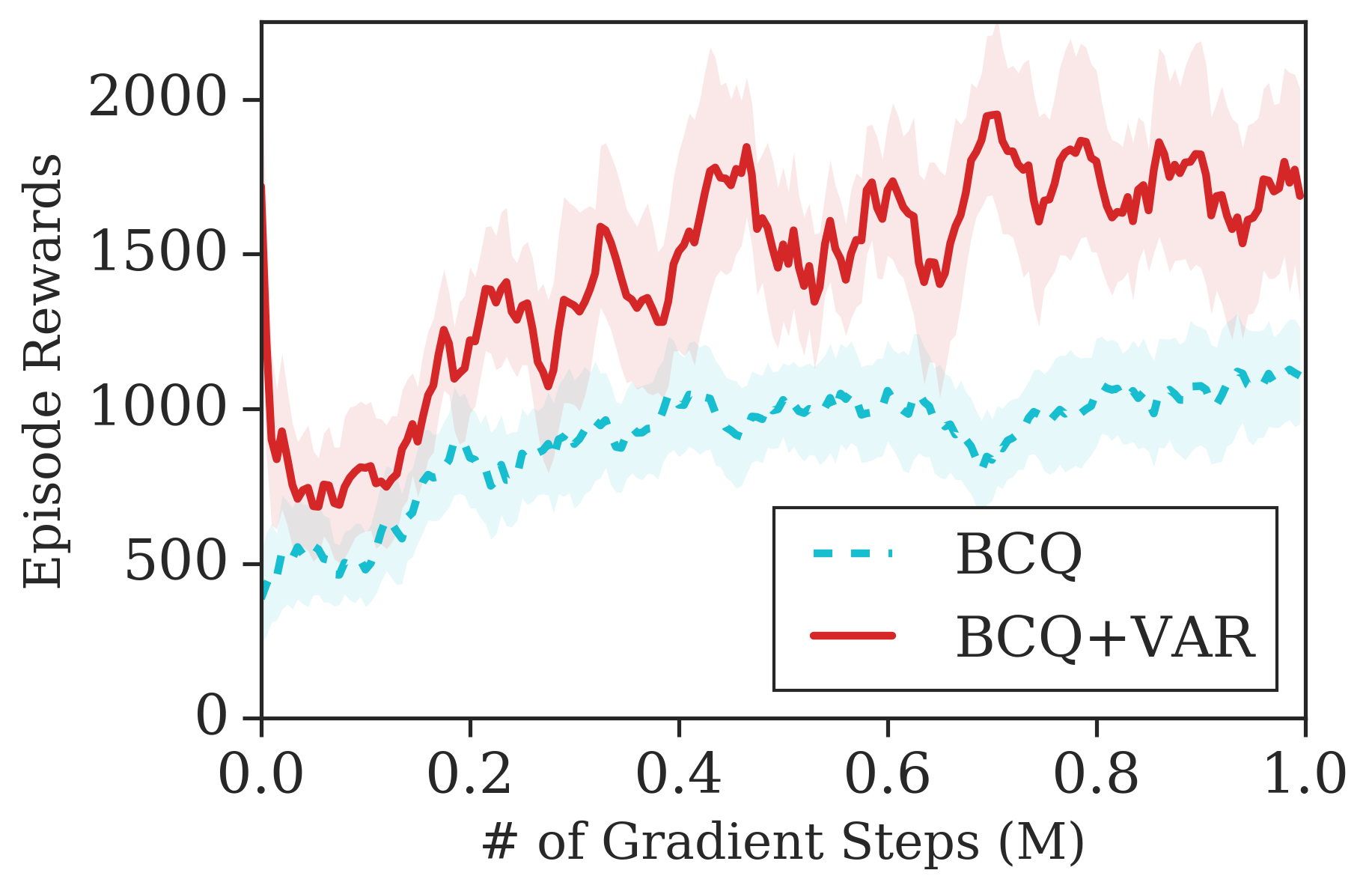}
    \caption{Walker Medium}
    \label{fig:reachidea}
        \end{subfigure}%
         \hspace*{-0.4cm}
          \begin{subfigure}[b]{0.26\textwidth}
              \centering
    \includegraphics[width=0.9\textwidth]{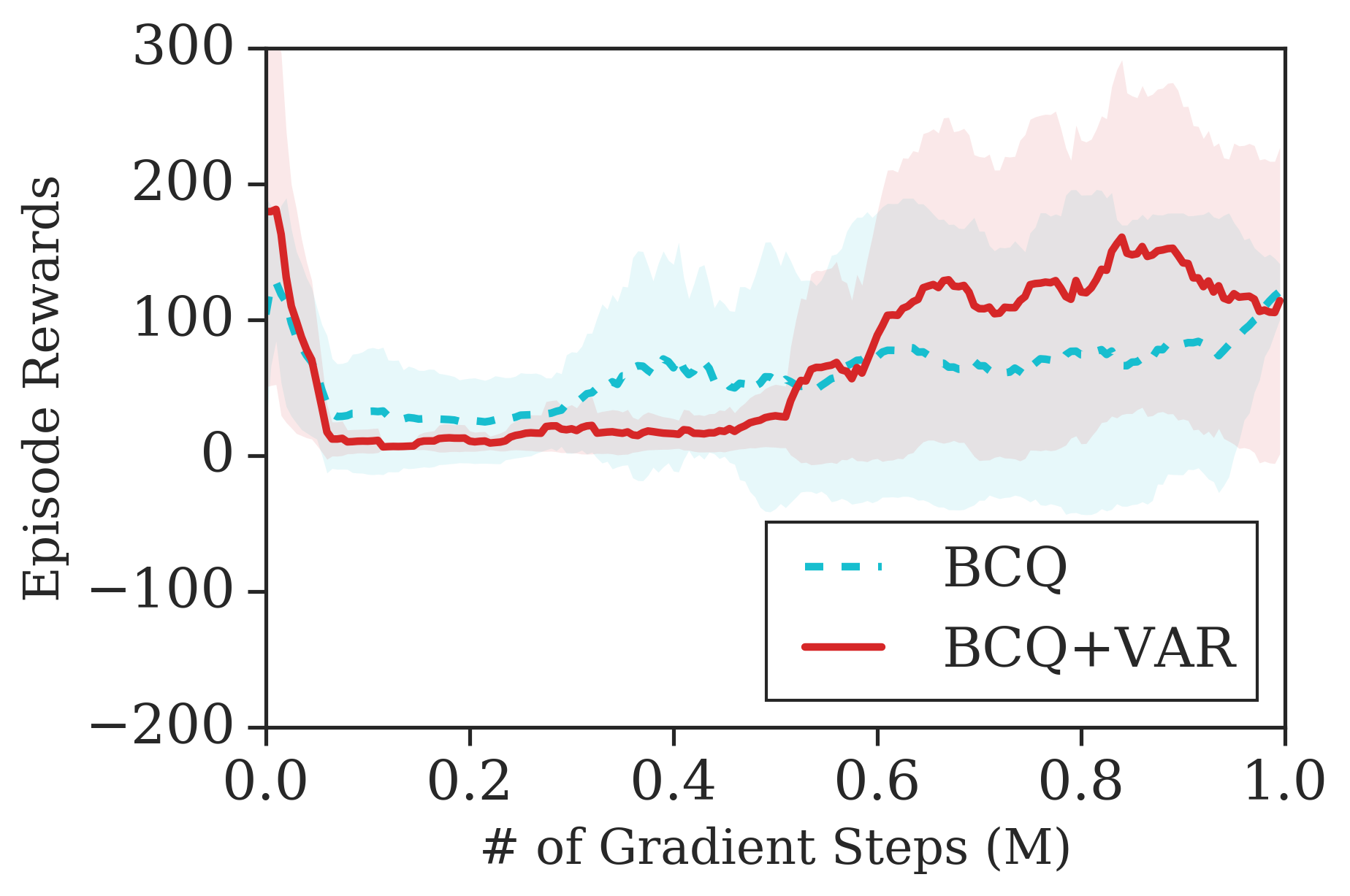}
    \caption{Walker Random}
    \label{fig:reachidea}
        \end{subfigure}%
          \hspace*{-0.4cm}
          \begin{subfigure}[b]{0.26\textwidth}
              \centering
    \includegraphics[width=0.85\textwidth]{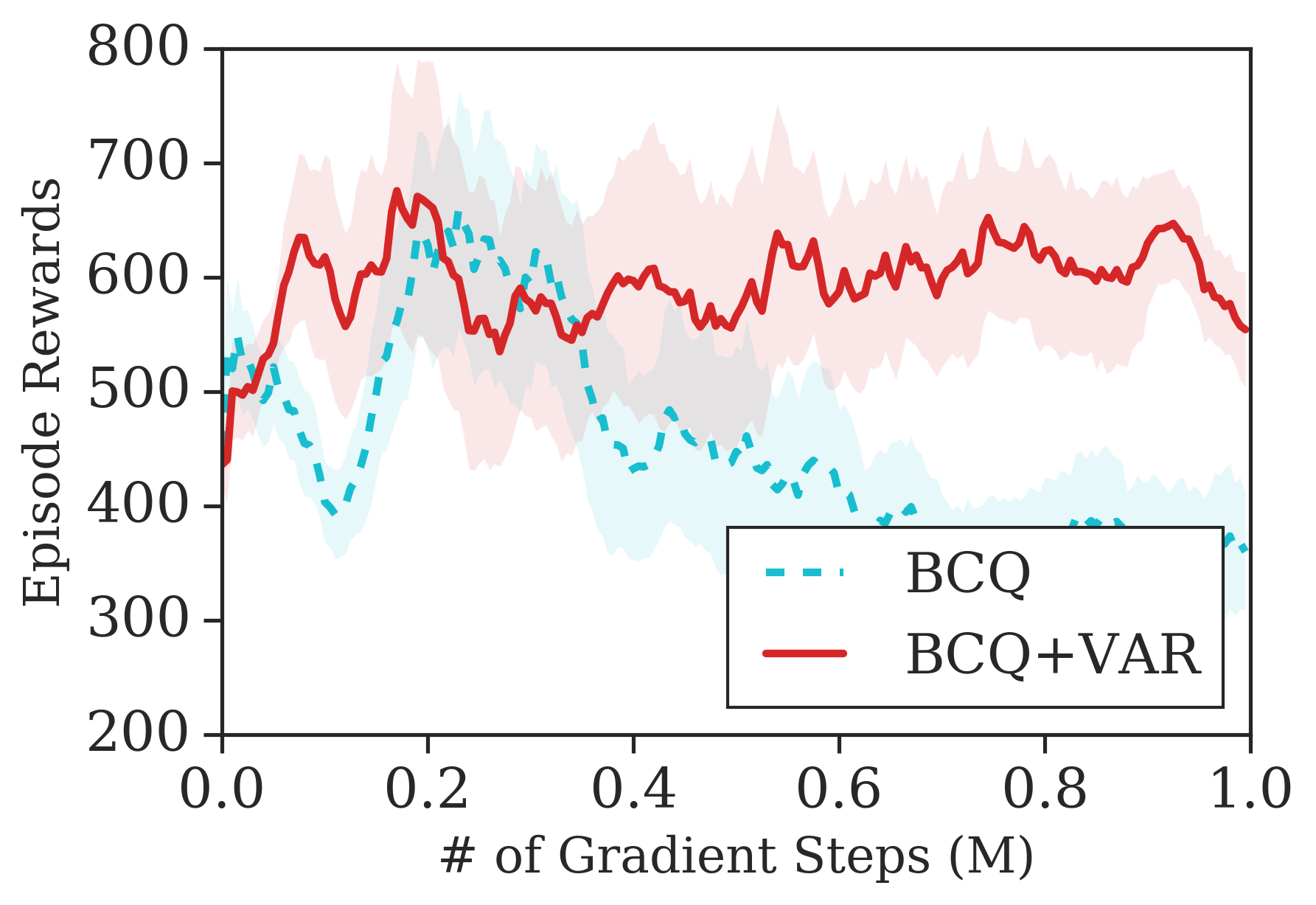}
    \caption{Walker Mixed}
    \label{fig:reachidea}
        \end{subfigure}%
     \caption{Evaluation of the proposed approach and the baseline BCQ \citep{BCQ} on a suite of three OpenAI Gym environments. Details about the type of offline dataset used for training, namely \textit{random}, \textit{medium}, \textit{mixed}, and \textit{expert} are included in Appendix. Results are averaged over 5 random seeds \citep{RLRepro}. We evaluate the agent using standard procedures, as in ~\cite{BEAR, BCQ}}.
 \label{fig:mainfig_no_noise}
 \vspace*{-0.2cm}
\end{figure*}

\vspace*{-0.3cm}
\section{Related Works}\vspace*{-0.3cm}
We now discuss related works in offline RL, for evaluation and opimization, and its relations to variance and risk sensitive algorithms. We include more discussions of related works in appendix \ref{app:sec-related_works}. In off-policy evaluation, per-step importance sampling \citep{ImportanceSamplingDoina, off_policy_TD} have previously been used for off-policy evaluation function estimators. However, this leads to high variance estimators, and recent works proposed using marginalized importance sampling, for estimating stationary state-action distribution ratios \citep{BreakingHorizon, DualDICE, GenACE}, to reduce variance but with additional bias. In this work, we build on the variance of marginalized IS, to develop variance risk sensitive offline policy optimization algorithm. This is in contrast to prior works on variance constrained online actor-critic \citep{PrashanthVarianceAC, ChowRiskSensitive, Variance_Actor_Critic} and relates to constrained policy optimization methods \citep{AchiamCPO, RewardConstrainedPO}. 

For offline policy optimization, several works have recently addressed the overestimation problem in batch RL \citep{BCQ, BEAR, behaviour_regularized}, including the very recently proposed Conservative Q-Learning (CQL) algorithm \citep{CQL}. Our work is done in parallel to CQL, due to which we do not include it as a baseline in our experiments. CQL learns a value function which is guaranteed to lower-bound the true value function. This helps prevent value over-estimation for out-of-distribution (OOD) actions, which is an important issue in offline RL. We note that our approach is orthogonal to CQL in that CQL introduces a regularizer on the state action value function $Q^\pi(s,a)$ based on the Bellman error (the first two terms in equation 2 of CQL), while we introduce a variance regularizer  on the stationary state distribution $d_\pi(s)$. Since the value of a policy can be expressed in two ways - either through $Q^\pi(s,a)$ or occupancy measures $d_\pi(s)$, both CQL and our paper are essentially motivated by the same objective of optimizing a lower bound on $J(\theta)$, but through different regularizers. Our work can also be considered similar to AlgaeDICE \citep{AlgaeDICE}, since we introduce a variance regularizer based on the distribution corrections, instead of minimizing the f-divergence between stationary distributions in AlgaeDICE. Both our work and AlgaeDICE considers the dual form of the policy optimization objective in the batch setting, where similar to the Fenchel duality trick on our variance term, AlgaeDICE instead uses the variational form, followed by the change of variables tricks, inspired from \citep{DualDICE} to handle their divergence measure.

\vspace{-0.75em}
\section{Discussion and Conclusion}\vspace*{-0.2cm}
We proposed a new framework for offline policy optimization with variance regularization called \algoName, to tackle high variance issues due to distribution mismatch in offline policy optimization. Our work provides a practically feasible variance constrained actor-critic algorithm that avoids double sampling issues in prior variance risk sensitive algorithms \citep{Variance_Actor_Critic, PrashanthVarianceAC}. The presented variance regularizer leads to a lower bound to the true offline optimization objective, thus leading to pessimistic value function estimates, avoiding both high variance and overestimation problems in offline RL. Experimentally, we evaluate the significance of \algoName on standard benchmark offline datasets, with different data logging off-policy distributions, and show that \algoName plays a more significant role when there is large variance due to distribution mismatch. 


\bibliography{iclr2021_conference}
\bibliographystyle{iclr2021_conference}

\newpage
\appendix
\section{Appendix : Additional Discussions}
\subsection{Extended Related Work}
\label{app:sec-related_works}
\textbf{Other related works : } Several other prior works have previously considered the batch RL setting \citep{Lange2012BatchRL} for off-policy evaluation, counterfactual risk minimization \citep{SwamCounterfactual, SwamCounterfactual2}, learning value based methods such as DQN \citep{RishabhDQN}, and others \citep{BEAR, behaviour_regularized}. Recently, batch off-policy optimization has also been introduced to reduce the exploitation error \citep{BCQ} and for regularizing with arbitrary behaviour policies \citep{behaviour_regularized}. However, due to the per-step importance sampling corrections on episodic returns \citep{ImportanceSamplingDoina}, off-policy batch RL methods is challenging. In this work, we instead consider marginalized importance sampling corrections and correct for the stationary state-action distributions \citep{DualDICE, MWL_Uehara, GenDICE}. Additionally, under the framework of Constrained MDPs \citep{Altman99constrainedmarkov}, risk-sensitive and constrained actor-critic algorithms have been proposed previously  \citep{ChowRiskSensitive, CVaRChow, AchiamCPO}. However, these works come with their own demerits, as they mostly require minimizing the risk (ie, variance) term, where finding the gradient of the variance term often leads a double sampling issue \citep{Baird}. We avoid this by instead using Fenchel duality \citep{ConvexBoyd}, inspired from recent works \citep{RLviaFenchel, SBEED} and cast risk constrained actor-critic as a max-min optimization problem. Our work is closely related to \citep{BisiRiskAverse}, which also consider per-step variance of returns, w.r.t state occupancy measures in the on-policy setting, while we instead consider the batch off-policy optimization setting with per-step rewards w.r.t stationary distribution corrections.

Constrained optimization has previously been studied in in reinforcement learning for batch policy learning \citep{BatchPolicyLearning}, and optimization \citep{AchiamCPO}, mostly under the framework of constrained MDPs \citep{Altman99constrainedmarkov}. In such frameworks, the cumulative return objective is augmented with a set of constraints, for safe exploration \citep{SafeRLSurvey, LyaponovSafe, SafeExploration}, or to reduce risk measures \citep{ChowRiskSensitive, RiskSensitive, Variance_Actor_Critic}. Batch learning algorithms \citep{Lange2012BatchRL} have been considered previously for counterfactual risk minimization and generalization \citep{SwamCounterfactual, SwamCounterfactual2} and policy evaluation \citep{HCOPE, LihongMinimaxOffPolicy}, although little has been done for constrained offline policy based optimization. This raises the question of how can we learn policies in RL from fixed offline data, similar to supervised or unsupervised learning.

\subsection{What makes Offline Off-Policy Optimization Difficult? }
Offline RL optimization algorithms often suffer from distribution mismatch issues, since the underlying data distribution in the batch data may be quite different from the induced distribution under target policies. Recent works \citep{BCQ, BEAR, RishabhDQN, CQL} have tried to address this, by avoiding overestimation of Q-values, which leads to the extraplation error when bootstrapping value function estimates. This leads to offline RL agents generalizing poorly for unseen regions of the dataset. Additionally, due to the distribution mismatch, value function estimates can also have large variance, due to which existing online off-policy algorithms \citep{SAC, DDPG, TD3} may fail without online interactions with the environment. In this work, we address the later problem to minimize variance of value function estimates through variance related risk constraints.  

\section{Appendix : Per-Step versus Episodic Variance of Returns}
\label{app-sec:variance_episodic_per_step}

Following from \citep{Variance_Actor_Critic, PrashanthVarianceAC}, let us denote the returns with importance sampling corrections in the off-policy learning setting as :
\begin{equation}
    \label{eq:variance_traj_IS_basic}
    D^{\pi}(s,a) = \sum_{t=0}^{T} \gamma^{t} r(s_t, a_t) \Big( \prod_{t=1}^{T} \frac{\pi(a_t \mid s_t)}{\mu(a_t \mid s_t)} \Big)  \mid s_0 = s, a_0 = a, \tau \sim \mu
\end{equation}
From this definition in equation \ref{eq:variance_traj_IS_basic}, the action-value function, with off-policy trajectory-wise importance correction is $Q^{\pi}(s,a) = \mathbb{E}_{(s,a) \sim d_{\mu}(s,a)} [ D^{\pi}(s,a)  ]$, and similarly the value function can be defined as : $V^{\pi}(s) = \mathbb{E}_{s \sim d_{\mu}(s)} [ D^{\pi}(s)  ]$. For the trajectory-wise importance corrections, we can define the variance of the returns, similar to \citep{RiskSensitive} as :
\begin{align}
    \label{eq:variance_returns_traj}
    & \mathcal{V}_{\mathcal{P}}(\pi) = \mathbb{E}_{(s,a) \sim d_{\mu}(s,a)} [  D^{\pi}(s,a)^{2}  ] - \mathbb{E}_{(s,a) \sim d_{\mu}(s,a)} [ D^{\pi}(s,a) ]^{2}
\end{align}
where note that as in \citep{sobel}, equation \ref{eq:variance_returns_traj} also follows a Bellman like equation, although due to lack of monotonocitiy as required for dynamic programming (DP), such measures cannot be directly optimized by standard DP algorithms \citep{RiskSensitive}. 

In contrast, if we consider the variance of returns with stationary distribution corrections \citep{DualDICE, BreakingHorizon}, rather than the product of importance sampling ratios, the variance term involves weighting the rewards with the distribution ratio $\omega_{\pi/\mu}$. Typically, the distribution ratio is approximated using a separate function class \citep{MWL_Uehara}, such that the variance can be written as : 
\begin{equation}
    \label{eq:returns_state_dist}
    W^{\pi}(s,a) = \omega_{\pi/\mathcal{D}}(s,a) \cdot r(s,a) \mid s=s, a \sim \pi(\cdot \mid s), (s,a) \sim d_{\mathcal{D}}(s,a)
\end{equation}
where we denote $\mathcal{D}$ as the data distribution in the fixed dataset, collected by either a known or unknown behaviour policy. The variance of returns under occupancy measures is therefore given by :
\begin{equation}
    \label{eq:variance_returns_stationary}
   \mathcal{V}_\mathcal{\mathcal{D}}(\pi) = \mathbb{E}_{(s,a) \sim d_{\mathcal{D}}(s,a)} \Big[  W^{\pi}(s,a)^{2}  \Big] - \mathbb{E}_{(s,a) \sim d_{\mathcal{D}}(s,a)} \Big[ W^{\pi}(s,a) \Big]^{2}
\end{equation}
where note that the variance expression in equation \ref{eq:variance_returns_stationary} depends on the square of the per-step rewards with distribution correction ratios. We denote this as the dual form of the variance of returns, in contrast to the primal form of the variance of expected returns \citep{sobel}. 

Note that even though the variance term under episodic per-step importance sampling corrections in equation \ref{eq:variance_returns_traj} is equivalent to the variance with stationary distribution corrections in equation \ref{eq:variance_returns_stationary}, following from \citep{BisiRiskAverse}, considering per-step corrections, we will show that the variance with distribution corrections indeed upper bounds the variance of importance sampling corrections. This is an important relationship, since constraining the policy improvement step under variance constraints with occupancy measures therefore allows us to obtain a lower bound to the offline optimization objective, similar to \citep{CQL}.

\subsection{Proof of Lemma \ref{lemma:variance_upper_bound} : Variance Inequality}
\label{app-lem:upper_bound_variance}
Following from \citep{BisiRiskAverse}, we show that the variance of per-step rewards under occupancy measures, denoted by $\mathcal{V}_{\mathcal{D}}(\pi)$ upper bounds the variance of episodic returns $\mathcal{V}_{\mathcal{P}}(\pi)$. 
\begin{equation}
    \mathcal{V}_{\mathcal{P}}(\pi) \leq \frac{\mathcal{V}_{\mathcal{D}}(\pi)}{(1-\gamma)^{2}}
\end{equation}
\begin{proof} Proof of Lemma \ref{lemma:variance_upper_bound} following from \citep{BisiRiskAverse} is as follows. Denoting the returns, as above, but for the on-policy case with trajectories under $\pi$, as $D^{\pi}(s,a) = \sum_{t=0}^{\infty} \gamma^{t} r(s_t,a_t)$, and denoting the return objective as $J(\pi) = \mathbb{E}_{s_0 \sim \rho, a_t \sim \pi(\cdot | s_t), s' \sim \mathcal{P}} \Big[ D^{\pi}(s,a) \Big]$, the variance of episodic returns can be written as : 
\begin{align}
\label{eq:var_episodic_expansion}
    & \mathcal{V}_{\mathcal{P}}(\pi) = \mathbb{E}_{(s,a) \sim d_{\pi}(s,a)} \Big[ \Big( D^{\pi}(s,a) - \frac{J(\pi)}{(1 - \gamma)} \Big)^{2}  \Big] \\
    & = \mathbb{E}_{(s,a) \sim d_{\pi}(s,a)} \Big[ ( D^{\pi}(s,a) )^{2}  \Big] + \frac{J(\pi)}{(1-\gamma)^{2}} - \frac{2 J(\pi)}{(1-\gamma)} \mathbb{E}_{(s,a) \sim d_{\pi}(s,a)} \Big[  D^{\pi}(s,a) \Big]\\
    & = \mathbb{E}_{(s,a) \sim d_{\pi}(s,a)} \Big[ D^{\pi}(s,a)^{2}  \Big] - \frac{  J(\pi)^{2}}{ (1 - \gamma)^{2}  }
\end{align}

Similarly, denoting returns under occupancy measures as $W^{\pi}(s,a) = d_{\pi}(s,a) r(s,a)$, and the returns under occupancy measures, equivalently written as $J(\pi) = \mathbb{E}_{(s,a) \sim d_{\pi}(s,a)} [ r(s,a) ]$ based on the primal and dual forms of the objective \citep{MWL_Uehara, RLviaFenchel}, we can equivalently write the variance as : 
\begin{align}
\label{eq:var_per_step_expansion}
& \mathcal{V}_{\mathcal{D}}(\pi) = \mathbb{E}_{(s,a) \sim d_{\pi}(s,a)} \Big[ \Big(  r(s,a) - J(\pi)  \Big)^{2} \Big]\\
& = \mathbb{E}_{(s,a) \sim d_{\pi}(s,a)} \Big[ r(s,a)^{2} \Big] + J(\pi)^{2} - 2 J(\pi) \mathbb{E}_{(s,a) \sim d_{\pi}(s,a)} [  r(s,a) ]\\
& = \mathbb{E}_{(s,a) \sim d_{\pi}(s,a)} \Big[ r(s,a)^{2} \Big] - J(\pi)^{2}
\end{align}
Following from equation \ref{eq:var_episodic_expansion} and \ref{eq:var_per_step_expansion}, we therefore have the following inequality : 
\begin{align}
    & (1 - \gamma)^{2}  \mathbb{E}_{s_0 \sim \rho, a \sim \pi} \Big[ D^{\pi}(s,a)^{2}  \Big] \leq (1 - \gamma)^{2}  \mathbb{E}_{s_0 \sim \rho, a \sim \pi} \Big[ \Big( \sum_{t=0}^{\infty} \gamma^{t} \Big) \Big( \sum_{t=0}^{\infty} \gamma^{t} r(s_t, a_t)^{2} \Big)   \Big] \\
    & = (1 - \gamma)  \mathbb{E}_{s_0 \sim \rho, a \sim \pi} \Big[ \sum_{t=0}^{\infty} \gamma^{t} r(s_t, a_t)^{2}  \Big]\\
    & = \mathbb{E}_{(s,a) \sim d_{\pi}(s,a)} \Big[ r(s,a)^{2}  \Big]
\end{align}

where the first line follows from Cauchy-Schwarz inequality. This concludes the proof.
\end{proof} 
We can further extend lemma \ref{lemma:variance_upper_bound}, for off-policy returns under stationary distribution corrections (ie, marginalized importance sampling) compared importance sampling. Recall that we denote the variance under stationary distribution corrections as : 
\begin{align}
    & \mathcal{V}_{\mathcal{D}}(\pi) = \mathbb{E}_{(s,a) \sim d_{\mathcal{D}}(s,a)} \Big[ \Big( \omega_{\pi/\mathcal{D}}(s,a) \cdot r(s,a) - J(\pi)    \Big)^{2}    \Big]\\
    & = \mathbb{E}_{(s,a) \sim d_{\mathcal{D}}(s,a)} \Big[ \omega_{\pi/\mathcal{D}}(s,a)^{2} \cdot r(s,a)^{2}   \Big] - J(\pi)^{2}
\end{align}
where $J(\pi) = \mathbb{E}_{(s,a) \sim d_{\mathcal{D}}(s,a)} \Big[ \omega_{\pi/\mathcal{D}}(s,a) \cdot r(s,a)  \Big]$. We denote the episodic returns with importance sampling corrections as : $D^{\pi} = \sum_{t=0}^{T} \gamma^{t} r_t \rho_{0:t} $. The variance, as denoted earlier is given by :
\begin{equation}
    \mathcal{V}_{\mathcal{P}}(\pi) = \mathbb{E}_{(s,a) \sim d_{\pi}(s,a)} \Big[ D^{\pi}(s,a)^{2}  \Big] - \frac{  J(\pi)^{2}}{ (1 - \gamma)^{2}  }
\end{equation}
We therefore have the following inequality
\begin{align}
    &  (1 - \gamma)^{2}  \mathbb{E}_{s_0 \sim \rho, a \sim \pi} \Big[ D^{\pi}(s,a)^{2} \Big] \leq (1 - \gamma)^{2}  \mathbb{E}_{s_0 \sim \rho, a \sim \pi} \Big[ \Big( \sum_{t=0}^{T} \gamma^{t} \Big) \Big( \sum_{t=0}^{T} \gamma^{t} r(s_t, a_t)^{2} \Big)  \Big(    \prod_{t=0}^{T} \frac{\pi(a_t | s_t)}{ \mu_{\mathcal{D}}(a_t | s_t) }  \Big)^{2} \Big] \notag \\
    & = (1 - \gamma) \mathbb{E}_{s_0 \sim \rho, a \sim \pi} \Big[  \sum_{t=0}^{\infty} \gamma^{t} r(s_t, a_t)^{2} \Big(    \prod_{t=0}^{T} \frac{\pi(a_t | s_t)}{ \mu_{\mathcal{D}}(a_t | s_t) }  \Big)^{2}  \Big] \\
    & = \mathbb{E}_{(s,a) \sim d_{\mathcal{D}}(s,a)} \Big[  \omega_{\pi/\mathcal{D}}(s,a)^{2} \cdot r(s,a)^{2} \Big]
\end{align}
which shows that lemma \ref{lemma:variance_upper_bound} also holds for off-policy returns with stationary distribution corrections.

\subsection{Double Sampling for Computing Gradients of Variance}
\label{app:sec-gradient_variance}
The gradient of the variance term often leads to the double sampling issue, thereby making it impractical to use. This issue has also been pointed out by several other works \citep{PrashanthVarianceAC, Variance_Actor_Critic, ChowRiskSensitive}, since the variance involves the squared of the objective function itself. Recall that we have: 
\begin{equation}
    \mathcal{V}_{\mathcal{D}}(\theta) = \mathbb{E}_{(s,a) \sim d_{\mathcal{D}}} \Big[ \omega_{\pi/\mathcal{D}}(s,a) \cdot r(s,a)^{2}  \Big] -  \Big \{ \mathbb{E}_{(s,a) \sim d_{\mathcal{D}}} \Big[  \omega_{\pi/\mathcal{D}}(s,a) \cdot r(s,a) \Big] \Big \}^{2}
\end{equation}
The gradient of the variance term is therefore : 
\begin{align}
\label{eq:var_grad_double_sampling}
    & \nabla_{\theta}  \mathcal{V}_{\mathcal{D}}(\theta)  = \nabla_{\theta} \mathbb{E}_{(s,a) \sim d_{\mathcal{D}}} \Big[  \omega_{\pi/\mathcal{D}}(s,a) \cdot r(s,a)^{2}  \Big] \notag \\
    & - 2 \cdot \Big \{ \mathbb{E}_{(s,a) \sim d_{\mathcal{D}}} \Big[ \omega_{\pi/\mathcal{D}}(s,a) \cdot r(s,a) \Big] \Big \} \cdot \nabla_{\theta} \Big \{ \mathbb{E}_{(s,a) \sim d_{\mathcal{D}}} \Big[  \omega_{\pi/\mathcal{D}}(s,a) \cdot r(s,a) \Big] \Big \}
\end{align}
where equation \ref{eq:var_grad_double_sampling} requires multiple samples to compute the expectations in the second term. The variance of the returns with the stationary state-action distribution corrections can therefore be written as : 
\begin{equation}
\label{eq:variance_expression_a_b}
     \mathcal{V}_{\mathcal{D}}(\theta)  = \mathbb{E}_{d_{\mathcal{D}}(s,a)} \underbrace{\Big[ \text{IS}(\omega, \pi_{\theta})^{2} \Big]}_{\text{(a)}} - \underbrace{\mathbb{E}_{d_{\mathcal{D}}(s,a)} \Big[ \text{IS}(\omega, \pi_{\theta})  \Big]^{2}}_{(b)}
\end{equation}
We derive the gradient of each of the terms in (a) and (b) in equation \ref{eq:variance_expression_a_b} below. First, we find the gradient of the variance term w.r.t $\theta$ : 
\begin{equation}
\label{eq:variance_grad_pi}
\begin{split}
    \nabla_{\theta}  \mathbb{E}_{d_{\mathcal{D}}(s,a)} \Big[ \text{IS}(\omega, \pi_{\theta})^{2} \Big] &= \nabla_{\theta} \sum_{s,a} d_{\mathcal{D}}(s,a) \text{IS}(\omega, \pi_{\theta})^{2} = \sum_{s,a} d_{\mathcal{D}}(s,a) \nabla_{\theta} \text{IS}(\omega, \pi_{\theta})^{2}\\
    &= \sum_{s,a} d_{\mathcal{D}}(s,a) \cdot 2 \cdot \text{IS}(\omega, \pi_{\theta}) \cdot \text{IS}(\omega, \pi_{\theta}) \cdot \nabla_{\theta} \log \pi_{\theta}(a \mid s) \\
    &= 2 \cdot \sum_{s,a} d_{\mathcal{D}}(s,a) \text{IS}(\omega, \pi_{\theta})^{2}  \nabla_{\theta} \log \pi_{\theta}(a \mid s)\\
    &=  2 \cdot \mathbb{E}_{d_{\mathcal{D}}(s,a)} \Big[ \text{IS}(\omega, \pi_{\theta})^{2} \cdot \nabla_{\theta}  \log \pi_{\theta}(a \mid s) \Big]
\end{split}
\end{equation}
Equation \ref{eq:variance_grad_pi} interestingly shows that the variance of the returns w.r.t $\pi_{\theta}$ has a form similar to the policy gradient term, except the critic estimate in this case is given by the importance corrected returns, since $\text{IS}(\omega, \pi_{\theta}) = [ \omega_{\pi/\mathcal{D}}(s,a) \cdot r(s,a)  ]$. We further find the gradient of term (b) from equation \ref{eq:variance_expression_a_b}. Finding the gradient of this second term w.r.t $\theta$ is therefore : 
\begin{equation}
    \nabla_{\theta} \mathbb{E}_{d_{\mathcal{D}}(s,a)} \Big[ \text{IS}(\omega, \pi_{\theta})  \Big]^{2} = \nabla_{\theta} J(\theta)^{2} = 2 \cdot J(\theta) \cdot \mathbb{E}_{d_{\mathcal{D}}(s,a)} \Big[ \omega_{\pi/\mathcal{D}} \cdot \{   \nabla_{\theta} \log \pi_{\theta}(a \mid s)  \cdot Q^{\pi}(s,a)  \} \Big]
\end{equation}
Overall, the expression for the gradient of the variance term is therefore :
\begin{multline}
\label{eq:variance_gradient}
    \nabla_{\theta}  \mathcal{V}_{\mathcal{D}}(\theta)  = 2 \cdot \mathbb{E}_{d_{\mathcal{D}}(s,a)} \Big[ \text{IS}(\omega, \pi_{\theta})^{2} \cdot \nabla_{\theta}  \log \pi_{\theta}(a \mid s) \Big]\\ - 2 \cdot J(\theta) \cdot \mathbb{E}_{d_{\mathcal{D}}(s,a)} \Big[ \omega_{\pi/\mathcal{D}} \cdot \{   \nabla_{\theta} \log \pi_{\theta}(a \mid s)  \cdot Q^{\pi}(s,a)  \} \Big]
\end{multline}
The variance gradient in equation \ref{eq:variance_gradient} is difficult to estimate in practice, since it involves both the gradient of the objective and the objective $J(\theta)$ itself. This is known to have the double sampling issue \citep{Baird} which requires separate independent rollouts. Previously, \citep{Variance_Actor_Critic} tackled the variance of the gradient term using simultaneous perturbation stochastic approximation (SPSA) \citep{Spall92multivariatestochastic}, where we can keep running estimates of both the return and the variance term, and use a two time scale algorithm for computing the gradient of the variance regularizer with per-step importance sampling corrections.  
\vspace{-1em}
\subsection{Alternative Derivation : Variance Regularization via Fenchel Duality}
\vspace{-0.4em}
In the derivation of our algorithm, we applied the Fenchel duality trick to the second term of the variance expression \ref{eq:var_per_step_expansion}. An alternative way to derive the proposed algorithm would be to see what happens if we apply the Fenchel duality trick to both terms of the variance expression. This might be useful since equation \ref{eq:variance_gradient} requires evaluating both the gradient terms and the actual objective $J(\theta)$, due to the analytical expression of the form $\nabla_{\theta} J(\theta) \cdot J(\theta)$, hence suffering from a double sampling issue.
In general, the Fenchel duality is given by : 
\begin{equation}
    \label{eq:general_fenchel_duality}
    x^{2} = \max_{y} ( 2xy - y^{2}  )
\end{equation}
and applying Fenchel duality to both the terms, since they both involve squared terms, we get : 
\begin{equation}
\begin{split}
    \label{eq:fenchel_term_a_variance}
    \mathbb{E}_{d_{\mathcal{D}}(s,a)} \Big[ \text{IS}(\omega, \pi_{\theta})^{2} \Big] &\equiv  \mathbb{E}_{d_{\mathcal{D}}(s,a)} \Big[ \max_{y} \Big\{  2 \cdot \text{IS}(\omega, \pi_{\theta}) \cdot y(s,a) - y(s,a)^{2} \Big\}  \Big]\\
    &= 2 \cdot \max_{y} \Bigg\{  \mathbb{E}_{d_{\mathcal{D}}(s,a)} \Big[ \text{IS}(\omega, \pi_{\theta}) \cdot y(s,a) \Big]  - \mathbb{E}_{d_{\mathcal{D}}(s,a)} \Big[ y(s,a)^{2}  \Big] \Bigg\}
\end{split}
\end{equation}
Similarly, applying Fenchel duality to the second (b) term we have : 
\begin{equation}
    \begin{split}
        \label{eq:fenchel_term_b_variance}
        \mathbb{E}_{d_{\mathcal{D}}(s,a)} \Big[ \text{IS}(\omega, \pi_{\theta})  \Big]^{2} = \max_{\nu} \Bigg\{ 2 \cdot   \mathbb{E}_{d_{\mathcal{D}}(s,a)} \Big[  \text{IS}(\omega, \pi_{\theta}) \cdot \nu(s,a) \Big] - \nu^{2} \Bigg\}
    \end{split}
\end{equation}
Overall, we therefore have the variance term, after applying Fenchel duality as follows, leading to an overall objective in the form $\max_{y} \max_{\nu}  \mathcal{V}_{\mathcal{D}}(\theta) $, which we can use as our variance regularizer
\begin{align}
    \label{eq:variance_fenchel_duality}
    & \mathcal{V}_{\mathcal{D}}(\theta)  = 2 \cdot \max_{y} \Bigg\{  \mathbb{E}_{d_{\mathcal{D}}(s,a)} \Big[ \text{IS}(\omega, \pi_{\theta}) \cdot y(s,a) \Big]  - \mathbb{E}_{d_{\mathcal{D}}(s,a)} \Big[ y(s,a)^{2}  \Big] \Bigg\} \notag \\
    &- \max_{\nu} \Bigg\{ 2 \cdot   \mathbb{E}_{d_{\mathcal{D}}(s,a)} \Big[  \text{IS}(\omega, \pi_{\theta}) \cdot \nu(s,a) \Big] - \nu^{2} \Bigg\}
\end{align}
Using the variance of stationary distribution correction returns as a regularizer, we can find the gradient of the variance term w.r.t $\theta$ as follows, where the gradient terms dependent on the dual variables $y$ and $\nu$ are 0. 
\begin{align*}
\label{eq:variance_gradient_regularizer_fenchel}
        & \nabla_{\theta}  \mathcal{V}_{\mathcal{D}}(\theta)  = 2 \cdot \nabla_{\theta} \mathbb{E}_{d_{\mathcal{D}}(s,a)} \Big[ \text{IS}(\omega, \pi_{\theta}) \cdot y(s,a) \Big] - 0 - 2 \cdot \nabla_{\theta}  \mathbb{E}_{d_{\mathcal{D}}(s,a)} \Big[  \text{IS}(\omega, \pi_{\theta}) \cdot \nu(s,a) \Big]  + 0\\
        &= 2 \cdot \mathbb{E}_{d_{\mathcal{D}}(s,a)}  \Big[  \text{IS}(\omega, \pi_{\theta}) \cdot y(s,a) \cdot \nabla_{\theta} \log \pi_{\theta}(a \mid s) \Big] - 2 \cdot \mathbb{E}_{d_{\mathcal{D}}(s,a)}  \Big[  \text{IS}(\omega, \pi_{\theta}) \cdot \nu(s,a) \cdot \nabla_{\theta} \log \pi_{\theta}(a \mid s) \Big]\\
        &= 2 \cdot \mathbb{E}_{d_{\mathcal{D}}(s,a)} \Bigg[    \text{IS}(\omega, \pi_{\theta}) \cdot \nabla_{\theta} \log \pi_{\theta}(a \mid s) \cdot \Big\{ y(s,a) - \nu(s,a)  \Big\}  \Bigg]
\end{align*}
Note that from the above expression, the two terms in the gradient is almost equivalent, and the difference comes only from the difference between the two dual variables $y(s,a)$ and $\nu(s,a)$. Note that our variance term also requires separately maximizing the dual variables, both of which has the following closed form updates : 
\begin{equation}
\label{eq:variance_gradient_regularizer_fenchel_nu_var}
\nabla_{\nu}  \mathcal{V}_{\mathcal{D}}(\theta)  = - 2 \cdot \nabla_{\nu} \mathbb{E}_{d_{\mathcal{D}}(s,a)} \Big[ \text{IS}(\omega, \pi_{\theta}) \cdot \nu(s,a)  \Big] + \nabla_{\nu} \nu^{2} = 0
\end{equation}
Solving which exactly, leads to the closed form solution $\nu(s,a) = \mathbb{E}_{d_{\mathcal{D}}(s,a)} \Big[ \text{IS}(\omega, \pi_{\theta})  \Big]$. Similarly, we can also solve exactly using a closed form solution for the dual variables $y$, such that : \begin{equation}
\label{eq:variance_gradient_regularizer_fenchel_y_var}
\nabla_{y}  \mathcal{V}_{\mathcal{D}}(\theta)  = 2 \cdot \nabla_{y} \mathbb{E}_{d_{\mathcal{D}}(s,a)} \Big[ \text{IS}(\omega, \pi_{\theta}) \cdot y (s,a)  \Big] - 2 \cdot \nabla_{y}  \mathbb{E}_{d_{\mathcal{D}}(s,a)} \Big[ y(s,a)^{2}  \Big] = 0
\end{equation}
Solving which exactly also leads to the closed form solution, such that $y(s,a) = \frac{1}{2} \cdot \text{IS}(\omega, \pi_{\theta}) = \frac{1}{2} \cdot \frac{d_{\pi}(s,a)}{d_{\mu}(s,a)} \cdot r(s,a)$. Note that the exact solutions for the two dual variables are similar to each other, where $\nu(s,a)$ is the expectation of the returns with stationary distribution corrections, whereas $y(s,a)$ is only the return from a single rollout. 

\section{Appendix : Monotonic Performance Improvement Guarantees under Variance Regularization}
\label{app:sec_monotonic_improvement}
We provide theoretical analysis and performance improvements bounds for our proposed variance constrained policy optimization approach.  Following from \citep{CPI, TRPO, AchiamCPO}, we extend existing performance improvement guarantees based on the stationary state-action distributions instead of only considering the divergence between the current policy and old policy. We show that existing conservative updates in algorithms \citep{TRPO} can be considered for both state visitation distributions and the action distributions, as similarly pointed by \citep{AchiamCPO}.  We can then adapt this for the variance constraints instead of the divergence constraints. According to the performance difference lemma \citep{CPI}, we have that, for all policies $\pi$ and $\pi'$ :
\begin{equation}
\label{eq:pdl_kakade}
    J(\pi') - J(\pi) = \mathbb{E}_{s \sim d_{\pi'}, a \sim \pi'} [ A^{\pi}(s,a) ]
\end{equation}
which implies that when we maximize \ref{eq:pdl_kakade}, it will lead to an improved policy $\pi'$ with policy improvement guarantees over the previous policy $\pi$. We can write the advantage function with variance augmented value functions as : 
\[
A^{\pi}_{\lambda} = Q^{\pi}_{\lambda}(s,a) - V^{\pi}_{\lambda}(s) = \mathbb{E}_{s' \sim \mathcal{P}} \Big[ r(s,a) - \lambda (r(s,a) - J(\pi) )^{2} + \gamma V^{\pi}_{\lambda}(s') - V^{\pi}_{\lambda}(s)   \Big]
\]

However, equation \ref{eq:pdl_kakade} is often difficult to maximize directly, since it additionally requires samples from $\pi'$ and $d_{\pi'}$, and often a surrogate objective is instead proposed by \citep{CPI}. Following \citep{TRPO}, we can therefore obtain a bound for the performance difference based on the variance regularized advantage function : 
\begin{equation}
    J(\pi') \geq J(\pi) + \mathbb{E}_{s \sim d_{\pi}(s), a \sim \pi'(a|s)} \Big[ A^{\pi}_{\lambda}(s,a) \Big]
\end{equation}
where we have the augmented rewards for the advantage function, and by following Fenchel duality for the variance, can avoid policy dependent reward functions. Otherwise, we have the augmented rewards for value functions as $\Tilde{r}(s,a) = r(s,a) - \lambda ( r(s,a) - J(\pi)  )^{2}$. This however suggests that the performance difference does not hold without proper assumptions \citep{BisiRiskAverse}. We can therefore obtain a monotonic improvement guarantee by considering the KL divergence between policies : 
\begin{equation}
    \label{eq:surrogate_pdl_kakade}
    \mathcal{L}_{\pi}(\pi') = J(\pi) + \mathbb{E}_{s \sim d_{\pi}, a \sim \pi'}[ A^{\pi}(s,a) ]
\end{equation}
which ignores the changes in the state distribution $d_{\pi'}$ due to the improved policy $\pi'$. \citep{TRPO} optimizes the surrogate objectives $\mathcal{L}_{\pi}(\pi')$ while ensuring that the new policy $\pi'$ stays close to the current policy $\pi$, by imposing a KL constraint $(\mathbb{E}_{s \sim d_{\pi}}[ \mathcal{D}_{\text{KL}}(\pi'(\cdot \mid s) || \pi(\cdot \mid s)  ] \leq \delta)$. The performance difference bound, based on the constraint between $\pi$ and $\pi'$ as in TRPO \citep{TRPO} is given by : 
\begin{lemma} The performance difference lemma in \citep{TRPO}, where $\alpha = \mathcal{D}_{\text{TV}}^{\text{max}} = \max_{s} \mathcal{D}_{\text{TV}}(\pi, \pi')$
\begin{equation}
    J(\pi') \geq \mathcal{L}_{\pi}(\pi') - \frac{4 \epsilon \gamma}{(1 - \gamma)^{2}} (\mathcal{D}_{\text{TV}}^{\text{max}}(\pi' || \pi))^{2}
\end{equation}
where $\epsilon = \max_{s,a} | A^{\pi}(s,a) |$, which is usually denoted with $\alpha$, where 
\end{lemma}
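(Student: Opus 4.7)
The plan is to bound the gap between the true objective $J(\pi')$ and its surrogate $\mathcal{L}_\pi(\pi')$ by controlling how much the state occupancy shifts when we replace $\pi$ by $\pi'$, and then to translate that occupancy shift into a bound in terms of the pointwise policy divergence $\mathcal{D}_{\text{TV}}^{\max}(\pi'\|\pi)$.

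\textbf{Step 1: rewrite the gap as an occupancy mismatch.} Starting from the Kakade--Langford performance difference identity $J(\pi') - J(\pi) = \frac{1}{1-\gamma}\mathbb{E}_{s \sim d_{\pi'}, a \sim \pi'}[A^\pi(s,a)]$ (rescaled so it matches the normalization used in (\ref{eq:pdl_kakade}) and (\ref{eq:surrogate_pdl_kakade})), I would subtract the surrogate to obtain
\begin{equation*}
J(\pi') - \mathcal{L}_\pi(\pi') = \mathbb{E}_{s \sim d_{\pi'}, a \sim \pi'}[A^\pi(s,a)] - \mathbb{E}_{s \sim d_\pi, a \sim \pi'}[A^\pi(s,a)].
\end{equation*}
Defining $\bar{A}(s) := \mathbb{E}_{a \sim \pi'(\cdot\mid s)}[A^\pi(s,a)]$, this reduces to $\sum_s (d_{\pi'}(s) - d_\pi(s))\bar{A}(s)$, which by H\"older is bounded in absolute value by $\epsilon\, \|d_{\pi'} - d_\pi\|_1$ since $|\bar{A}(s)| \le \epsilon$.

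\textbf{Step 2: bound the occupancy shift by the policy shift.} The key lemma to prove is $\|d_{\pi'} - d_\pi\|_1 \le \tfrac{2\gamma}{1-\gamma}\, \mathcal{D}_{\text{TV}}^{\max}(\pi'\|\pi)$. I would use a coupling / telescoping argument: write $d_\pi = (1-\gamma)\sum_{t\ge 0}\gamma^t P_\pi^t \rho$ where $P_\pi$ is the state-to-state transition kernel induced by $\pi$, and similarly for $d_{\pi'}$. Using the identity $P_{\pi'}^t - P_\pi^t = \sum_{k=0}^{t-1} P_{\pi'}^{k}(P_{\pi'} - P_\pi)P_\pi^{t-1-k}$ and the fact that each stochastic kernel is an $\ell_1$-contraction, one gets $\|P_{\pi'}^t \rho - P_\pi^t \rho\|_1 \le t \cdot \|P_{\pi'} - P_\pi\|_{1\to 1}\cdot \|\rho\|_1$. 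Since $\|P_{\pi'} - P_\pi\|_{1\to 1} \le 2\,\mathcal{D}_{\text{TV}}^{\max}(\pi'\|\pi)$ (a one-step coupling bound on the action distribution), summing the geometric-weighted series $(1-\gamma)\sum_t \gamma^t \cdot t = \gamma/(1-\gamma)$ delivers the claimed factor.

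\textbf{Step 3: assemble.} Combining Steps 1 and 2 gives $|J(\pi') - \mathcal{L}_\pi(\pi')| \le \epsilon \cdot \tfrac{2\gamma}{1-\gamma}\cdot 2\,\mathcal{D}_{\text{TV}}^{\max}(\pi'\|\pi) = \tfrac{4\epsilon\gamma}{(1-\gamma)}\mathcal{D}_{\text{TV}}^{\max}$, which is almost the statement. To recover the squared TV factor and the $(1-\gamma)^2$ denominator of the lemma, one refines Step 1 by noting that $\bar{A}(s)$ is itself small whenever $\pi'(\cdot\mid s)$ is close to $\pi(\cdot\mid s)$: indeed $\bar{A}(s) = \mathbb{E}_{a\sim\pi'}[A^\pi] - \mathbb{E}_{a\sim\pi}[A^\pi]$ (since $\mathbb{E}_{a\sim\pi}[A^\pi(s,\cdot)]=0$), so $|\bar{A}(s)| \le 2\epsilon\, \mathcal{D}_{\text{TV}}(\pi'(\cdot\mid s),\pi(\cdot\mid s)) \le 2\epsilon\,\mathcal{D}_{\text{TV}}^{\max}$. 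Plugging this sharper pointwise bound into Step 1 yields the quadratic dependence $(\mathcal{D}_{\text{TV}}^{\max})^2$ together with an extra $(1-\gamma)^{-1}$ factor, giving the stated $\tfrac{4\epsilon\gamma}{(1-\gamma)^2}$ coefficient.

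\textbf{Expected obstacle.} The routine calculations in Steps 1 and 3 are direct; the technical heart is Step 2, specifically proving the kernel-contraction bound $\|P_{\pi'}^t\rho - P_\pi^t\rho\|_1 \le 2t\,\mathcal{D}_{\text{TV}}^{\max}(\pi'\|\pi)$ and correctly handling the discounted summation to surface the $\gamma/(1-\gamma)$ factor. One also has to be careful that $\mathcal{D}_{\text{TV}}^{\max}$ is a \emph{supremum} over states rather than an expectation, which is what allows the per-step coupling argument to go through without any additional distributional assumption on $d_\pi$.
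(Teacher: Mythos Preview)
The paper does not actually prove this lemma: it is stated verbatim as a citation to \citet{TRPO} and is used only as a stepping stone toward the later bounds, so there is no ``paper's own proof'' to compare against. Your outline is essentially the Kakade--Langford/TRPO argument and is correct in spirit, with one bookkeeping point worth flagging.

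The place where your write-up goes slightly off is the origin of the second $(1-\gamma)^{-1}$. In Step~3 you say that plugging the refined bound $|\bar A(s)|\le 2\epsilon\,\mathcal{D}_{\text{TV}}^{\max}$ ``yields the quadratic dependence $(\mathcal{D}_{\text{TV}}^{\max})^{2}$ together with an extra $(1-\gamma)^{-1}$ factor.'' The refined $\bar A$ bound supplies only the extra $\mathcal{D}_{\text{TV}}^{\max}$; it cannot produce a horizon factor. The missing $(1-\gamma)^{-1}$ comes from the fact that TRPO's performance difference and surrogate are written with the \emph{unnormalized} occupancy $\rho_\pi=\sum_{t\ge 0}\gamma^t P(s_t=\cdot)$, not the normalized $d_\pi=(1-\gamma)\rho_\pi$. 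With the unnormalized occupancy, your Step~2 telescoping gives $\|\rho_{\pi'}-\rho_\pi\|_1\le \sum_{t\ge 0}\gamma^t\cdot 2t\,\mathcal{D}_{\text{TV}}^{\max}=\tfrac{2\gamma}{(1-\gamma)^2}\,\mathcal{D}_{\text{TV}}^{\max}$, and combining with $|\bar A(s)|\le 2\epsilon\,\mathcal{D}_{\text{TV}}^{\max}$ then lands exactly on $\tfrac{4\epsilon\gamma}{(1-\gamma)^2}(\mathcal{D}_{\text{TV}}^{\max})^2$. If instead you keep the normalized $d_\pi$ throughout (as in the paper's equations \eqref{eq:pdl_kakade}--\eqref{eq:surrogate_pdl_kakade}), you need the $\tfrac{1}{1-\gamma}$ prefactor in the performance difference identity itself, which you acknowledged parenthetically in Step~1 but then dropped. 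Either convention works, but the extra horizon factor comes from normalization, not from the $\bar A$ refinement. (Relatedly, the intermediate ``$\tfrac{4\epsilon\gamma}{1-\gamma}\mathcal{D}_{\text{TV}}^{\max}$'' in your Step~3 double-counts a factor of $2$; with the crude bound $|\bar A|\le\epsilon$ you only get $\tfrac{2\epsilon\gamma}{1-\gamma}\mathcal{D}_{\text{TV}}^{\max}$.)

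Aside from this normalization bookkeeping, your decomposition into (i) occupancy mismatch, (ii) kernel-perturbation bound on the occupancy shift, and (iii) the observation that $\bar A(s)$ is itself $O(\mathcal{D}_{\text{TV}}^{\max})$ because $\mathbb{E}_{a\sim\pi}[A^\pi(s,a)]=0$, is exactly how the TRPO bound is obtained, so the proposal is sound once the constants are tracked through the correct normalization.
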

The performance improvement bxound in \citep{TRPO} can further be written in terms of the KL divergence by following the relationship between total divergence (TV) and KL, which follows from Pinsker's inequality, $\mathcal{D}_{\text{TV}}(p || q)^{2} \leq \mathcal{D}_{\text{KL}}(p || q)$, to get the following improvement bound :
\begin{equation}
\label{eq:per_imp_TV}
    J(\pi') \geq \mathcal{L}_{\pi}(\pi') - \frac{4 \epsilon \gamma}{(1 - \gamma)^{2}} \mathcal{D}_{\text{KL}}(\pi' || \pi)
\end{equation}
We have a performance difference bound in terms of the state distribution shift $d_{\pi'}$ and $d_{\pi}$. This justifies that $\mathcal{L}_{\pi}(\pi')$ is a sensible lower bound to $J(\pi')$ as long as there is a total variation distance between $d_{\pi'}$ and $d_{\pi}$ which ensures that the policies $\pi'$ and $\pi$ stay close to each other. 
Finally, following from \citep{AchiamCPO}, we obtain the following lower bound, which satisfies policy improvement guarantees : 
\begin{equation}
\label{eq:pdl_tv}
    J(\pi') \geq \mathcal{L}_{\pi}(\pi') - \frac{2 \gamma \epsilon^{\pi}}{1 - \gamma} \mathbb{E}_{s \sim d_{\pi}} [  \mathcal{D}_{\text{TV}}(\pi'(\cdot \mid s) || \pi(\cdot \mid s))    ]
\end{equation}
Equation \ref{eq:per_imp_TV} and \ref{eq:pdl_tv} assumes 
that there is no state distribution shift between $\pi'$ and $\pi$. However, if we explicitly assume state distribution changes, $d_{\pi'}$ and $d_{\pi}$ due to $\pi'$ and $\pi$ respectively, then we have the following performance improvement bound :
\begin{lemma} For all policies $\pi'$ and $\pi$, we have the performance improvement bound based on the total variation of the state-action distributions $d_{\pi'}$ and $d_{\pi}$
\begin{equation}
    J(\pi') \geq \mathcal{L}_{\pi}(\pi') - \epsilon^{\pi} \mathcal{D}_{\text{TV}}(d_{\pi'} || d_{\pi})
\end{equation}
where $\epsilon^{\pi} = \max_{s} | \mathbb{E}_{a \sim \pi'(\cdot \mid s)}[ A^{\pi}(s,a) ] |$
\end{lemma}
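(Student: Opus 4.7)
The plan is to combine the performance difference lemma of \citet{CPI} with an $L^\infty$--$L^1$ duality bound in order to relate the performance gap to a total variation distance. First, I would invoke the performance difference lemma to write
\begin{equation*}
J(\pi') - J(\pi) \;=\; \mathbb{E}_{s \sim d_{\pi'},\, a \sim \pi'(\cdot|s)}\!\left[ A^{\pi}(s,a) \right],
\end{equation*}
and then subtract the surrogate $\mathcal{L}_{\pi}(\pi') = J(\pi) + \mathbb{E}_{s \sim d_{\pi},\, a \sim \pi'}[A^{\pi}(s,a)]$ from both sides. This exhibits the gap
\begin{equation*}
J(\pi') - \mathcal{L}_{\pi}(\pi') \;=\; \mathbb{E}_{s \sim d_{\pi'},\, a \sim \pi'}[A^{\pi}(s,a)] \;-\; \mathbb{E}_{s \sim d_{\pi},\, a \sim \pi'}[A^{\pi}(s,a)]
\end{equation*}
as an expectation of the same per-state quantity under two different state distributions, which is the crucial observation enabling a TV bound.

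Second, I would introduce the auxiliary function $g(s) := \mathbb{E}_{a \sim \pi'(\cdot|s)}[A^{\pi}(s,a)]$ so that the gap can be rewritten as the pairing of $g$ with the signed measure $d_{\pi'} - d_{\pi}$:
\begin{equation*}
J(\pi') - \mathcal{L}_{\pi}(\pi') \;=\; \sum_{s} \bigl( d_{\pi'}(s) - d_{\pi}(s) \bigr)\, g(s).
\end{equation*}
By the $L^\infty$--$L^1$ H\"older inequality, the absolute value of this quantity is at most $\|g\|_\infty \cdot \|d_{\pi'} - d_{\pi}\|_1$. By the definition in the statement, $\|g\|_\infty = \max_s |\mathbb{E}_{a \sim \pi'}[A^{\pi}(s,a)]| = \epsilon^{\pi}$, and $\|d_{\pi'} - d_{\pi}\|_1$ equals (up to the usual constant factor absorbed into the convention of the TV normalization) $\mathcal{D}_{\text{TV}}(d_{\pi'} \| d_{\pi})$. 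Dropping the absolute value to retain only the lower direction then yields
\begin{equation*}
J(\pi') \;\geq\; \mathcal{L}_{\pi}(\pi') \;-\; \epsilon^{\pi}\, \mathcal{D}_{\text{TV}}(d_{\pi'} \| d_{\pi}),
\end{equation*}
which is the claimed bound.

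The one thing that genuinely requires care is the interpretation of ``state-action distributions'' in the statement: the action expectation $\mathbb{E}_{a \sim \pi'}$ appearing in $g$ couples the bound to $\pi'$ rather than to the actual joint $d_{\pi}(s)\pi(a|s)$. I would resolve this by either (i) absorbing the inner action expectation into the definition of $\epsilon^{\pi}$ and letting $d_{\pi}, d_{\pi'}$ denote state occupancies (as above), or (ii) lifting the argument to state-action space via the inequality $\mathcal{D}_{\text{TV}}(d_{\pi'}(s)\|d_{\pi}(s)) \leq \mathcal{D}_{\text{TV}}(d_{\pi'}(s,a)\|d_{\pi}(s,a))$ cited earlier in the paper, which only weakens the bound and preserves the inequality. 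Apart from this bookkeeping about constants and the marginal-vs-joint convention, no deep machinery beyond the performance difference lemma and H\"older's inequality is required; the main obstacle is purely notational consistency.
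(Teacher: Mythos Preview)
Your proposal is correct and is precisely the standard argument the paper is pointing to: the paper does not spell out a self-contained proof of this lemma but simply invokes the performance difference lemma of \citet{CPI} and refers to prior bounds (\citet{AchiamCPO}, \citet{ppo_dice}), and your derivation---subtracting the surrogate $\mathcal{L}_\pi(\pi')$ from the performance difference identity, writing the gap as $\sum_s (d_{\pi'}(s)-d_\pi(s))\,g(s)$, and applying the $L^\infty$--$L^1$ H\"older bound with $\|g\|_\infty=\epsilon^\pi$---is exactly that argument made explicit. Your handling of the state-vs.\ state-action ambiguity via the inequality $\mathcal{D}_{\text{TV}}(d_{\pi'}(s)\|d_\pi(s))\le \mathcal{D}_{\text{TV}}(d_{\pi'}(s,a)\|d_\pi(s,a))$ also matches what the paper does immediately after stating the lemma.
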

which can be further written in terms of the surrogate objective $\mathcal{L}_{\pi}(\pi')$ as :
\begin{align}
\label{eq:pdl_tv_statedistn}
    & J(\pi') \geq J(\pi) + \mathbb{E}_{s \sim d_{\pi}, a \sim \pi'} [A^{\pi}(s,a)] - \epsilon^{\pi} \mathcal{D}_{\text{TV}}(d_{\pi'} || d_{\pi})\notag \\
    & = \mathcal{L}_{\pi}(\pi') - \epsilon^{\pi} \mathcal{D}_{\text{TV}}(d_{\pi'} || d_{\pi})
\end{align}

\subsection{Proof of Theorem \ref{thm:policy_improvement_variance_bound}   : Policy Improvement Bound with Variance Regularization}
\label{app:thm:policy_improvement_variance_bound}
\begin{proof} We provide derivation for theorem \ref{thm:policy_improvement_variance_bound}. Recall that for all policies $\pi'$ and $\pi$, and corresponding state visitation distributions $d_{\pi'}$ and $d_{\pi}$, we can obtain the performance improvement bound in terms of the variance of state-action distribution corrections 
\begin{equation}
    J(\pi') - J(\pi) \geq \mathbb{E}_{s \sim d_{\pi}, a \sim \pi'} \Big[ A^{\pi}(s,a) \Big] -  \text{Var}_{s \sim d_{\pi}, a \sim \pi} \Big[ f(s,a)     \Big]
\end{equation}
where $f(s,a)$ is the dual function class, for the divergence between $d_{\pi'}(s,a)$ and $d_{\pi}(s,a)$
Following from Pinsker's inequality, the performance difference lemma written in terms of the state visitation distributions can be given by :
\begin{align}
    & J(\pi') \geq \mathcal{L}_{\pi}(\pi') - \epsilon^{\pi} \mathcal{D}_{\text{TV}}(d_{\pi'} || d_{\pi}) \notag \\
    & \geq J(\pi) +  \mathbb{E}_{s \sim d_{\pi}, a \sim \pi'} [ A^{\pi}(s,a) ] - \epsilon^{\pi} \mathcal{D}_{\text{TV}}(d_{\pi'} || d_{\pi}) \notag \\
    & \geq J(\pi) +  \mathbb{E}_{s \sim d_{\pi}, a \sim \pi'} [ A^{\pi}(s,a) ] - \epsilon^{\pi} \sqrt{ \mathcal{D}_{\text{KL}}(d_{\pi'} || d_{\pi}) }
\end{align}
Following from \citep{TRPO}, we can alternately write this as follows, through the use of Pinsker's inequality,
\begin{align}
    & J(\pi') \geq J(\pi) +  \mathbb{E}_{s \sim d_{\pi}, a \sim \pi'} \Big[ A^{\pi}(s,a) \Big]  - C \cdot \mathbb{E}_{s \sim d_{\pi}} \Big [\mathcal{D}_{\text{TV}}(d_{\pi'} || d_{\pi})^{2}  \Big]\notag \\
    &= J(\pi) +  \mathbb{E}_{s \sim d_{\pi}, a \sim \pi'} \Big[ A^{\pi}(s,a) \Big]  - C \cdot   \mathbb{E}_{s \sim d_{\pi}} \Big [ \Big( \max_{f}  \{ \mathbb{E}_{s \sim d_{\pi'}, a \sim \pi} [ f(s,a) ]  -  \mathbb{E}_{s \sim d_{\pi}, a \sim \pi} [ f(s,a) ]  \} \Big)^{2} \Big]\notag \\
    & \geq J(\pi)  +  \mathbb{E}_{s \sim d_{\pi}, a \sim \pi'} \Big[ A^{\pi}(s,a) \Big]  - C \cdot  \max_{f}  \mathbb{E}_{s \sim d_{\pi}} \Big[  \Big(  \mathbb{E}_{s \sim d_{\pi'}, a\sim \pi} [ f(s,a) ]  -  \mathbb{E}_{s \sim d_{\pi}, a \sim \pi} [ f(s,a) ]  \Big)^{2}  \Big] \notag \\
    &= J(\pi) +  \mathbb{E}_{s \sim d_{\pi}, a \sim \pi'} \Big[ A^{\pi}(s,a) \Big]  - C \cdot  \max_{f} \Big\{ \Big( \mathbb{E}_{s \sim d_{\pi}, a \sim \pi} [ f(s,a) ] -  \mathbb{E}_{s \sim d_{\pi}, a \sim \pi} [  \mathbb{E}_{s \sim d_{\pi}, a \sim \pi} [ f(s,a)] ] \Big)^{2} \Big \} \notag \\
    & = J(\pi) +  \mathbb{E}_{s \sim d_{\pi}, a \sim \pi'} \Big[ A^{\pi}(s,a) \Big]  - C \cdot \max_{f} \text{Var}_{s \sim d_{\pi}, a \sim \pi} \Big[ f(s,a) \Big] 
\end{align}
Therefore the policy improvement bound depends on maximizing the variational representation $f(s,a)$ of the f-divergence to guaranetee improvements from $J(\pi)$ to $J(\pi')$. This therefore leads to the stated result in theorem \ref{thm:policy_improvement_variance_bound}. 
\end{proof}

\section{Appendix : Lower Bound Objective with Variance Regularization}

\subsection{Proof of Lemma \ref{lemma:lower_bound_lemma}}
\label{app:sec-lower_bound_lemma}

Recalling lemma \ref{lemma:lower_bound_lemma} which states that, the proof of this follows from \citep{POIS}. We extend this for marginalized importance weighting, and include here for completeness. Note that compared to importance weighting, which leads to an unbiased estimator as in \citep{POIS}, correcting for the state-action occupancy measures leads to a biased estimator, due to the approximation $\hat{\omega}_{\pi/\mathcal{D}}$. However, for our analysis, we only require to show a lower bound objective, and therefore do not provide any bias variance analysis as in off-policy evaluation. 
\begin{equation}
    \text{Var}_{(s,a) \sim d_{\mathcal{D}}(s,a)} \Big[  \hat{\omega}_{\pi/\mathcal{D}} \Big] \leq \frac{1}{N} || r ||_{\infty}^{2} \mathcal{F}_{2} ( d_{\pi} || d_{\mathcal{D}} ) 
\end{equation}
\begin{proof} Assuming that state action samples are drawn i.i.d from the dataset $\mathcal{D}$, we can write : 
\begin{align}
    & \text{Var}_{(s,a) \sim d_{\mathcal{D}}(s,a)} \Big[  \hat{\omega}_{\pi/\mathcal{D}}(s,a) \Big] \leq \frac{1}{N} \text{Var}_{(s_1, a_1) \sim d_{\mathcal{D}}(s,a)} \Big[ \frac{d_{\pi}(s_1, a_1}{d_{\mathcal{D}}(s_1, a_1)}  \cdot r(s_1, a_1) \Big] \notag \\
    & \leq \frac{1}{N} \mathbb{E}_{(s_1, a_1) \sim d_{\mathcal{D}}(s,a)} \Big[  \Big(  \frac{d_{\pi}(s_1, a_1)}{d_{\mathcal{D}}(s_1, a_1)} \cdot r(s_1, a_1) \Big)^{2}  \Big] \notag \\
    & \leq \frac{1}{N} || r ||_{\infty}^{2} \mathbb{E}_{(s_1, a_1) \sim d_{\mathcal{D}}(s,a)} \Big[  \Big(  \frac{d_{\pi}(s_1, a_1)}{d_{\mathcal{D}}(s_1, a_1)} \cdot  r(s_1, a_1)  \Big)^{2}  \Big] = \frac{1}{N} || r ||_{\infty}^{2} \mathcal{F}_{2} ( d_{\pi} || d_{\mathcal{D}} )
\end{align}
\end{proof}

\subsection{Proof of Theorem \ref{thm:pdl_variance_bounds}: }
First let us recall the stated theorem \ref{thm:pdl_variance_bounds}. By constraining the off-policy optimization problem with variance constraints, we have the following lower bound to the  optimization objective with stationary state-action distribution corrections
\begin{equation}
\label{eq:lower_bound_obj}
     J(\pi) \geq \mathbb{E}_{(s,a) \sim d_{\mathcal{D}}(s,a)} [ \frac{d_{\pi}(s,a)}{d_{\mathcal{D}}(s,a)} r(s,a)  ] - \sqrt{ \frac{1 - \delta}{\delta}  \text{Var}_{(s,a) \sim d_{\mu}(s,a)} [ \frac{d_{\pi}(s,a)}{d_{\mathcal{D}}(s,a)} r(s,a) ]  } 
\end{equation}
\begin{proof}
The proof for the lower bound objective can be obtained as follows. We first define a relationship between the variance and the $\alpha$-divergence with $\alpha=2$, as also similarly noted in \citep{POIS}. Given we have batch samples $\mathcal{D}$, and denoting the state-action distribution correction with $\omega_{\pi/\mathcal{D}}(s,a)$, we can write from lemma \ref{lemma:lower_bound_lemma} : 
\begin{equation}
    \text{Var}_{(s,a) \sim d_{\mathcal{D}}(s,a)} \Big[  \hat{\omega}_{\pi/\mathcal{D}} \Big] \leq \frac{1}{N} || r ||_{\infty}^{2} \mathcal{F}_{2} ( d_{\pi} || d_{\mathcal{D}} )
\end{equation}
where the per-step estimator with state-action distribution corrections is given by $\omega_{\pi/\mathcal{D}}(s,a) \cdot r(s,a)$. Here, the reward function $r(s,a)$ is a bounded function, and for any $N > 0$ the variance of the per-step reward estimator with distribution corrections can be upper bounded by the Renyi-divergence ($\alpha=2$). Finally, following from \citep{POIS} and using Cantelli's inequality, we have with probability at least $1 - \delta$ where $0 < \delta < 1$ :
\begin{align}
\text{Pr} \Big( \omega_{\pi/\mathcal{D}}  - J(\pi) \geq \lambda \Big)  \leq \frac{1}{ 1 + \frac{\lambda^{2}}{ \text{Var}_{(s,a) \sim d_{\mathcal{D}}(s,a)}  [ \omega_{\pi/\mathcal{D}}(s,a) \cdot r(s,a) ]}}    
\end{align}
and by using $\delta = \frac{1}{ 1 + \frac{\lambda^{2}}{ \text{Var}_{(s,a) \sim d_{\mathcal{D}}(s,a)}  [ \omega_{\pi/\mathcal{D}}(s,a) \cdot r(s,a) ]}}$ we get that with probability at least $1 - \delta$, we have: 
\begin{align}
    J(\pi) = \mathbb{E}_{(s,a) \sim d_{\pi}(s,a)} \geq \mathbb{E}_{(s,a) \sim d_{\mathcal{D}}(s,a)} [ \omega_{\pi/\mathcal{D}}(s,a) \cdot r(s,a)   ] - \sqrt{ \frac{1 - \delta}{\delta} \text{Var}_{(s,a) \sim d_{\mathcal{D}}(s,a)} [  \omega_{\pi/\mathcal{D}}(s,a) \cdot r(s,a) ]  }
\end{align}
where we can further replace the variance term with $\alpha=2$ for the Renyi divergence to conclude the proof for the above theorem. We can further write the lower bound for for $\alpha$-Renyi divergence, following the relation between variance and Renyi-divergence for $\alpha=2$ as :
\begin{equation*}
    J(\pi) = \mathbb{E}_{(s,a) \sim d_{\pi}(s,a)} [ r(s,a) ] \geq \mathbb{E}_{(s,a) \sim d_{\mathcal{D}}(s,a)} [ \frac{d_{\pi}(s,a)}{d_{\mathcal{D}}(s,a)} \cdot r(s,a)  ] - ||r||_{\infty} \sqrt{\frac{(1 - \delta) d_{2} (d_{\pi} || d_{\mathcal{D}})}{\delta N}}
\end{equation*}
This hints at the similarity between our proposed variance regularized objective with that of other related works including AlgaeDICE \citep{AlgaeDICE} which uses a f-divergence $D_{f}(d_{\pi} || d_{\mathcal{D} })$ between stationary distributions. 
\end{proof}

\section{Appendix : Additional Experimental Results}
\label{safety_appendix}
\subsection{Experimental Ablation Studies}
\label{app:sec_experiment_ablation_studies}
In this section, we present additional results using state-action experience replay weightings on existing offline algorithms, and analysing the significance of our variance regularizer on likelihood corrected offline algorithms. Denoting $\omega(s,a)$ for the importance weighting of state-action occupancy measures based on samples in the experience replay buffer, we can modify existing offline algorithms to account for state-action distribution ratios. 

The ablation experimental results using the Hopper control benchmark are summarized in figure \ref{fig:ablation}. The same base BCQ algorithm is used with a modified objective for BCQ~\citep{BCQ} where the results for applying off-policy importance weights are denoted as ``BCQ+I.W.''. We employ the same technique to obtain $\omega(s,a)$ for both the baseline and for adding variance regularization as described. The results suggest that adding the proposed per-step variance regularization scheme significantly outperforms just importance weighting the expected rewards for off-policy policy learning.
\begin{figure*}[h!]
\centering
  \hspace*{-0.9cm}
        \begin{subfigure}[b]{0.26\textwidth}
              \centering
    \includegraphics[width=0.9\textwidth]{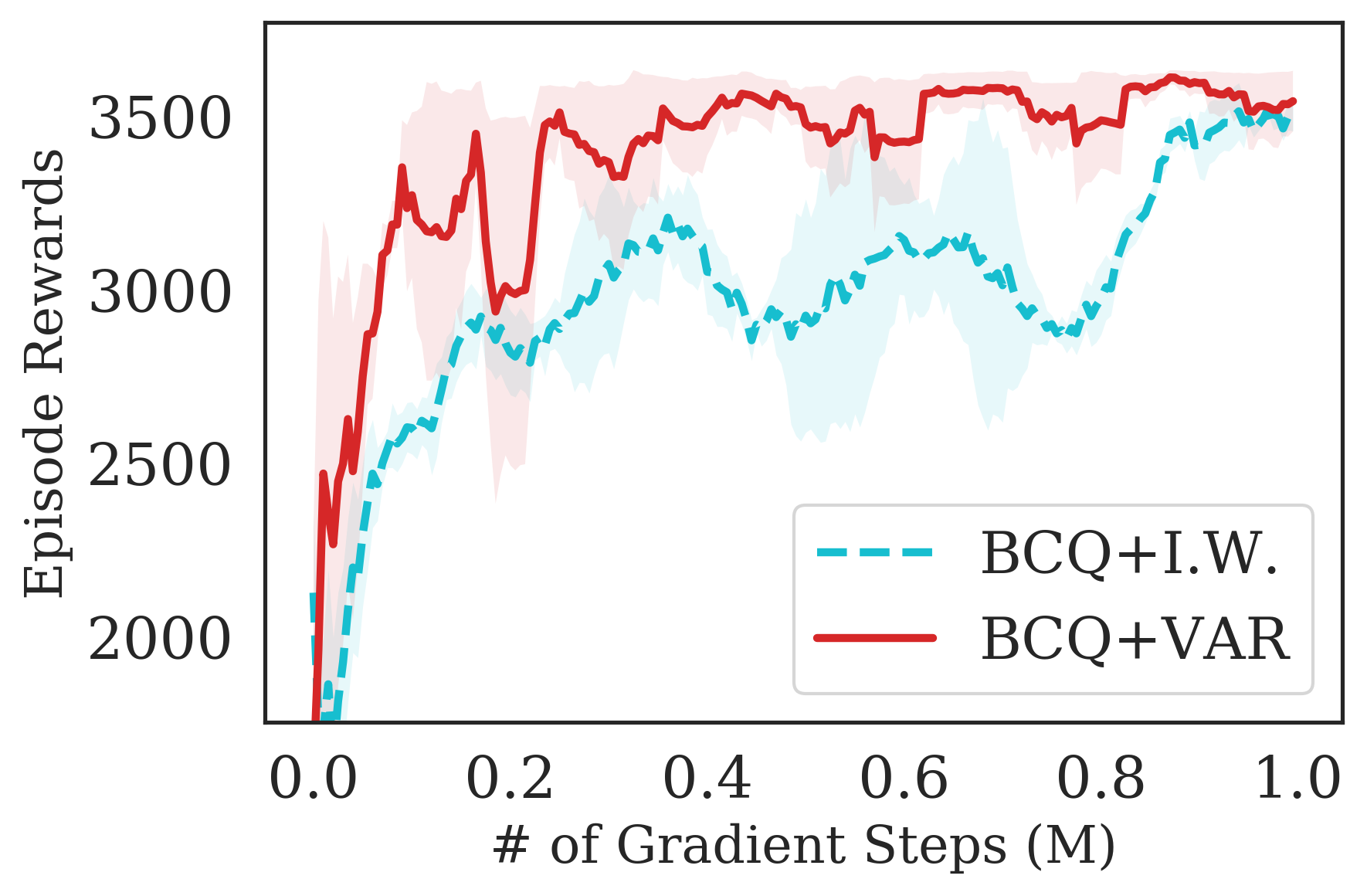}
    \caption{Hopper Expert ablation}
    \label{fig:reachidea}
        \end{subfigure}%
        \hspace*{-0.4cm}
          \begin{subfigure}[b]{0.26\textwidth}
              \centering
    \includegraphics[width=0.9\textwidth]{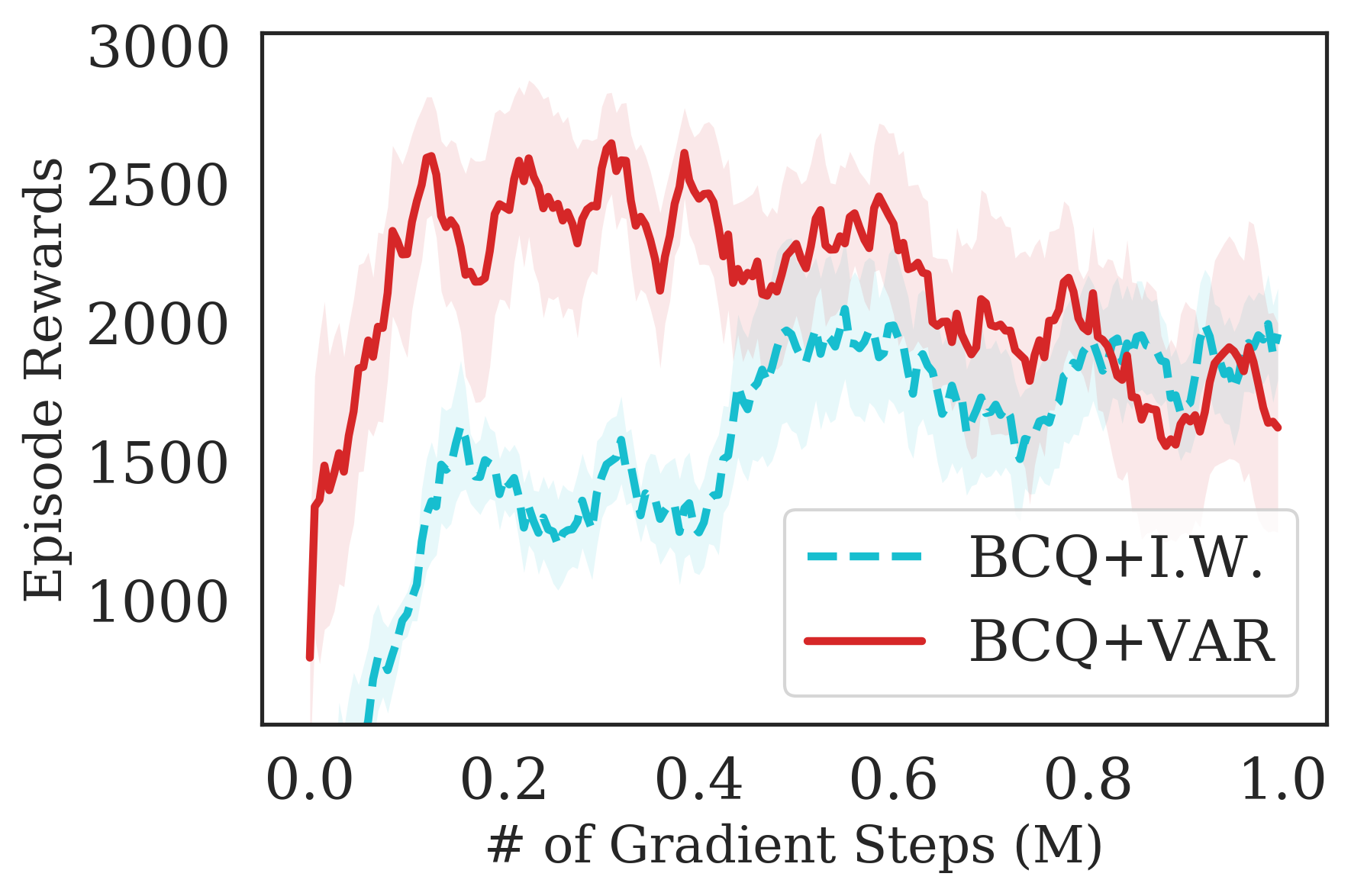}
    \caption{Hopper Medium ablation}
    \label{fig:reachidea}
        \end{subfigure}%
          \hspace*{-0.4cm}
          \begin{subfigure}[b]{0.26\textwidth}
              \centering
    \includegraphics[width=0.9\textwidth]{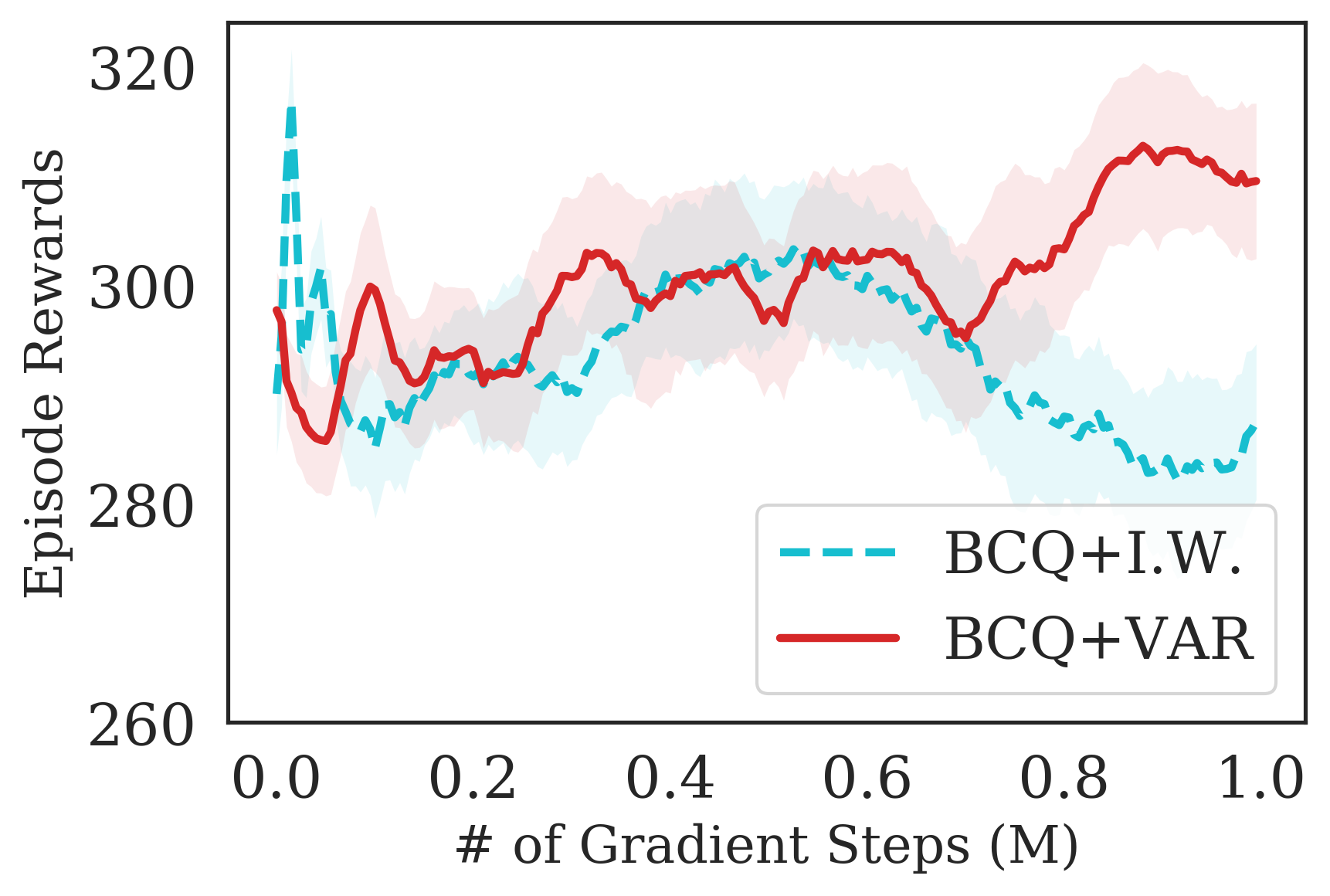}
    \caption{Hopper Random ablation}
    \label{fig:reachidea}
        \end{subfigure}%
         \hspace*{-0.4cm}
          \begin{subfigure}[b]{0.26\textwidth}
              \centering
    \includegraphics[width=0.9\textwidth]{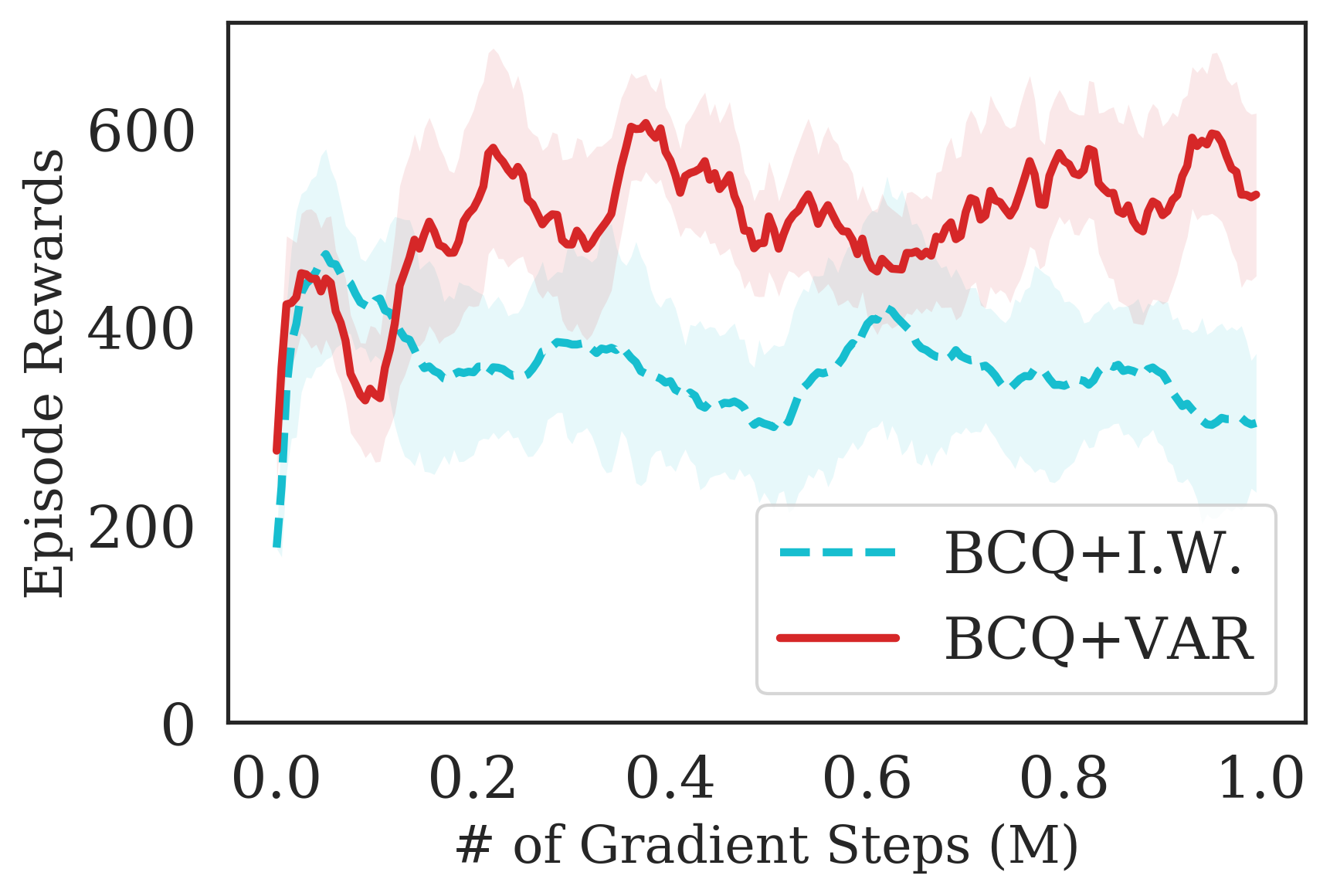}
    \caption{Hopper Mixed ablation}
    \label{fig:reachidea}
        \end{subfigure}%
     \caption{Ablation performed on Hopper. The mean and standard deviation are
     reported over 5 random seeds. The offline datasets for these experiments are same as the corresponding ones in Fig 1 of the main paper.}
 \label{fig:ablation}
 \vspace*{-0.2cm}
\end{figure*}

\begin{table}[t!]
    \centering
        \resizebox{\columnwidth}{!}{%
        \begin{tabular}{|l|l||c | c| c| c| c| c|}
        \hline
        \textbf{Domain} & \textbf{Task Name} & \textbf{BCQ+\algoName} & \textbf{BCQ} & \textbf{BEAR} & \textbf{BRAC-p} 	&	\textbf{aDICE}	&	\textbf{SAC-off} \\
        \hline 
        \hline
        \multirow{12}{*}{Adroit}
            & pen-human & \textbf{64.12} & 56.58 & -1 & 8.1		 & -3.3 & 6.3\\
            & hammer-human & \textbf{1.05} & 0.75 & 0.3 & 0.3 & 0.3 & 0.5\\
            & door-human & 0.00 & 0.00 & -0.3 &	-0.3 &	0	&	\textbf{3.9}\\
            & relocate-human & -0.13 & \textbf{-0.08} & -0.3 &	-0.3	&	-0.1	&	0 \\
            & pen-cloned & 40.84 & \textbf{41.09} & 26.5 &	1.6	&	-2.9	&	23.5 \\  
            & hammer-cloned & \textbf{0.78} & 0.35 & 0.3 &	0.3	&	0.3	&	0.2\\
            & door-cloned & \textbf{0.03} & \textbf{0.03} & -0.1 &	-0.1	&	0	&	0\\
            & relocate-cloned & -0.22 & -0.26 & -0.3 &	-0.3	&	-0.3	&	\textbf{-0.2}\\
            & pen-expert & \textbf{99.32} & 89.42 & \textbf{105.9} &	-3.5	&	-3.5	&	6.1\\  
            & hammer-expert & \textbf{119.32} & 108.38 & \textbf{127.3} &	0.3	&	0.3	&	25.2 \\
            & door-expert  & \textbf{100.39} & 101.33 & \textbf{103.4} &	-0.3	&	0	&	7.5\\
            & relocate-expert & 31.31 & 23.55 & \textbf{98.6} & -0.3	&	-0.1	&	-0.3 \\
        \hline
    \end{tabular}}
    \caption{The results on D4RL tasks compare BCQ \citep{BCQ} with and without \algoName, bootstrapping error reduction (BEAR) \citep{BEAR}, behavior-regularized actor critic with policy (BRAC-p) \citep{BRAC}, AlgeaDICE (aDICE) \citep{AlgaeDICE} and offline SAC (SAC-off) \citep{SAC}. 
    The results presented are the normalized returns on the task as per \citet{D4RL} (see Table 3 in \citet{D4RL} for the unnormalized scores on each task).}
    \label{tab:my_label__}
\end{table}

\subsection{Experimental Results in Corrupted Noise Settings}
We additionally consider a setting where the batch data is collected from a noisy environment, i.e, in settings with \textit{corrupted rewards}, 
$r\rightarrow r +\epsilon$, where $\epsilon\sim \mathcal{N}(0,1)$. Experimental results are presented in figures \ref{fig:mainfig_no_noise}, \ref{fig:mainfig_corrupted_noise}. 
From our results, we note that using \algoName on top of BCQ \citep{BCQ}, we can achieve significantly better performance with variance minimization, especially when the agent is given sub-optimal demonstrations. We denote it as \textit{medium} (when the dataset was collected by a half trained SAC policy) or a \textit{mixed} behaviour logging setting (when the data logging policy is a mixture of random and SAC policy). This is also useful for practical scalability, since often data collection is expensive from an expert policy. We add noise to the dataset, to examine the significance of \algoName under a noisy corrupted dataset setting. 

\setlength\belowcaptionskip{-0.05ex}
\begin{figure*}[h]
\centering
     \hspace*{-0.9cm}
    \begin{subfigure}[b]{0.26\textwidth}
            \centering
    \includegraphics[width=0.9\textwidth]{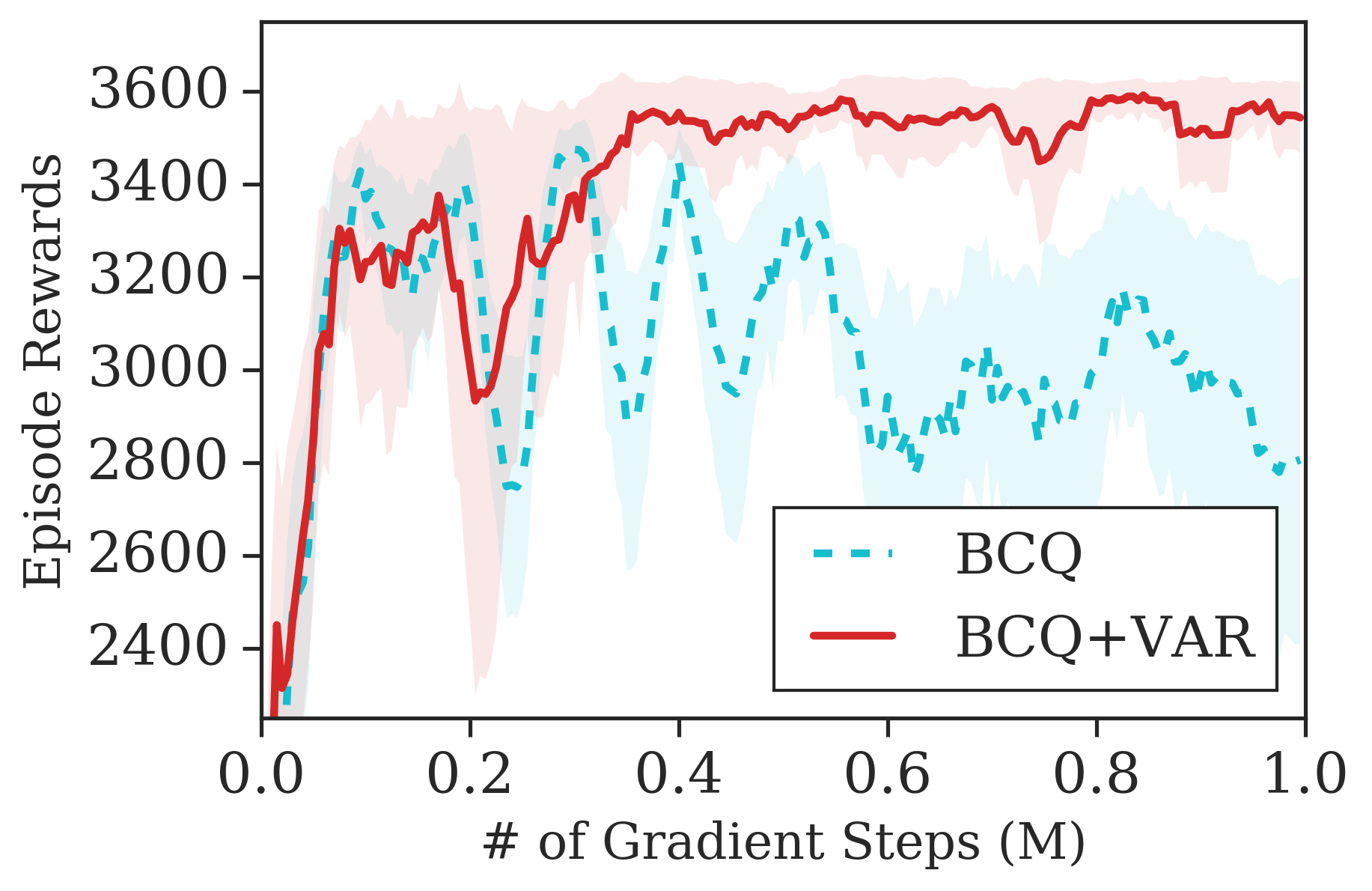}
    \caption{Hopper Expert w/ Noise}
    \label{fig:reachidea}
        \end{subfigure}%
        \hspace*{-0.4cm}
          \begin{subfigure}[b]{0.26\textwidth}
              \centering
    \includegraphics[width=0.9\textwidth]{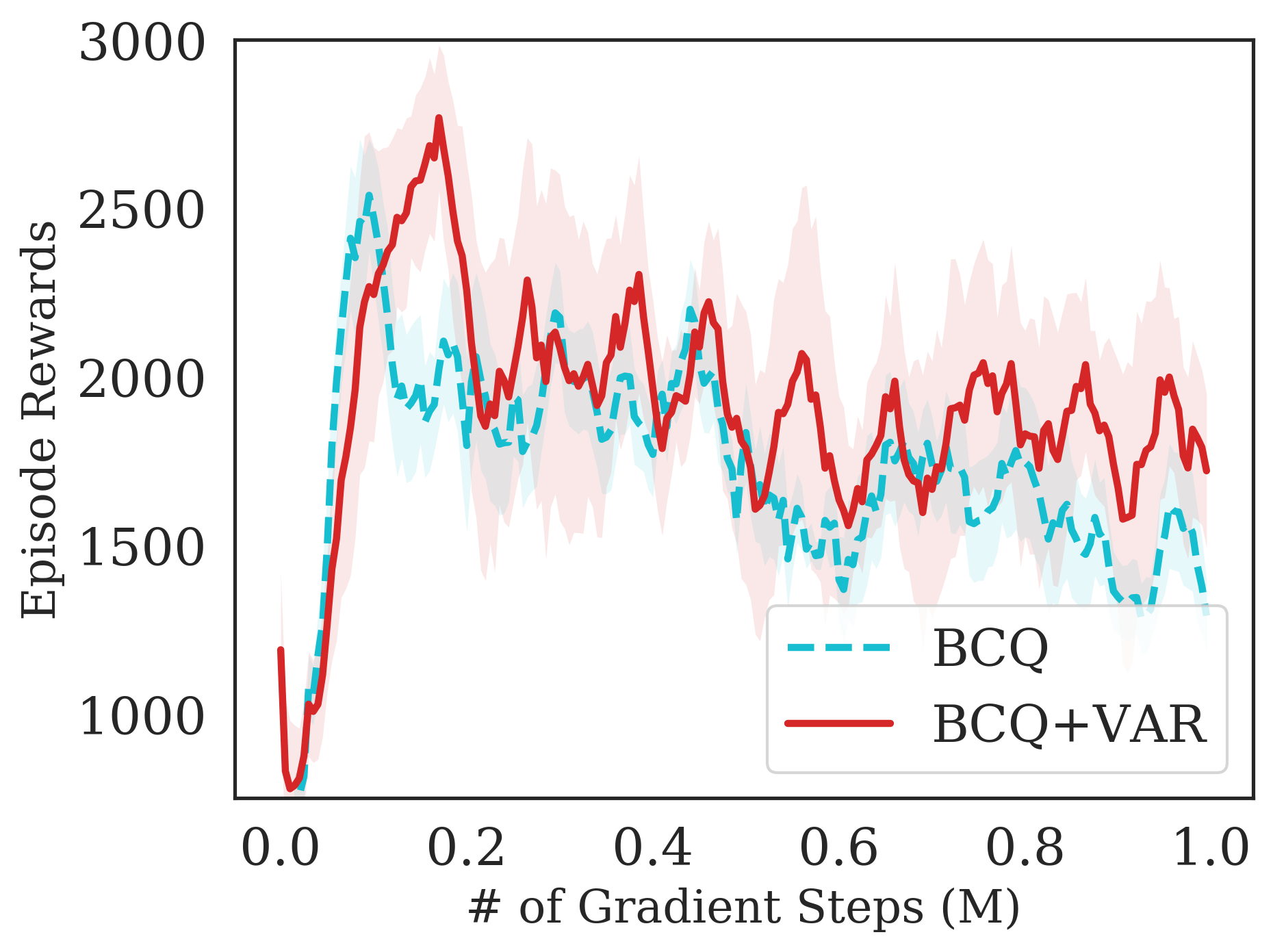}
    \caption{Hopper Medium w/ Noise}
    \label{fig:reachidea}
        \end{subfigure}%
        \hspace*{-0.4cm}
          \begin{subfigure}[b]{0.26\textwidth}
              \centering
    \includegraphics[width=0.9\textwidth]{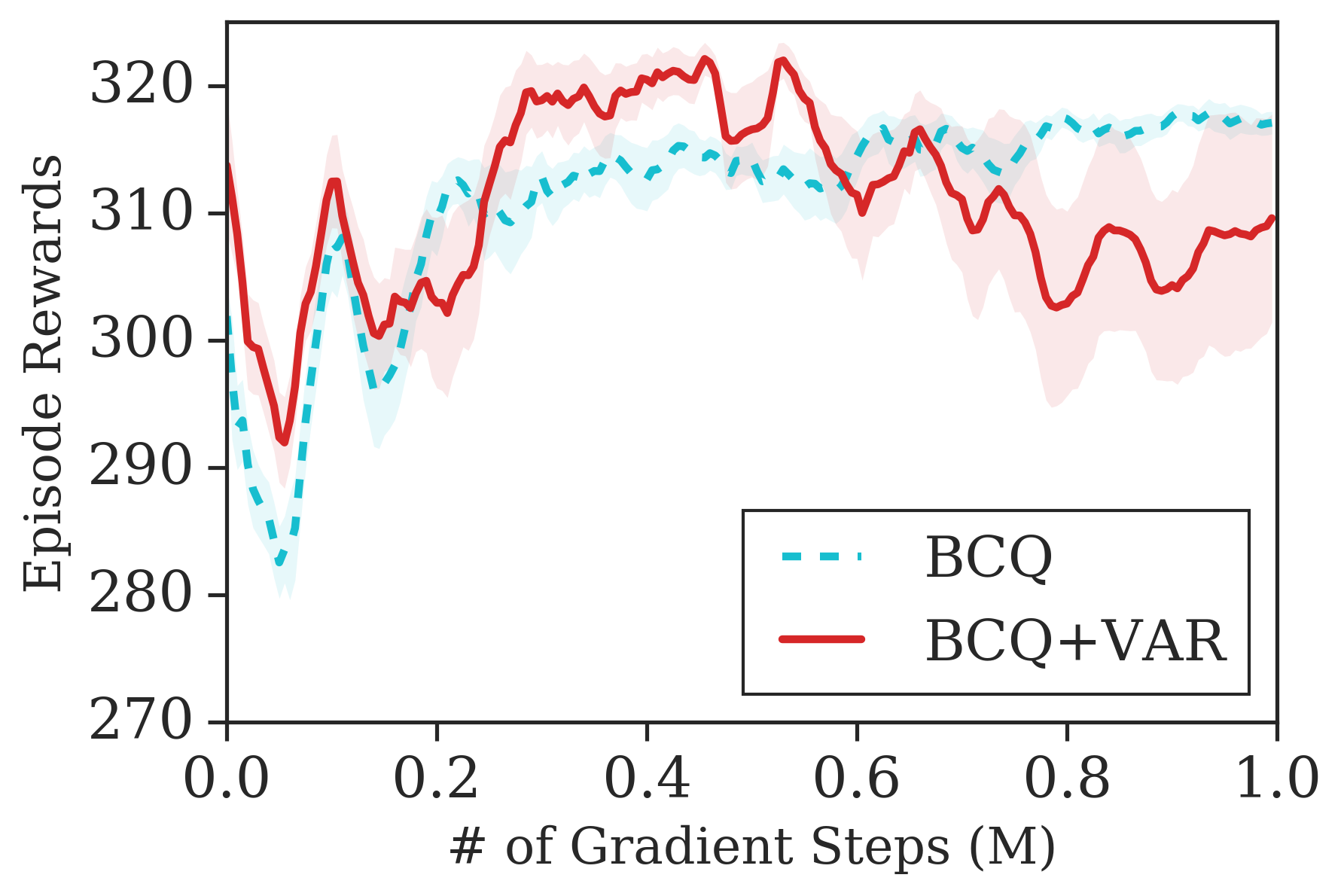}
    \caption{Hopper Random w/ Noise}
    \label{fig:reachidea}
        \end{subfigure}%
          \hspace*{-0.4cm}
          \begin{subfigure}[b]{0.26\textwidth}
              \centering
    \includegraphics[width=0.9\textwidth]{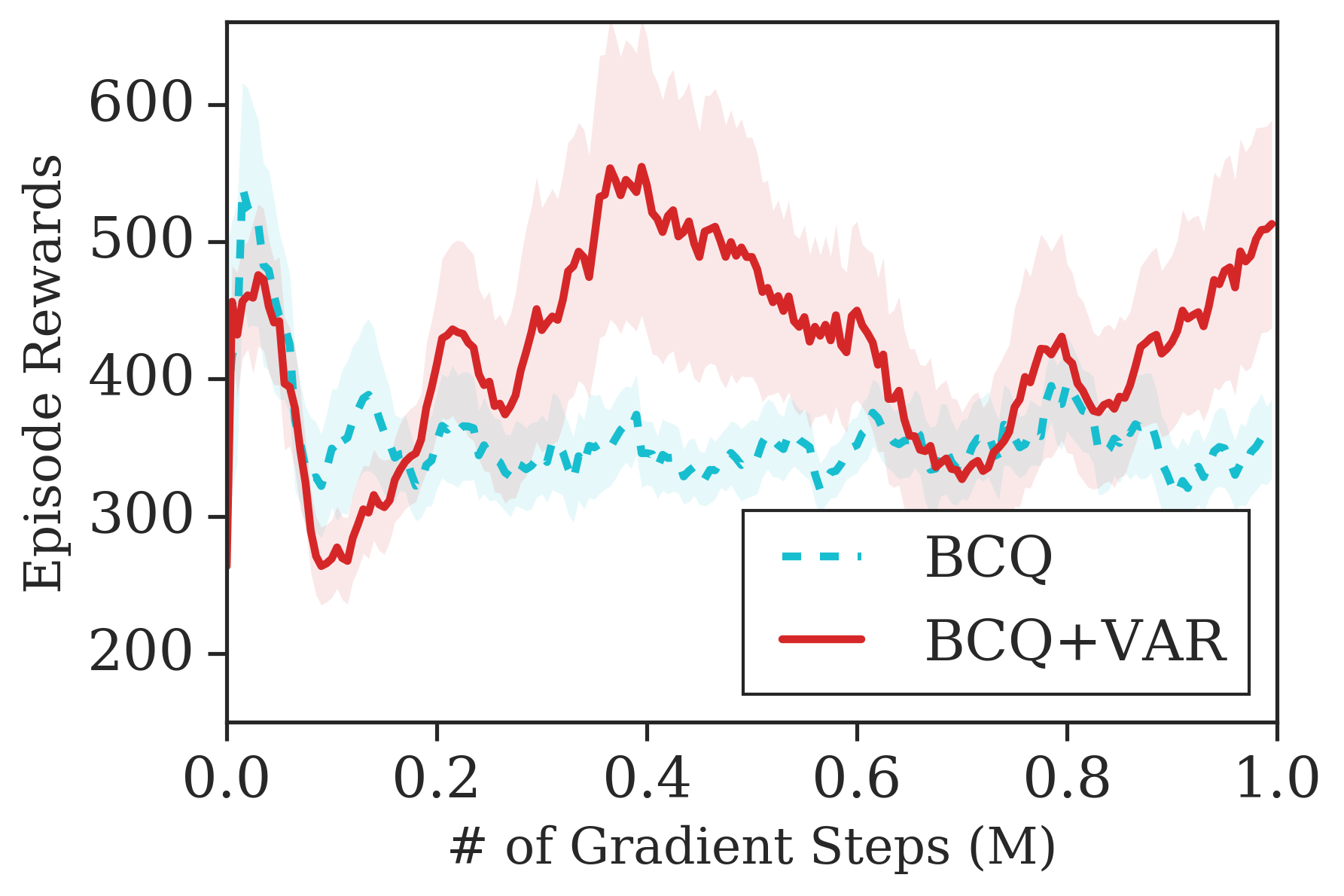}
    \caption{Hopper Mixed w/ Noise}
    \label{fig:reachidea}
        \end{subfigure}%
        
          \hspace*{-0.9cm}
        \begin{subfigure}[b]{0.26\textwidth}
              \centering
    \includegraphics[width=0.9\textwidth]{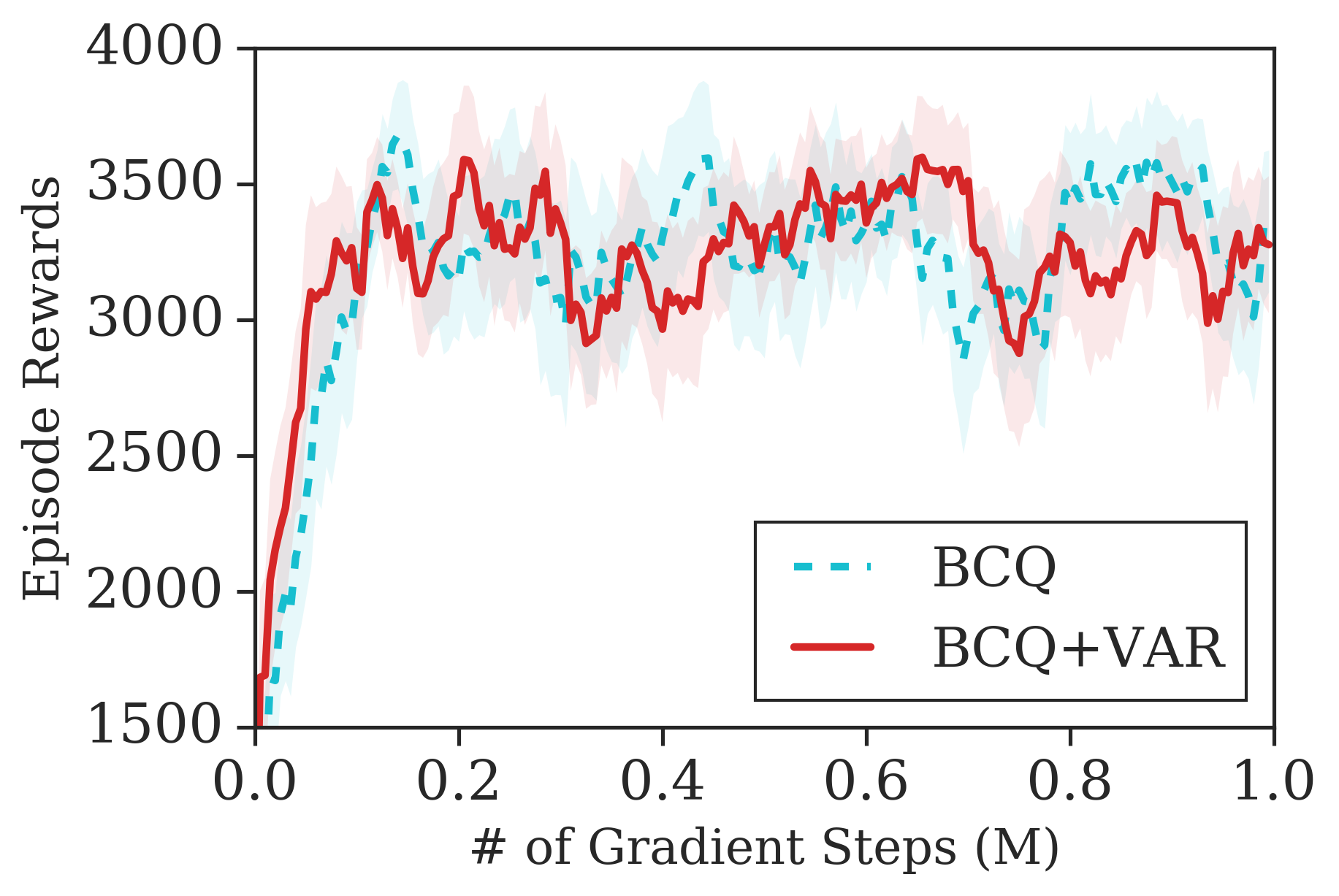}
    \caption{Walker Expert w/ Noise}
    \label{fig:reachidea}
        \end{subfigure}%
        \hspace*{-0.4cm}
          \begin{subfigure}[b]{0.26\textwidth}
              \centering
    \includegraphics[width=0.9\textwidth]{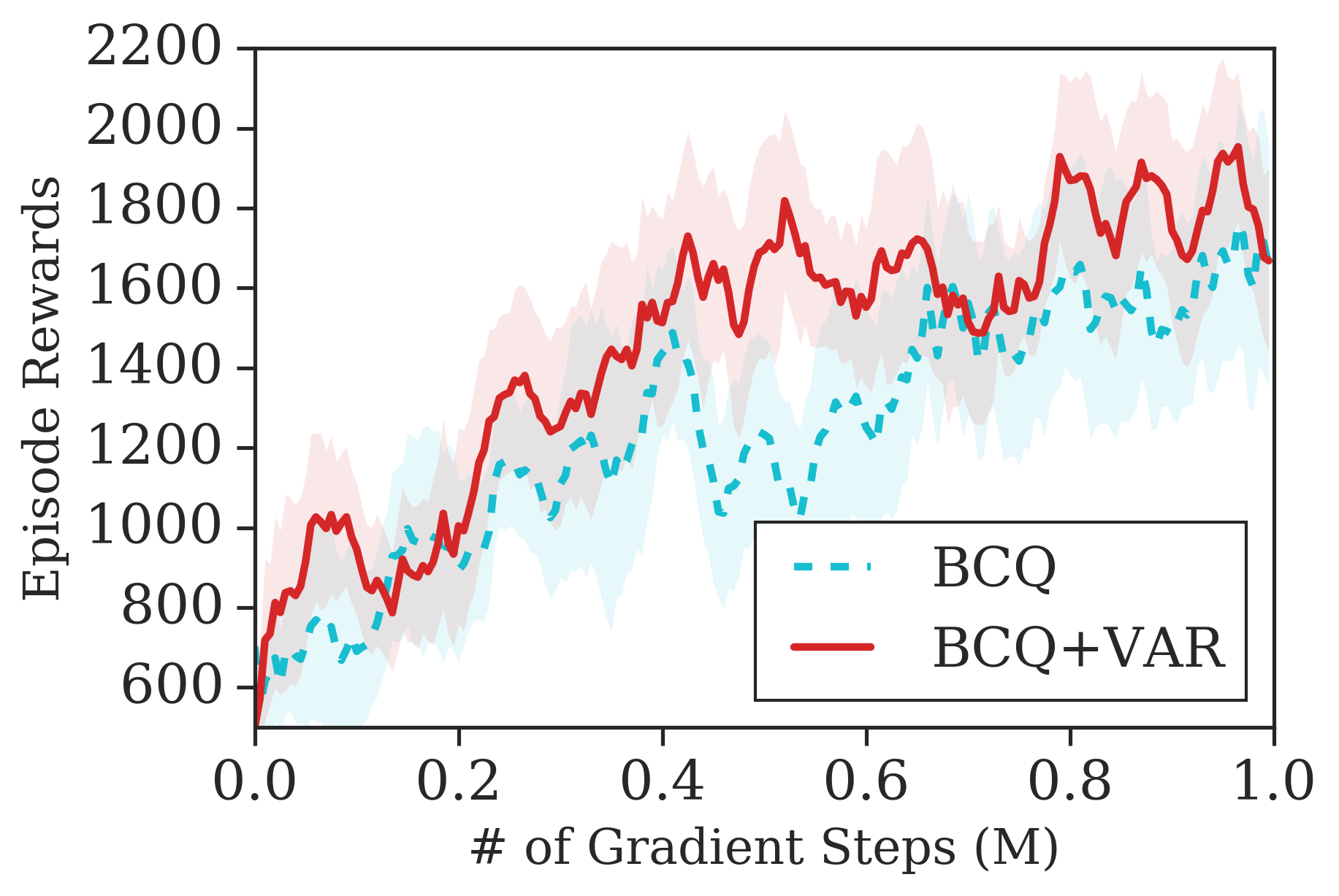}
    \caption{Walker Medium w/ Noise}
    \label{fig:reachidea}
        \end{subfigure}%
         \hspace*{-0.4cm}
          \begin{subfigure}[b]{0.26\textwidth}
              \centering
    \includegraphics[width=0.9\textwidth]{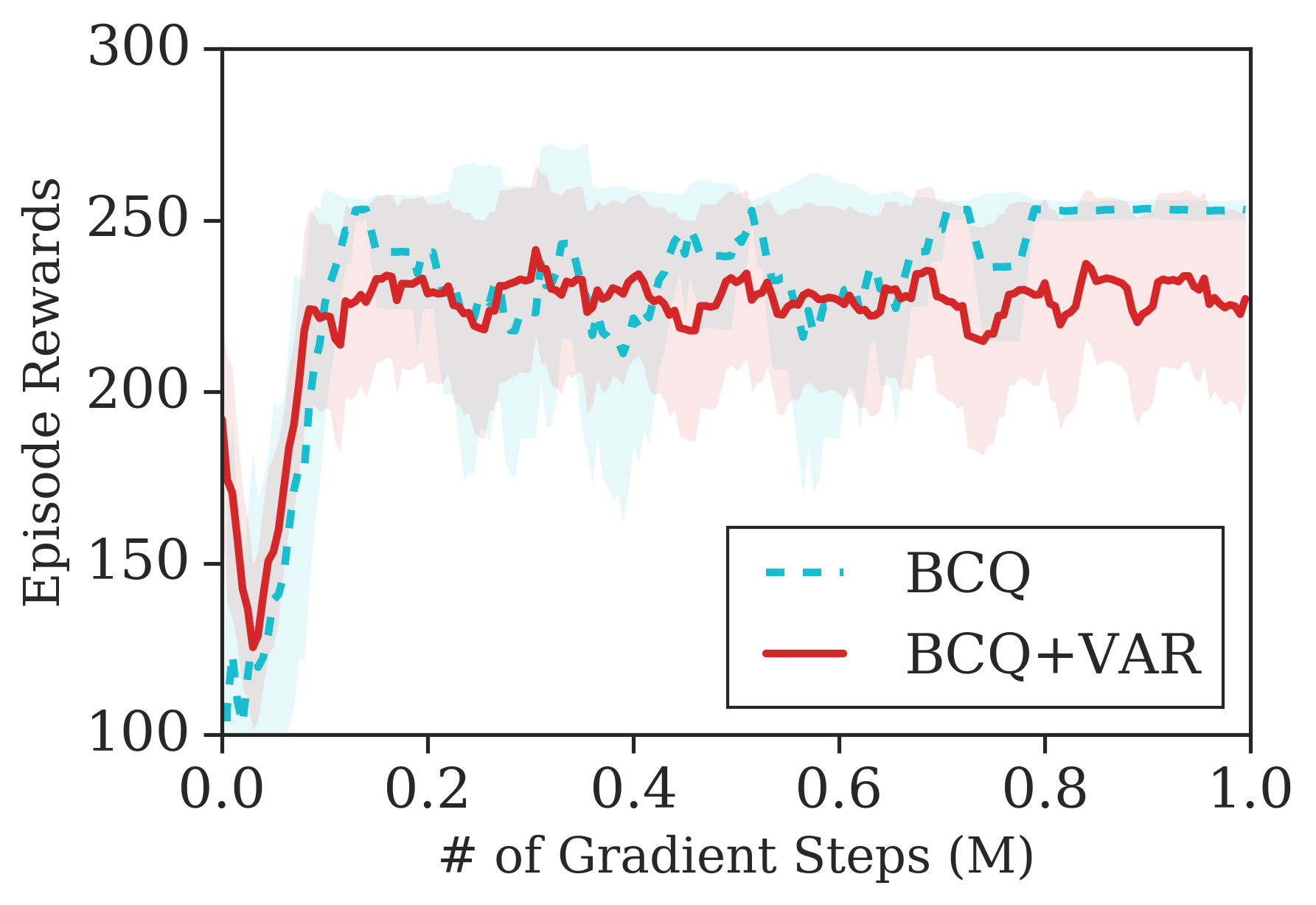}
    \caption{Walker Random w/ Noise}
    \label{fig:reachidea}
        \end{subfigure}%
          \hspace*{-0.4cm}
          \begin{subfigure}[b]{0.26\textwidth}
              \centering
    \includegraphics[width=0.9\textwidth]{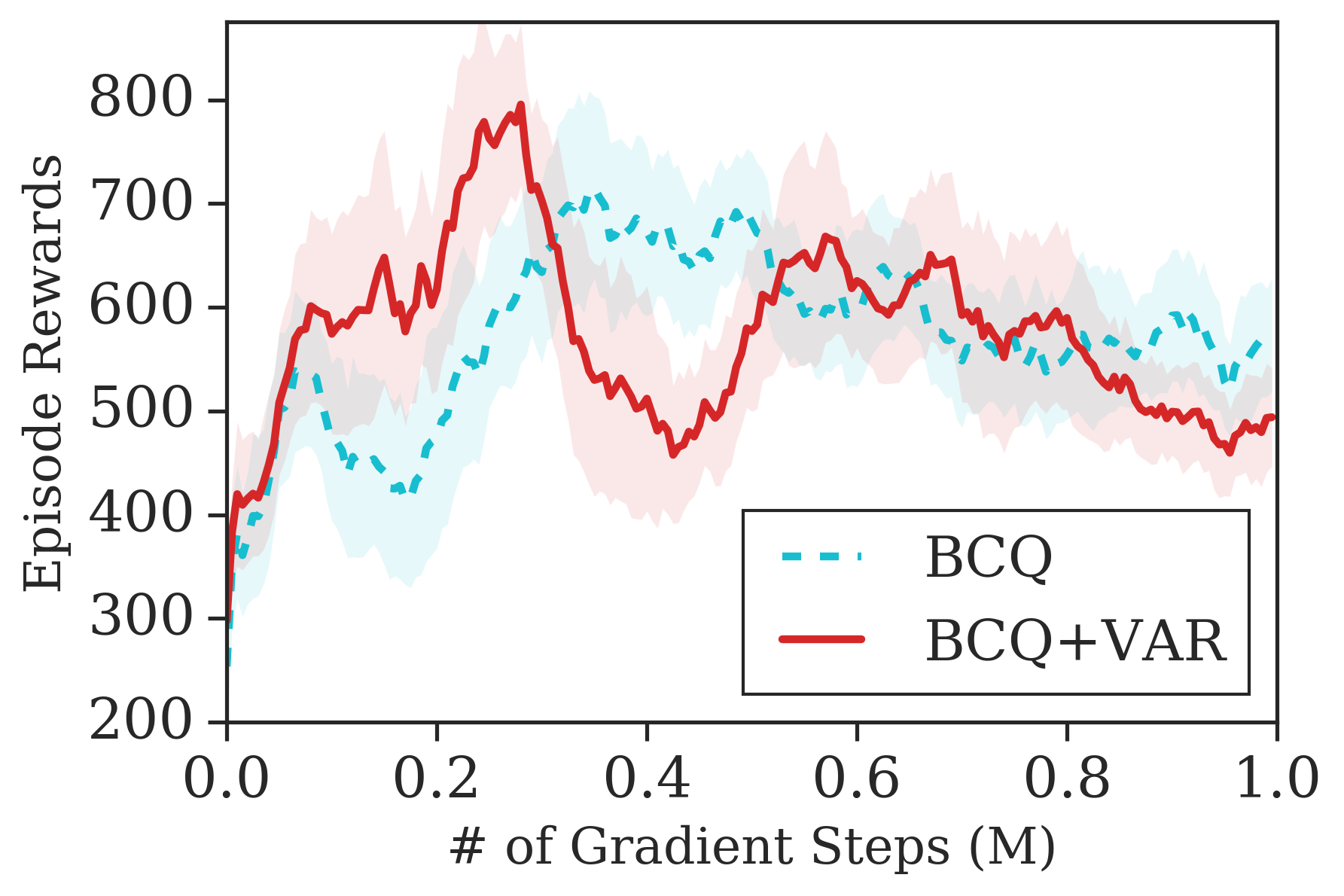}
    \caption{Walker Mixed w/ Noise}
    \label{fig:reachidea}
        \end{subfigure}%
     \caption{Evaluation of the proposed approach and the baseline BCQ on a suite of three OpenAI Gym environments. We consider the setting of rewards that are corrupted by a Gaussian noise. Results for the uncorrupted version are in Fig.~\ref{fig:mainfig_no_noise}.  Experiment results are averaged over 5 random seeds} 
 \label{fig:mainfig_corrupted_noise}
\end{figure*}

\subsection{Experimental Results on Safety Benchmark Tasks}
\textbf{Safety Benchmarks for Variance as Risk : } We additionally consider safety benchmarks for control tasks, to analyse the significance of variance regularizer as a risk constraint in offline policy optimization algorithms. Our results are summarized in table \ref{table:safety_results}.

\begin{table}[t]
\begin{minipage}[c]{0.2\textwidth}
\centering
\caption{Results on the Safety-Gym environments \citep{safetygym}. We report the mean and S.D. of episodic returns and costs over five random seeds and 1 million timesteps. The goal of the agent is to maximize the episodic return, while minimizing the cost incurred.}
\label{table:safety_results}
\end{minipage}
\begin{minipage}[c]{0.45\textwidth}
    \begin{tabular}{l|c c | c c }
        \toprule
        &  \multicolumn{2}{c|}{\textbf{PointGoal1}} & \multicolumn{2}{c}{\textbf{PointGoal2}} \\
        \cmidrule(lr){2-3} \cmidrule(lr){4-5}
        & Reward & Cost & Reward & Cost \\
        \midrule
        \textbf{BCQ} & 43.1 $\pm$ 0.3 & 137.0 $\pm$ 3.6 & 32.7$ \pm$ 0.7 & 468.2 $\pm$ 9.1  \\ 
        \midrule
        \textbf{BCQ+\algoName} & \textbf{44.2} $\pm$ 0.3 & \textbf{127.1} $\pm$ 4.0 & \textbf{33.2} $\pm$ 0.7 & \textbf{453.9} $\pm$ 7.3 \\
        \midrule
        \midrule
         & \multicolumn{2}{c|}{\textbf{PointButton1}}  & \multicolumn{2}{c}{\textbf{PointButton2}}  \\
        \cmidrule(lr){2-3} \cmidrule(lr){4-5}
          & Reward & Cost & Reward & Cost \\
          \midrule
        \textbf{BCQ} & \textbf{30.9} $\pm$ 2.2 & 330.8 $\pm$ 8.3 & 18.1 $\pm$ 1.1 & 321.6 $\pm$ 4.1 \\
        \midrule
        \textbf{BCQ+\algoName} & 30.7 $\pm$ 2.3 & \textbf{321.5} $\pm$ 6.8 & \textbf{19.6} $\pm$ 1.0 & \textbf{305.7} $\pm$ 6.1 \\
        \bottomrule
    \end{tabular}
\end{minipage}
\vspace{-1.8em}
\end{table}

\subsection{Discussions on Offline Off-Policy Optimization with State-Action Distribution Ratios}
\label{app:sec-estimating_dist_ratio}
In this section, we include several alternatives by which we can compute the stationary state-action distribution ratio, borrowing from recent works \citep{MWL_Uehara, DualDICE}.

\textbf{Off-Policy Optimization with Minimax Weight Learning (MWL) : } We discuss other possible ways of optimizing the batch off-policy optimization objective while also estimating the state-action density ratio.
Following from \citep{MWL_Uehara} we further modify the off-policy optimization part of the objective $J(\theta)$ in $\mathcal{L}(\theta, \lambda)$ as a min-max objective, consisting of weight learning $\omega_{\pi/\mathcal{D}}$ and optimizing the resulting objective $J(\theta, \omega)$. We further propose an overall policy optimization objective, where a single objective can be used for estimating the distribution ratio, evaluating the critic and optimizing the resulting objective. We can write the off-policy optimization objective with its equivalent starting state formulation, such that we have : 
\begin{equation}
    \mathbb{E}_{d_{\mathcal{D}}(s,a)} \Big[ \omega_{\pi_{\theta}/\mathcal{D}}(s,a) \cdot r(s,a)  \Big] = (1 - \gamma) \mathbb{E}_{s_0 \sim \beta_{0}(s), a_0 \sim \pi(\cdot \mid s_0)} \Big[ Q^{\pi}(s_0, a_0)  \Big]
\end{equation}
Furthermore, following Bellman equation, we expect to have $\mathbb{E} [r(s,a)] = \mathbb{E} [ Q^{\pi}(s,a) - \gamma Q^{\pi}(s', a') ]$
\begin{equation}
     \mathbb{E}_{d_{\mathcal{D}}(s,a)} \Big[ \omega_{\pi_{\theta}/\mathcal{D}}(s,a) \cdot \{ Q^{\pi}(s,a) - \gamma Q^{\pi}(s', a')  \}  \Big] = (1 - \gamma) \mathbb{E}_{s_0 \sim \beta_{0}(s), a_0 \sim \pi(\cdot \mid s_0)} \Big[ Q^{\pi}(s_0, a_0)  \Big]
\end{equation}
We can therefore write the overall objective as : 
\begin{multline}
\label{eq:MWL_Optimization}
    J(\omega, \pi_{\theta}, Q) =   \mathbb{E}_{d_{\mathcal{D}}(s,a)} \Big[ \omega_{\pi_{\theta}/\mathcal{D}}(s,a) \cdot \{ Q^{\pi}(s,a) - \gamma Q^{\pi}(s', a')  \}  \Big] \\ - (1 - \gamma) \mathbb{E}_{s_0 \sim \beta_{0}(s), a_0 \sim \pi(\cdot \mid s_0)} \Big[ Q^{\pi}(s_0, a_0)  \Big]
\end{multline}
This is similar to the MWL objective in \citep{MWL_Uehara} except we instead consider the bias reduced estimator, such that accurate estimates of $Q$ or $\omega$ will lead to reduced bias of the value function estimation. Furthermore, note that in the first part of the objective $J(\pi_{\theta}, \omega, Q)^{2}$, we can further use entropy regularization for smoothing the objective, since instead of $Q^{\pi}(s', a')$ in the target, we can replace it with a log-sum-exp and considering the conjugate of the entropy regularization term, similar to SBEED \citep{SBEED}. This would therefore give the first part of the objective as an overall min-max optimization problem : 
\begin{multline}
\label{eq:overall_PG_objective_1}
    J(\omega, \pi_{\theta}) 
    = \mathbb{E}_{d_{\mu}(s,a)} \Big[ \omega_{\pi_{\theta}/\mathcal{D}}(s,a) \cdot  \{  r(s,a) + \gamma Q^{\pi}(s', a') + \tau \log \pi(a \mid s) - Q^{\pi}(s,a) \}  \Big] \\
    + (1 - \gamma) \mathbb{E}_{s_0 \sim \beta_{0}(s), a_0 \sim \pi(\cdot \mid s_0)} \Big[ Q^{\pi}(s_0, a_0)  \Big]
\end{multline}
such that from our overall constrained optimization objective for maximizing $\theta$, we have turned it into a min-max objective, for estimating the density ratios, estimating the value function and maximizing the policies
\begin{equation}
\omega_{\pi/\mathcal{D}}^{*}, Q^{*}, \pi^{*} = \underset{\omega, Q}{\text{argmin}} \quad \underset{\pi}{\text{argmax}} \quad J(\pi_{\theta}, \omega, Q)^{2} 
\end{equation}
where the fixed point solution for the density ratio can be solved by minimizing the objective : 
\begin{multline}
    \omega_{\pi/\mathcal{D}}^{*} = \underset{\omega}{\text{argmin}} \quad  \mathcal{L}(\omega_{\pi/\mathcal{D}}, Q)^{2} = \mathbb{E}_{d_{\mu}(s,a)} \Big[  \{\gamma \omega(s,a) \cdot Q^{\pi}(s',a') - \omega(s,a) Q^{\pi}(s,a)\} + \\
    (1 - \gamma) \mathbb{E}_{\beta(s,a)} Q^{\pi}(s_0, a_0)  \Big]
\end{multline}

\textbf{DualDICE : } In contrast to MWL \citep{MWL_Uehara}, DualDICE \citep{DualDICE} introduces dual variables through the change of variables trick, and minimizes the Bellman residual of the dual variables $\nu(s,a)$ to estimate the ratio, such that : 
\begin{equation}
    \nu^{*}(s,a) - \mathcal{B}^{\pi} \nu^{*}(s,a) = \omega_{\pi/\mathcal{D}}(s,a)
\end{equation}
the solution to which can be achieved by optimizing the following objective
\begin{equation}
\label{eq:DualDICE}
    \min_{\nu} \mathcal{L}(\nu) = \frac{1}{2} \mathbb{E}_{d_{\mathcal{D}}} \Big[ (\nu - \mathcal{B}^{\pi} \nu)(s,a)^{2}  \Big] - (1 - \gamma) \mathbb{E}_{s_0, a_0 \sim \beta(s,a)} \Big[ \nu(s_0, a_0)  \Big]
\end{equation}

\textbf{Minimizing Divergence for Density Ratio Estimation : } The distribution ratio can be estimated using an objective similar to GANs \citep{GANs, GAIL}, as also similarly proposed in \citep{DistributionMatchingOfir}.
\begin{equation}
\label{eq:GANs}
    \max_{h} \mathcal{G}(h) = \mathbb{E}_{(s,a) \sim d_{\mathcal{D}}} \Big[  \log h(s,a) \Big] + \mathbb{E}_{(s,a) \sim d_{\pi}} \Big[ \log (1 - h(s,a)) \Big]
\end{equation}
where $h$ is the discriminator class, discriminating between samples from $d_{\mathcal{D}}$ and $d_{\pi}$. The optimal discriminator satisfies : 
\begin{equation}
\label{eq:opt_discriminator_GANs}
    \log h^{*}(s,a) - \log (1 - h^{*}(s,a)) = \log \frac{d_{\mathcal{D}}(s,a)}{d_{\pi}(s,a)}
\end{equation}
The optimal solution of the discriminator is therefore equivalent to minimizing the divergence between $d_{\pi}$ and $d_{\mathcal{D}}$, since the KL divergence is given by :
\begin{equation}
    - D_{\text{KL}}(d_{\pi} || d_{\mathcal{D}}) = \mathbb{E}_{(s,a) \sim d_{\pi}} \Big[  \log \frac{d_{\mathcal{D}}(s,a)}{d_{\pi}(s,a)} \Big]
\end{equation}
Additionally, using the Donsker-Varadhan representation, we can further write the KL divergence term as : 
\begin{equation}
\label{eq:KL_DV_Representation}
    - D_{\text{KL}}(d_{\pi} || d_{\mathcal{D}}) = \min_{x} \log \mathbb{E}_{(s,a) \sim d_{\mathcal{D}}} \Big[  \exp{x(s,a)} \Big] - \mathbb{E}_{(s,a) \sim d_{\pi}} \Big[ x(s,a)  \Big]
\end{equation}
such that now, instead of the discriminator class $h$, we learn the function class $x$, the optimal solution to which is equivalent to the distribution ratio plus a constant
\begin{equation}
    x^{*}(s,a) = \log \frac{d_{\pi}(s,a)}{d_{\mathcal{D}}(s,a)} 
\end{equation}

However, note that both the GANs like objective in equation \ref{eq:GANs} or the DV representation of the KL divergence in equation \ref{eq:KL_DV_Representation} requires access to samples from both $d_{\pi}$ and $d_{\mathcal{D}}$. In our problem setting however, we only have access to batch samples $d_{\mathcal{D}}$. 
To change the dependency on having access to both the samples, we can use the change of variables trick, such that : $    x(s,a) = \nu(s,a) - \mathcal{B}^{\pi} \nu(s,a)$, to write the DV representation of the KL divergence as : 
\begin{equation}
    - D_{\text{KL}}(d_{\pi} || d_{\mathcal{D}}) = \min_{\nu} \log \mathbb{E}_{(s,a) \sim d_{\mathcal{D}}} \Big[  \exp{\nu(s,a) - \mathcal{B}^{\pi} \nu(s,a)} \Big] - \mathbb{E}_{(s,a) \sim d_{\pi}} \Big[ \nu(s,a) - \mathcal{B}^{\pi} \nu(s,a)  \Big]
\end{equation}
where the second expectation can be written as an expectation over initial states, following from DualDICE, such that we have
\begin{equation}
    - D_{\text{KL}}(d_{\pi} || d_{\mathcal{D}}) = \min_{\nu} \quad \log \mathbb{E}_{(s,a) \sim d_{\mathcal{D}}} \Big[  \exp{\nu(s,a) - \mathcal{B}^{\pi} \nu(s,a)} \Big] - (1 - \gamma) \mathbb{E}_{(s,a) \sim \beta_{0}(s,a)} \Big[ \nu(s_0, a_0) \Big]
\end{equation}
By minimizing the above objective w.r.t $\nu$, which requires only samples from the fixed batch data $d_{\mathcal{D}}$ and the starting state distribution. The solution to the optimal density ratio is therefore given by : 
\begin{equation}
    x^{*}(s,a) = \nu^{*}(s,a) - \mathcal{B}^{\pi} \nu^{*}(s,a) = \log \frac{d_{\pi}(s,a)}{d_{\mathcal{D}}(s,a)} = \log \omega^{*}(s,a)
\end{equation}

\textbf{Empirical Likelihood Ratio : }
We can follow \citet{sinha2020experience} to compute the state-action likelihood ratio, where they use a binary a classifier to classify samples between an on-policy and off-policy distribution. The proposed classifier, $\phi$, is trained on the following objective, and takes as input the state-action tuples $(s, a)$ to return a probability score that the state-action distribution is from the target policy.
The objective for $\phi$ can be formulated as
\begin{equation}
    \mathcal{L}_{cls} = \max_{\phi} -\mathbb{E}_{s, a \sim \mathcal{D}}[\log(\phi(s,a))] + \mathbb{E}_{s \sim \mathcal{D}}[\log(\phi(s, \pi(s))]
\end{equation}
where $s, a \sim \mathcal{D}$ are samples from the behaviour policy, and $s, \pi(s)$ are samples from the target policy. The density ratio estimates for a given $s, a \sim \mathcal{D}$ are simply $\omega(s,a) = \sigma(\phi(s, a))$ like in \citet{sinha2020experience}. We then use these $\omega(s,a)$ for density ratio corrections for the target policy.

\end{document}